\definecolor{asdf}{rgb}{0.1333,0.5451,0.1333}
\renewcommand{\subset}{\subseteq}
\newtheorem{theorem}{Theorem}
\newtheorem{definition}{Definition}
\newtheorem{proposition}{Proposition}
\newtheorem{problem}{Problem}
\newtheorem{lemma}{Lemma}
\newcounter{assump}
\newtheorem{assumption}[assump]{Assumption}
\DeclareMathOperator*{\argmax}{arg\,max}
\DeclareMathOperator*{\argmin}{arg\,min}
\DeclareMathOperator*{\var}{var}
\newcommand{\beq}{\begin{equation}}
\newcommand{\eeq}{\end{equation}}
\newcommand{\beqs}{\begin{equation*}}
\newcommand{\eeqs}{\end{equation*}}
\newcommand{\poly}{\text{poly}}
\renewcommand{\AA}{\mathcal{A}}
\newcommand{\OO}{\mathcal{O}}
\newcommand{\BB}{\mathcal{B}}
\newcommand{\R}{\mathbb{R}}
\newcommand{\EE}{\mathcal{E}}
\newcommand{\FF}{\mathcal{F}}
\newcommand{\GG}{\mathcal{G}}
\newcommand{\MM}{\mathcal{M}}
\newcommand{\N}{\mathbb{N}}
\newcommand{\regret}{\textbf{Reg}}
\newcommand{\E}{\mathbb{E}}
\newcommand{\CC}{\mathcal{C}}
\newcommand{\XX}{\mathcal{X}}
\newcommand{\Tau}{\mathcal{T}}
\newcommand{\abs}[1]{\ensuremath{| #1 |}}
\newcommand{\floor}[1]{\lfloor #1 \rfloor}
\newcommand{\ceil}[1]{\lceil #1 \rceil}
\newcommand{\train}{n_{\text{train}}}
\newcommand{\valid}{n_{\text{valid}}}
\newcommand{\tol}{\textsc{Tol}}
\newcommand{\balpha}{\pmb{\alpha}}
\newcommand{\bzeta}{\pmb{\zeta}}
\newcommand{\appr}{\textsc{Approx}}
\newcommand{\comp}{\textsc{Comp}}
\newcommand{\alg}{\textsc{ModBE}}
\renewcommand{\Pr}{\mathbb{P}}
\renewcommand{\P}{\mathcal{P}}
\title{Oracle Inequalities for Model Selection\\in Offline Reinforcement Learning
}
\author{%
  Jonathan N. Lee \\
  Stanford University\\
  \texttt{jnl@stanford.edu} \\
   \And
   George Tucker \\
   Google Research \\
   \texttt{gjt@google.com} \\
   \AND
   Ofir Nachum \\
   Google Research \\
   \texttt{ofirnachum@google.com} \\
   \And
   Bo Dai \\
   Google Research \\
   \texttt{bodai@google.com} \\
   \And
   Emma Brunskill \\
   Stanford University \\
   \texttt{ebrun@cs.stanford.edu} \\
}
\begin{document}

\maketitle

\begin{abstract}
    In offline reinforcement learning (RL), a learner leverages prior logged data to learn a good policy without interacting with the environment. A major challenge in applying such methods in practice is the lack of both theoretically principled and practical tools for model selection and evaluation. To address this, we study the problem of model selection in offline RL with value function approximation. The learner is given a nested sequence of model classes to minimize squared Bellman error and must select among these to achieve a balance between approximation and estimation error of the classes. We propose the first model selection algorithm for offline RL that achieves minimax rate-optimal oracle inequalities up to logarithmic factors.
    The algorithm, \alg{}, takes as input a collection of candidate model classes and a generic base offline RL algorithm. By successively eliminating model classes using a novel one-sided generalization test, \alg{} returns a policy with regret scaling with the complexity of the \textit{minimally complete} model class. In addition to its theoretical guarantees, it is conceptually simple and computationally efficient, amounting to solving a series of square loss regression problems and then comparing relative square loss between classes. We conclude with several numerical simulations showing it is capable of reliably selecting a good model class.\footnote{Supplementary material is available at: \url{https://sites.google.com/stanford.edu/offline-model-selection}.}
\end{abstract}

	\section{Introduction}

	Model selection is a fundamental task in supervised learning and statistical learning theory. Given a sequence of model classes, the goal is to optimally balance the approximation error (bias) and estimation error (variance) offered by the potential model class choices, even though the best model class is not known in advance. Model selection algorithms are extremely well-studied in learning theory \citep{massart2007concentration, lugosi1999adaptive,bartlett2002model,bartlett2008fast}, and methods like cross-validation have become essential steps for practitioners. 
	
	In recent years, interest has turned to model selection in decision-making problems like bandits and reinforcement learning. A number of theoretical works have studied the \textit{online} setting \citep{agarwal2017corralling,foster2019model,pacchiano2020model,lee2021online,modi2020sample,chatterji2020osom,muthukumar2021universal}. Similar to the bias-variance balance in supervised learning, these algorithms typically aim to select the model class with smallest statistical complexity that contains the true model.
	Despite these recent efforts, the current understanding of model selection in \emph{offline} (or batch) reinforcement learning (RL) is comparatively nascent. Offline RL is a  paradigm where the learner leverages prior datasets of logged interactions with the environment~\citep{lange2012batch,levine2020offline}. The learner is tasked with returning a good policy without further environment interaction.
	As has been acknowledged in several recent papers \citep{xie2021batch,mandlekar2021matters,kumar2021workflow},  one of the major challenges preventing widespread deployment of offline RL algorithms in the real world is the lack of algorithmic tools for model selection, evaluation, and hyperparameter tuning. 
	In experimental settings, researchers typically evaluate candidate learned models by using online rollouts of the policies after learning with offline data. However, such approaches are not feasible in many real world settings where the entire process of producing a single policy must be conducted only on the offline dataset, due to complications such as logistics, safety, or performance requirements. %
	
	In recent years, this problem has been recognized as a major deficiency in the field and a number of efforts have been made to remedy it. On the empirical side, several researchers have proposed workflows and general heuristics specifically addressing this problem \citep{kumar2021workflow,tang2021model,paine2020hyperparameter}. However, all have noted that solutions designed to evaluate or select models typically have their own  hyperparameters and modeling choices. Consider, for example, applying off-the-shelf offline policy evaluation (OPE) methods \citep{precup2000eligibility,thomas2016data}. These typically require some function approximation of their own. Thus, rather than solving the problem, naively using OPE just shifts the burden of model selection to the OPE estimator.  Similarly, recent efforts to solve model selection in \textit{online} bandits and RL are inapplicable as they almost universally require interaction with the environment \citep{foster2019model,pacchiano2020model, lee2021online}.
	The solution to the \textit{offline} problem seems to require new ideas.

	On the theoretical side, there is also significant motivation for devising model selection algorithms as there is growing evidence suggesting that strong conditions on the function class\footnote{That is, conditions sufficient for supervised learning, like realizability, tend not to be sufficient on their own for offline RL.} are necessary to achieve non-trivial guarantees in offline RL in the worst case \citep{foster2021offline,zanette2021exponential,wang2020statistical}. Perhaps the most widely used and recognized condition is completeness \citep{munos2008finite,antos2008learning,chen2019information} which essentially says that $\Tau f \in \FF$ for any $f \in \FF$, where $\Tau$ is the Bellman operator and $\FF$ is the model class.\footnote{$\FF$ is a model class meant to estimate $Q$-functions. It consists of functions mapping state-action pairs to value predictions. The Bellman operator applied to $f \in \FF$ pointwise is defined as $\Tau f(x, a) = r(x, a) + \max_{a'} \E_{x'  | x, a} f(x', a')$.} Unsurprisingly, completeness plays an important role in the proofs of many value-based offline RL algorithms since sample efficient results are provably impossible without it (in the absence of additional assumptions -- see \cite{xie2021batch,zhang2021towards}). Despite the growing realization of the importance of these conditions, there seems to be comparatively little work addressing the problems of identifying complete model classes or certifying sufficient conditions for sample efficient offline RL. 
	
		\cite{lee2021model} considered the problem of model selection in the offline setting with the intent of addressing some of the aforementioned issues. It was shown that full model selection (competitive with an oracle that has knowledge of the best model class) is  impossible in general in offline reinforcement learning. They proposed several relaxations to achieve weaker oracle inequalities, but these were limited to contextual bandits with linear model classes where there is no issue of completeness. The question of whether any similar results are possible for full offline reinforcement learning with general function classes has remained open.

	\subsection{Contributions}
	
	\paragraph{Theoretical Guarantees} In this paper, we give the first rate-optimal model selection algorithm for offline RL with value function approximation. We begin by summarizing known results for a single model class using value-based methods.
	For any individual model class $\FF$ that satisfies completeness and an offline dataset of $n$ samples with sufficient coverage, the gold-standard regret bound is $\tilde \OO\left( \sqrt{  \comp(\FF) /n }    \right) $\footnote{For clarity, $\tilde \OO$ omits dependence on certain parameters such as the horizon $H$, distribution mismatch factors, number of classes $M$, failure probability $\delta$, log factors, and constants.} where $\comp(\FF)$ denotes the statistical complexity of $\FF$. This is achieved, for example, by Fitted Q-Iteration (FQI) \citep{chen2019information}. Clearly, one would like $\comp(\FF)$ be as small as possible to achieve a tighter bound.
	
	We consider the model selection problem where we are given an offline dataset of $n$ samples and a nested sequence of $M$ model classes $\FF_1 \subset \ldots \subset \FF_M$. We investigate the following question: \textit{Can we achieve a model selection guarantee for offline RL with regret scaling with the complexity of the smallest complete model class?  } 
	
	We present a novel  and conceptually simple algorithm, \alg{}, that achieves regret scaling with the complexity of the smallest class satisfying completeness without knowledge of this class \textit{a priori}.
	\begin{theorem}\label{thm::main-informal}(informal version of Corollary~\ref{cor::fqi})
	Given an offline dataset of $n$ samples and nested model classes $\FF_1 \subset \ldots \subset \FF_M$, \alg{} outputs $\hat \pi$ such that $\regret(\hat \pi) =  \tilde \OO\left(\sqrt{ \comp(\FF_{k_*}) / n } \right)$ where $k_* = \min \{ k \in [M] \ : \ \FF_k \text{ is complete } \}$ .
	\end{theorem}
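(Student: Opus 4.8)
The plan is to reduce the model-selection guarantee to two ingredients: the single-class performance of the base learner, and a uniform statistical test that certifies approximate completeness. First I would recall the benchmark for a \emph{fixed} class: running $\fqi$ (or the generic base algorithm) on the $n$ samples yields a policy whose regret decomposes as an estimation term $\tilde{\OO}(\sqrt{\comp(\FF)/n})$ plus a term controlled by the inherent Bellman (completeness) error of $\FF$, i.e.\ how far $\Tau f$ lies from $\FF$ for the relevant iterates $f$. When $\FF$ is complete this second term vanishes and we recover the gold-standard bound. The whole difficulty is therefore to (i) never select a class whose completeness error is large relative to $\sqrt{\comp(\FF_{k_*})/n}$, while (ii) never paying for a class larger than $\FF_{k_*}$.

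The central object is the square-loss-difference statistic $\squareloss$, which for a nested pair $\FF_k \subset \FF_j$ measures on the data how much better $\FF_j$ fits the empirical Bellman backups of the iterates than $\FF_k$ does. I would show that in population this statistic equals the excess approximation error of $\FF_k$ relative to $\FF_j$ for regressing those backup targets, and that this excess is \emph{exactly zero} whenever $\FF_k$ is complete: completeness forces $\Tau f \in \FF_k$ for $f \in \FF_k \subset \FF_j$, so the larger class offers no approximation advantage. Since $\FF_{k_*}$ is complete and contains every $\FF_k$ with $k < k_*$, it also captures the backups of those smaller classes, so comparing any under-complete $\FF_k$ against the reference $\FF_{k_*}$ yields a strictly positive population statistic whenever $\FF_k$'s completeness error is non-negligible. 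This is what makes $\FF_{k_*}$ the correct reference and pins the governing complexity at $\comp(\FF_{k_*})$.

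The crux is a one-sided, \emph{fast-rate} concentration lemma: uniformly over the finitely many class pairs and over all $\fqi$ iterations, the empirical $\squareloss$ deviates from its population value by at most a threshold $\tau$ of order $\tilde{\OO}(\comp(\FF_j)/n)$, rather than the naive $\sqrt{\comp(\FF_j)/n}$. The fast rate comes from the quadratic structure of the square loss together with a localization/Bernstein argument, since the variance of the loss difference of nested classes is itself controlled by its mean; the \emph{one-sided} form suffices because we only need that a truly complete class is never rejected (an upper-deviation bound) and that any surviving class has small completeness error (a lower-deviation bound). Calibrating the test to eliminate $\FF_k$ only when its empirical statistic exceeds $\tau$, a complete class has population statistic zero and so survives with high probability, while any survivor $\FF_{\hat k}$ has certified population completeness error at most $2\tau = \tilde{\OO}(\comp(\FF_{k_*})/n)$.

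Finally I would combine the pieces: because $\FF_{k_*}$ survives, the smallest surviving index satisfies $\hat k \le k_*$, hence $\comp(\FF_{\hat k}) \le \comp(\FF_{k_*})$; and because $\FF_{\hat k}$ passed the test against $\FF_{k_*}$, its completeness error is at most $\tilde{\OO}(\comp(\FF_{k_*})/n)$ in square-loss units, which after conversion through the standard distribution-mismatch (concentrability) and horizon factors contributes at most $\tilde{\OO}(\sqrt{\comp(\FF_{k_*})/n})$ to the regret. Plugging both into the single-class bound of the first step gives $\regret(\hat\pi) = \tilde{\OO}(\sqrt{\comp(\FF_{k_*})/n})$; note that the fast $\comp/n$ rate in the test is precisely what lets a squared-error threshold of order $\comp/n$ become a $\sqrt{\comp/n}$ regret, whereas a slow threshold would degrade this to a fourth-root rate. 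I expect the main obstacle to be establishing the fast-rate one-sided concentration while respecting the iterative, data-reusing nature of $\fqi$: the backup targets are themselves random functions of earlier iterates, so the deviation bound must hold uniformly over a data-dependent family, and the localization must be carried out under distribution shift with concentrability coefficients tracked carefully so that the square-loss threshold translates into the stated regret rate without inflating the complexity beyond $\comp(\FF_{k_*})$.
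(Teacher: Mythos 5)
Your proposal is correct and follows essentially the same route as the paper: a one-sided, fast-rate ($\tilde\OO(\comp/n)$ rather than $\sqrt{\comp/n}$) square-loss-difference test on the \emph{shared} regression problem of fitting the Bellman backups of the current iterates, combined with the two-part case analysis (a complete class is never rejected; any class that survives the test against $\FF_{k_*}$ has certified small Bellman residual of order $\comp(\FF_{k_*})/n$), and finally the performance difference lemma with the concentrability coefficient to convert squared Bellman error into the $\sqrt{\comp(\FF_{k_*})/n}$ regret rate. The data-reuse obstacle you flag at the end is resolved in the paper more simply than by uniform bounds over data-dependent families: it splits the data into training and validation sets and exploits that the FQI iterate $f_{h+1}$ is independent of $D_h$, so all concentration steps reduce to Bernstein plus union bounds over the finite classes.
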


	A guarantee of this nature is typically known as an \textit{oracle inequality} since an oracle with knowledge of the "best" model class ahead of time could simply choose it. We remark that this oracle inequality is rate-optimal in $\comp(\FF_{k_*})$ and $n$, showing that we do not have to sacrifice efficiency for adaptivity. This is in contrast to some other works in model selection for decision-making where this unfortunate efficiency-adaptivity trade-off has been observed \citep{foster2019model,pacchiano2020model,xie2021batch}. In Appendix~\ref{app::hardness}, we discuss how the nestedness condition is necessary.
	
 We also provide a robustness result for model selection (Theorem~\ref{thm::robust}): if no models are Bellman complete (that is, $k_*$ does not exist), \alg{} obtains  %
 $\regret(\hat \pi) \leq\tilde{\OO}\left( \min_{k \in [M]}   \sqrt{\xi_k + \comp(\FF_k) /n } \right)$ where $\xi_k$ is a measure of the \textit{global} completeness error of $\FF_k$.\footnote{See Section~\ref{sec::problem} for a precise definition.} Our results show that, while some model selection problems remain elusive without further assumptions, strong rate-optimal oracle inequalities are still possible under standard offline RL assumptions even without knowledge of the best classes in advance.

	\paragraph{Technical Highlights.} The key to achieving the near optimal regret rate is to achieve the near optimal excess risk rate of the squared Bellman error (which is of order $\tilde{\OO}\left({ \comp(\FF_{k_*} ) / n } \right)$). To do this, \alg{} iteratively compares the relative effectiveness of two candidate model classes  by employing a hypothesis test that compares the difference of their estimated risks to a \textit{one-sided} generalization bound. The fact that the test leverages only the one-sided generalization bound is crucial: %
	using easier two-sided bounds (e.g. from uniform deviation bounds on risk estimators) leads to a squared Bellman error rate of $\tilde \OO\left( \sqrt{ \comp( \FF_{k_*}) / n } \right) $, which translates to a slow $\tilde \OO((\comp( \FF_{k_*}) / n)^{1/4})$ regret rate. %
	Instead the 
	one-sided generalization error 
	allows us to ultimately obtain the optimal $\tilde \OO\left( \sqrt{ \comp( \FF_{k_*}) / n } \right) $ regret rate. 
	
	\paragraph{Practical Results.} In practice, \alg{} can be instantiated with \emph{any} base offline RL algorithm that attempts to minimize squared Bellman error, including but not limited to FQI. \alg{} is also computationally efficient, requiring $\OO(H k_* M)$ calls to an empirical squared loss minimization oracle and $\OO(k_*)$ calls to the base offline RL algorithm. In Section~\ref{sec:experiments}, we demonstrate the effectiveness of \alg{} on several simulated experimental domains. We use neural network-based offline RL algorithms as baselines and show that  \alg{} is able to reliably select a good model class.

	\subsection{Additional Closely Related Work}

	Several prior works have specifically set out to address the model selection problem from a theoretical perspective, as we do here. \cite{lee2021model} formalized the end-to-end model selection problem for offline RL where, given nested model classes, the goal is to produce a regret bound competitive with an oracle that has knowledge of the optimal model class. 
	Their positive results, however, were limited only to linear model classes for contextual bandits; ours apply to sequential settings.  An earlier work by \cite{farahmand2011model} had partially addressed our problem but made several restrictive assumptions such as a known generalization bound that underestimates the approximation error (which is generally unknown); our algorithm only relies on commonly known quantities. Another notable work is the BVFT algorithm of \cite{xie2021batch}. While initially designed for general policy optimization, BVFT can be applied to model selection \citep{zhang2021towards} but it incurs a slow $1/n^{1/4}$ regret rate in theory (compared to our $1/n^{1/2}$) and requires a stronger data coverage assumption. One advantage of BVFT is that it can be used more generally to tune hyperparameters beyond the selection of model classes. However, the specialization of our algorithm to model selection enables the stronger guarantees. Thus,  we view the two algorithms as complementary.  \cite{jiang2015abstraction} studied abstraction selection between nested state abstractions of increasing granularity; however, this eschews problems specific to value function approximation setting. \cite{hallak2013model} studied a similar abstraction problem, giving only asymptotic guarantees.
	In Section~\ref{sec::limitations}, we will discuss in more detail why several seemingly natural approaches to model selection do not produce satisfactory results.

	\section{Preliminaries}
	
	\paragraph{Notation} 
	For any $n \in \N$, we let $[n] = \{1, \ldots, n\}$. The notation $a \lesssim b$ implies that $a \leq C b$ for some absolute constant $C > 0$. We will use $C, C_1, C_2 \ldots > 0$ to denote absolute constants (independent of problem parameters). For a set $A$, $\Delta(A)$ denotes the set of distributions over $A$.

	We consider the finite-horizon Markov decision process $\MM (\XX, \AA, H, \Pr, r, \rho)$ where $\XX$ is the (potentially infinite) state-space, $\AA$ is the action space, $H$ is the length of the horizon, $\Pr: \XX\times \AA \to \Delta(\XX)$ is the transition kernel, $r: \XX \times \AA \to [0, 1]$ is a deterministic reward function, and $\rho \in \Delta(\XX)$ is an initial state distribution. A learner interacts with the MDP by proposing an $H$-step policy $\pi = ( \pi_{h})_{h \in [H]}$ where each $\pi_h: x \mapsto \pi_h(\cdot | x)$ maps $x \in \XX$ to a distribution over actions in $\Delta(\AA)$.\footnote{With some abuse of notation, for deterministic $\pi_h$ we write $a = \pi_h(x)$ to denote its highest-probability action.} At step $h = 1$, $x_1$ is drawn according to $\rho$. Then at step $h \in [H]$, the agent observes $x_h$, draws $a_h$ according to $\pi_h(\cdot | x_h)$ observes reward $r(x_h, a_h)$ and the MDP transitions to $x_{h + 1}$ according to $\Pr(\cdot | x_h, a_h)$. For a policy $\pi$, we let $P^\pi_h(x, a)$ and $P^\pi_h(x)$ denote the marginal state-action and state densities  of $\pi$ respectively at step $h$.
	
	Following standard definitions, we let $V_h^\pi: \XX \to \R$ denote the value function of $\pi$ at step $h \in [H]$ which is given by $
	V_h^\pi(x) = \E_\pi \left[  \sum_{s \geq h}  r(x_s, a_s)   \ \vline\ x_s = x  \right].$
	Here, the expectation $\E_\pi$ is over trajectories under $\pi$ with $a_h \sim \pi_h(\cdot | x_h)$. Similarly, the action-value function $Q^\pi_h:\XX\times \AA \to\R$ is defined as
	$ %
	Q_h^\pi(x, a) = \E_\pi \left[  \sum_{s \geq h}  r(x_s, a_s)   \ \vline\ x_s = x, a_s = a  \right].
	$%
	The optimal policy (which exists under mild conditions when $H$ is finite~\citep{sutton2018reinforcement}) is denoted by $\pi^*$ and this maximizes $V^{\pi}_h(x)$ for all $x$ and $h$. The average value of a policy $\pi$ is given by $v(\pi):= \E_{x \sim \rho} \left[V_1^\pi(x) \right]$. Finally, we define the Bellman operators:
	$T^\pi_h Q (x, a) = r(x, a) + \E_{x' \sim P(\cdot | x, a), a' \sim \pi_{h + 1}(\cdot | x')  }  \left[ Q(x',a')  \right]$ and %
	$T^*_h Q (x, a)  = r(x, a) + \E_{x' \sim P(\cdot | x, a) }  \left[\max_{a' \in \AA} Q(x',a')  \right].$
	Note that the values of $v(\pi)$,  $V^\pi_h$, and $Q^\pi_{h}$ are always in $[0, H]$ due to the constraint on $r$. For convenience, we denote the $Q$ function of the optimal policy as $Q^* = Q^{\pi^*}$. %

	We consider the setting where the learner is provided with a model class $\FF \subset (\XX \times \AA \to [0, H])$ to estimate action value functions at each step. For exposition, we assume this model class is \emph{finite}; however, it is straightforward to extend to infinite settings with appropriate complexity measures. For simplicity, we will assume that the learner uses the same $\FF$ for each timestep $h \in [H]$ but this is trivially extended. We assume that $0 \in \FF$ and we  always write $f_{H + 1} = 0$. For any function  $f \in \XX\times \AA \to [0, H]$, we define the argmax policy
	$\pi_{f} (x) = \argmax_{a \in \AA} f(x, a)$.
	We will also write $f(x) = \max_{a \in \AA} f(x, a)$.

	\subsection{Offline Reinforcement Learning}
	
	The distinguishing feature of the offline (or batch) RL is that we assume that the learner is provided with a dataset $D$ of example transitions in the MDP. The learner itself is not permitted to interact in the environment. The objective is to produce a good policy $\hat \pi$ using only data from the dataset $D$.
	
	Formally, the dataset decomposes as $D = (D_h)_{h \in [H]}$ for each timestep where $D_h = \{ ( x, a, r, x')\}$ consists of tuples of transitions and incurred rewards. We assume $D_h$ contains $n$ datapoints that are sampled i.i.d from a fixed marginal distribution $\mu_h \in \Delta(\XX \times \AA)$ and the data are independent across timesteps $h$. That is, there are $Hn$ datapoints total. For example, the data could be generated from $h$-step state-action distribution of a behavior policy $\pi^b$ so that $\mu_h(x,a) = P_h^{\pi^b}(x, a) =   \pi^b_h(a | x ) P_h^{\pi_b}(x)$.
	
	For $f, g \in (\XX \times \AA \to \R)$, we use the notation $\| f- g \|_{\mu_h}^2 = \E_{\mu_h}\left[ ( f(x, a) - g(x, a) )^2 \right]$. The average squared Bellman error under $\mu$ at state $h$ with respect to $f, g$ is $\| f- T^*_h g \|_{\mu_h}^2$.
	Following classical conventions \citep{munos2008finite,duan2021risk}, we make a concentrability assumption that the data distribution $\mu$ has good coverage over the MDP for all reachable state-actions.
	\begin{assumption}
		There exists a constant $\CC(\mu) > 0$ such that
		$
		\sup_{h, x, a, \pi}  { P^\pi_h(x, a) \over \mu_h(x, a)} \leq \CC(\mu)
		$.
	\end{assumption}
	
	Concentrability is a structural assumption and it is widely regarded as perhaps the most standard assumption when studying offline RL problems \citep{foster2021offline}. We remark that recent theoretical works have striven to weaken this condition via pessimistic methods  \citep{ liu2020provably,jin2021pessimism,xie2021bellman,uehara2021pessimistic}.
	However, Theorem 2 of \cite{lee2021model} shows that model selection bounds of this type are not possible even in contextual bandits and even though the single model class bounds are possible. As a result, we will not consider this refinement in the present paper.
	
	In this offline setting, the learner aims to use $D$ and $\FF$ to produce a policy $\hat \pi$ so as to minimize the regret, which measures the difference in average value between the optimal policy $\pi^*$ and $\hat \pi$:
	\begin{align}
	\regret(\hat \pi) := v(\pi^*) - v(\hat \pi).
	\end{align}
    The following variant of the performance difference lemma will be used throughout the paper. It shows that it is sufficient to control the squared Bellman error to bound regret.
	
	\begin{lemma}[\cite{duan2021risk}]\label{lem::perf-diff}
	    For any $f_1, \ldots, f_H$, let $\pi := ( \pi_{f_h})_{h \in [H]}$. Then, 
	       $
	       \regret(\pi)  \leq 2 \sqrt{ \CC(\mu) \sum_{h \in [H]} \| f_h - T^*_h f_{h + 1} \|^2_{\mu_h}}.
	       $
	\end{lemma}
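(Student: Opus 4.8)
The plan is to prove this as a variant of the performance difference lemma: express the regret as a telescoping sum of the one-step Bellman residuals $\delta_h := f_h - T^*_h f_{h+1}$ measured under the state-action occupancies $P^\pi_h$ and $P^{\pi^*}_h$, and then pass from these occupancies to the data distribution $\mu_h$ using the concentrability Assumption~1. The crucial structural observation is that, because $\pi_h = \pi_{f_h}$ is greedy with respect to $f_h$, a one-step look-ahead along \emph{either} policy turns the chosen-action value $f_h(x_h,a_h)$ and the bootstrapped target $r(x_h,a_h) + \E_{x_{h+1}}[\max_{a'} f_{h+1}(x_{h+1},a')]$ into exactly $T^*_h f_{h+1}$. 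This is what makes the \emph{optimal} Bellman operator, and hence the residual $\delta_h$ appearing in the statement, the right object rather than a policy-specific operator.

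First I would establish two telescoping identities. Unrolling $V^\pi$ along trajectories of the greedy policy, using $f_h(x_h) = f_h(x_h, \pi_{f_h}(x_h))$ and $f_{H+1}=0$, gives the \emph{exact} identity
\[
\E_{x_1 \sim \rho}[f_1(x_1)] - v(\pi) = \sum_{h \in [H]} \E_{(x_h,a_h) \sim P^\pi_h}\big[\delta_h(x_h,a_h)\big].
\]
For the optimal side, unrolling along $\pi^*$ and using the greedy inequality $f_h(x) = \max_a f_h(x,a) \ge f_h(x,\pi^*_h(x))$ at each step yields
\[
v(\pi^*) - \E_{x_1 \sim \rho}[f_1(x_1)] \le -\sum_{h \in [H]} \E_{(x_h,a_h) \sim P^{\pi^*}_h}\big[\delta_h(x_h,a_h)\big].
\]
Adding the two and cancelling the $\E_\rho[f_1]$ terms bounds the regret by a difference of residual expectations under the two occupancies,
\[
\regret(\pi) \le \sum_{h \in [H]} \Big( \E_{P^\pi_h}[\delta_h] - \E_{P^{\pi^*}_h}[\delta_h] \Big) \le \sum_{h \in [H]} \Big( \E_{P^\pi_h}[|\delta_h|] + \E_{P^{\pi^*}_h}[|\delta_h|] \Big).
\]

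The remaining step is the change of measure. For either policy $\pi' \in \{\pi, \pi^*\}$, writing $\E_{P^{\pi'}_h}[\delta_h^2] = \E_{\mu_h}[(P^{\pi'}_h/\mu_h)\,\delta_h^2]$ and invoking Assumption~1, i.e.\ $P^{\pi'}_h/\mu_h \le \CC(\mu)$ pointwise, gives $\E_{P^{\pi'}_h}[\delta_h^2] \le \CC(\mu)\,\|\delta_h\|^2_{\mu_h}$; Jensen's inequality then yields $\E_{P^{\pi'}_h}[|\delta_h|] \le \sqrt{\CC(\mu)}\,\|\delta_h\|_{\mu_h}$. Substituting this into the bound above and applying Cauchy--Schwarz across the $H$ timesteps to collect the per-step squared Bellman errors under a single square root produces a bound of the stated form $\regret(\pi) \lesssim \sqrt{\CC(\mu)\sum_{h\in[H]} \|f_h - T^*_h f_{h+1}\|^2_{\mu_h}}$.

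I expect the main obstacle to be this last aggregation over the horizon rather than the telescoping, which is routine once the greedy identity $T^\pi_h f_{h+1} = T^*_h f_{h+1}$ is in hand. In particular, keeping the \emph{sum} of squared Bellman errors inside one root while simultaneously accounting for \emph{both} occupancy measures (which produces the factor of two) is delicate, and the precise constant and any horizon factor depend on the concentrability normalization convention; I would cross-check this final step against the convention in \cite{duan2021risk} to match the stated constant exactly.
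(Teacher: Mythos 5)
Your argument is essentially the standard proof of this type of performance-difference bound; note that the paper itself contains no proof of Lemma~\ref{lem::perf-diff} (it is imported from \cite{duan2021risk}), so the comparison is against that standard argument. All of your substantive steps are correct: the exact telescoping identity along the greedy policy (valid because $\pi_{h}$ greedy w.r.t.\ $f_{h}$ makes the policy-specific and optimal Bellman backups coincide on $f_{h+1}$), the one-sided telescoping inequality along $\pi^*$ using $f_h(x,\pi^*_h(x)) \le \max_a f_h(x,a)$, and the change of measure $\E_{P^{\pi'}_h}[|\delta_h|] \le \sqrt{\CC(\mu)}\,\|\delta_h\|_{\mu_h}$ via Assumption~1 and Jensen, where $\delta_h := f_h - T^*_h f_{h+1}$. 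What this validly establishes is $\regret(\pi) \le 2\sqrt{\CC(\mu)}\,\sum_{h \in [H]} \|\delta_h\|_{\mu_h}$.

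The genuine issue is your final step, and your instinct to flag the horizon factor was exactly right: Cauchy--Schwarz turns $\sum_h \|\delta_h\|_{\mu_h}$ into $\sqrt{H}\cdot\bigl(\sum_h \|\delta_h\|^2_{\mu_h}\bigr)^{1/2}$, so what you actually obtain is $\regret(\pi) \le 2\sqrt{\CC(\mu)\, H \sum_h \|\delta_h\|^2_{\mu_h}}$. The factor $H$ is not an absolute constant, so it cannot be hidden in ``$\lesssim$'', and no sharper argument can remove it: the inequality as printed in the paper (without the $H$) is false. Concretely, take a single state $x$, two actions with $r(x,a_1)=1$, $r(x,a_2)=0$, horizon $H$, $\mu_h$ uniform over the two state-action pairs (so $\CC(\mu)=2$), and $f_h(x,a_1)=\tfrac12-\eta$, $f_h(x,a_2)=\tfrac12$ for small $\eta>0$. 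The greedy policy always plays $a_2$, so $\regret(\pi)=H$, while $\delta_h(x,a_2)=0$ for $h<H$ and $|\delta_h(x,a_1)|\le 1+\eta$ give $\sum_h \|\delta_h\|^2_{\mu_h} \le \tfrac{H}{2} + O(\eta H)$; the printed bound would then assert $H \le 2\sqrt{H} + O(\eta)$, which fails for $H \ge 5$. So the correct statement is the one your proof delivers, $\regret(\pi) \le 2\sqrt{\CC(\mu)}\sum_h \|\delta_h\|_{\mu_h}$, equivalently $2\sqrt{\CC(\mu) H \sum_h \|\delta_h\|^2_{\mu_h}}$; the paper's restatement drops this horizon factor, and its downstream invocations (e.g., in the proofs of Theorems~\ref{thm::main} and~\ref{thm::robust}, which already bound the sum by $H$ times a per-step maximum) would inherit an extra $\sqrt{H}$ once corrected.
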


	\section{Model Selection Objectives}
	
	In this section, we state our primary model selection objectives and discuss their significance as well as challenges associated with solving them.

	\subsection{The Model Selection Problem}
	\label{sec::problem}
	
		For a finite function class $\FF$ that we consider here, the gold-standard regret guarantee for offline algorithms with value function approximation is \begin{align}\label{eq::single}
		\textstyle{
		\regret(\hat \pi) = \tilde{\OO}\left(  \sqrt{\CC(\mu)\appr(\FF) } + \sqrt{ \CC(\mu)\log|\FF| \over n } \right),}
		\end{align}
		 where $\appr(\FF) := \max_{h \in [H], f' \in \FF} \min_{f \in \FF}  \| f - T^*_h f' \|_{\mu}^2$ is the completeness error of the class $\FF$ \citep{chen2019information}. This is achieved, for example, by the Fitted Q-Iteration (FQI) algorithm. If we were using infinite classes, we would replace $\log |\FF|$ with another suitable notion of complexity such as pseudodimension. Such bounds naturally exhibit a trade-off: larger function classes may have a better chance of keeping  $\appr(\FF)$ close to zero\footnote{In contrast to realizability, this intuition of monotonicity of $\appr(\FF)$ is not universally true for completeness. Adding functions to the class $\FF$  might actually \textit{increase} $\appr(\FF)$. However, it remains a useful heuristic. In Appendix~\ref{app::hardness}, we discuss how model selection in this setting is not possible without  nestedness.} but require more data to minimize the estimation error. Small classes face the opposite problem.  

\begin{definition}
A class $\FF$ is complete if 
$\appr(\FF) :=  \max_{h \in [H], f' \in \FF} \min_{f \in \FF}  \| f - T^*_h f' \|_{\mu}^2 = 0.$
\end{definition}

The objective of model selection is to achieve refined regret bounds that balance approximation error and estimation error. To this end, we assume that the learner is presented with not just a single model class $\FF$, but rather a nested sequence of $M$ classes $\FF_1 \subset \ldots \subset \FF_M$. Solving a problem with nested model classes is  common practice in both supervised learning and offline RL. For example, one often starts with an extremely large class $\FF$ and then considers restrictions of $\FF$ to an increasing sequence $\FF_1 \subset \ldots \subset \FF_M = \FF$. In a linear setting, this could correspond to trying to find a subset of candidate features that are sufficient to solve the problem.

Since the approximation error is typically unknown \textit{a priori}, we aim to design an algorithm capable of selecting a good class in a data-dependent manner. In particular, we would like to achieve \textit{oracle inequalities} reflecting that we can compete with the performance of an oracle that has this knowledge in advance. 

Our primary objective is to compete with the \textit{minimally complete} model class. 

\begin{problem}\label{prob::min-complete}
Let $k_* = \min \{ k \in [M]   : \ \FF_k \text{ is complete}\}$. Find $\hat \pi$ with $\regret(\hat \pi)=\tilde{\OO}( \sqrt{ \CC(\mu)\log(|\FF_{k_*}|)/ n })$.
\end{problem}

Here, $\FF_{k_*}$ is the smallest class that satisfies completeness on the data distribution. Such oracle inequalities are common in model selection for online bandits and RL \citep{foster2019model} -- albeit they are generally not rate-optimal in that literature. In particular, Problem~\ref{prob::min-complete} states the regret bound should %
achieve the same dependence on $\log |\FF_{k_*}|$ and $n$, as would an optimal offline algorithm using a single class with $k=k_*$.
In other words, we do not tolerate any worse dependence on either quantity such as $\tilde \OO(1/n^{1/4})$ rates and other lower order terms.

We are also interested in a \textit{robustness}  when $k_*$ may not exist, i.e. all $\FF_k$ have some approximation error. 

\begin{problem}\label{prob::robust}
Define the global completeness error as $\xi_k := \max_{h \in [H], f' \in \FF_M} \min_{f \in \FF_k} \| f - T_h^* f' \|_{\mu_h}^2$. Find $\hat \pi$ so that  $\regret(\hat \pi) =   \tilde{\OO}\left(  \min_{k \in [M]} \left\{ \sqrt{\CC(\mu) \xi_k} + \sqrt{ \CC(\mu)\log(|\FF_k|)/ n } \right\}\right)$
\end{problem}

Note that $\xi_k \geq \appr(\FF_k)$ by definition. For the estimation error, however, the guarantee remains rate-optimal. We remark that a solution to one of the above problems does not immediately imply a solution to the other. For example, a class $\FF_k$ may be complete, but $\xi_k$ can still be large. Perhaps surprisingly, our proposed algorithm will be able to handle both problems \emph{simultaneously} without knowledge of whether $k_*$ exists, thus achieving the $\min$ of both oracle inequalities.

	\subsection{Limitations of Prior Approaches}
	\label{sec::limitations}
	
We now  review some of the core challenges involved in solving the above problems. There are a number of seemingly natural approaches to model selection in RL that are surprisingly unable to produce satisfactory results, at least off-the-shelf.

	\paragraph{Adaptive offline policy evaluation} The most natural approach, to which we have alluded in the introduction, is to
	first compute $\hat \pi_k$ with a base algorithm using function class $\FF_k$, for each $k \in [M]$. Then, one can estimate $v(\hat \pi_k)$ using an off-the-shelf offline policy evaluation approach such as fitted $Q$-evaluation \citep{munos2008finite,duan2020minimax}, DICE methods \citep{nachum2019dualdice,dai2020coindice,zhan2022offline}, marginalized importance estimators \citep{xie2019towards}, or doubly robust estimators \citep{jiang2016doubly,thomas2016data}. Then one simply picks the $\hat \pi_k$ with the best estimated value. The main drawback of this approach is that nearly all of the above methods require selecting a model class to perform the estimation,\footnote{In the case of marginalized importance sampling, the guarantee is not strong enough to compete with the oracle.} and it is unclear how to balance the estimation and approximation error optimally to compete with the oracle. 
	One possible solution is to employ the adaptive estimator of \cite{su2020adaptive}, which takes as inputs a sequence of offline estimators and known upper bounds on their deviations and returns an estimator that competes with the best one.  
	This is precisely the approach taken by \cite{lee2021model} for linear contextual bandits. 
	However, for general function classes in RL, there is no obvious way to compute the analogous deviation bounds, which oftentimes depend on the unknown quantity $\CC(\mu)$. Since these bounds are required by the adaptive estimator as inputs, we are yet again left with unknown hyperparameters to tune.

	\paragraph{Bellman error estimators} 
	Recall we are focusing on base offline RL algorithms that attempt to minimize the squared Bellman error of objective. Therefore, one might ask whether it is possible to estimate the Bellman errors (e.g. with the validation dataset) and compare the model classes using the Bellman error as a proxy. Consider, for example, FQI which iteratively minimizes the squared Bellman error:
	\begin{align*}
	    \hat f_{h} = \argmin_{f \in \FF_k} \hat \E_{D_h} \left[ \left(  f(x, a) - r - \max_{a'} \hat f_{h + 1}(x', a')\right)^2  \right],
	\end{align*}
	where we use $\hat \E_{D_h}$ to denote the empirical mean calculated with samples from the dataset $D_h$. Presumably, we could simply choose the model class $\FF_k$ that has the smallest cumulative squared error.
	The main issue with this approach is the classic double-sampling problem \citep{baird1995residual,duan2021risk}:  the standard estimator of the Bellman error is biased, as a result of using an empirical version of the Bellman operator $T^*$. By selecting based on this error function alone, we will end up favoring model classes that also induce low variance of the \textit{regression targets}, given by $r +  \hat f_{h + 1}(x')$ at step $h$. This is because the expectation is given by:
	\begin{align*}
	 \E_{\mu_h} \left[ \left(  \hat f_h(x, a) - r - \hat f_{h + 1}(x')\right)^2  \right] & = \| \hat f_h - T^* \hat f_{h + 1}  \|_{\mu_h}^2 + \E_{\mu_h} \left[  \var_{x' \sim \P(\cdot | x, a)} \left(  \hat f_{h + 1}(x') \right) \right].
	\end{align*}
	In reality, we want to choose a class $\FF_k$ to minimize only the first term on the right-hand side, summed over $h\in[H]$, following Lemma~\ref{lem::perf-diff}. However, the second term is generally unknown. One could  assume there is a sufficiently powerful class $\GG$ such that $T^*f  \in \GG$ for all $f \in \FF$ \citep{chang2022learning}. But there remains a question of how to select the class $\GG$ to trade off approximation error and estimation error, creating another unsolved model selection problem.
	
	In the same vein, another approach we might consider is recent BVFT algorithm of \cite{xie2021batch} to select among the $f^k$ learned by the base algorithm. This  solves the model selection problem but the guarantee of BVFT has a slow $O(1/n^{1/4})$ dependence and thus does not achieve either oracle inequality.
	It also, in theory, requires that a discretization parameter is set based on a concentrability coefficient stronger than $\CC(\mu)$, which is typically unknown. Follow up work has shown this can be chosen adaptively in practice \citep{zhang2021towards}.
	
	Perhaps most conceptually related to our approach is past work which compares Bellman errors of finer-grained state abstraction functions on the Q-function computed on  coarser-grain state abstraction~\citep{jiang2015abstraction}. This work provided bounds on the resulting policy performance of the selected abstraction in  discrete state and action setting, where models are varying levels of state abstractions. However, this work and analysis critically depends on the discrete state and action setting: our work shows how a similar idea can be used in the value function approximation setting, with substantially different tools and analysis techniques. %

	\textbf{Representation Learning}
		Readers familiar with work in representation learning for RL \citep{agarwal2020flambe} might observe that the problem vaguely resembles objectives for selecting feature representations for low rank MDPs such as \cite{modi2021model}. Unfortunately, the problem settings are quite different, and we cannot simply adapt such representation learning algorithms to the model selection problem since they are either insensitive to the model class complexities or they require stronger realizability assumptions. It would be interesting future work to better understand the relationship between these two problems.

\section{\alg{} Algorithm}\label{sec::algorithm}

Having introduced the model selection objectives, we now present our main result, a novel model selection algorithm for offline RL that provably achieves the aforementioned oracle inequalities.  We first give an intuitive sketch of the approach and present the full algorithm in subsequent subsection.
As a thought experiment, we will consider the case when $M = 2$ and a minimally complete class $\FF_{k_*}$ exists.\footnote{While the algorithm requires minimal changes to extend beyond these constraints, there  are some notable analytic challenges in the proof. For general $M$, we cannot guarantee the class returned will be the correct one always -- it may be substantially smaller but with controllable approximation error. When $k_*$ does not exist, there is a chance to "skip" the best model class, so we must show that this is tolerable. }  We will also ignore logarithmic factors and $H$ dependence for now. 
A key algorithmic idea is that we will first start optimistically by guessing that $k_* = 1$. Running a base algorithm like FQI with $\FF_1$ on training data returns the functions $f_1, \ldots, f_H$, which, with high probability, satisfy
\begin{align*}
\textstyle{
    \sum_h  \| f_h - T^*_h f_{h + 1} \|_\mu^2 = \tilde \OO \left( {\CC(\mu)\log (|\FF_1|) \over n }\right) }
\end{align*}
if $k_*$ actually equals $1$. Given these functions, we can pose a square loss regression problem where the regression targets (i.e., the "y's" of the regression problem) are given by the empirical Bellman updates using training data:
\begin{align*}
\textstyle{
    \hat L_h(g, f_{h + 1})  = {1 \over n} \sum_{(x, a, r, x') \in D_h} \left( g(x, a) - r - f_{h + 1}(x')\right)^2.} 
\end{align*}
Let $L_h(f, g) := \E_{\mu_h} \left[\hat L_h(f, g) \right]$. Solving this regression problem for each $h$ over the class $\FF_2$ will generate $g_1, \ldots, g_H \subset \FF_2$. The key insight is that the sequences $(f_h)_h$ and $(g_h)_h$ are both trying to minimize the same empirical square loss function with the same regression targets: $r + f_{h + 1}(x')$. Unlike the Bellman error estimators from the previous section that incur biases, the losses $L_h(f_h, f_{h + 1})$ and $L_h(g_h, f_{h + 1})$ are comparable and estimable from a validation set. By nestedness of $\FF_1 \subset \FF_2$, $\FF_2$ cannot have more approximation error on this regression problem. Provided we can get a good estimate of generalization errors $L(f_h, f_{h + 1})$ and $L(g_h, f_{h + 1})$ with validation data, this naturally brings forth the following \textit{generalization test}:
if 
\begin{align}\label{eq::gen-test}
\textstyle{
L_h(g_h, f_{h + 1} ) < L(f_h, f_{h + 1}) - \tilde \OO \left( {\log (|\FF_1 |) \over n } \right) }\end{align}
reject $\FF_1$ and pick $\FF_2$. Otherwise pick $\FF_1$. 	That is, a switch will occur not when $\FF_{2}$ performs only marginally better than $\FF_1$, but when it performs \textit{substantially} better as measured by the generalization error that we see for both $f_h$ and $g_h$ on this regression problem. If \eqref{eq::gen-test} holds, then there is reason to believe that $\FF_1$ is not complete, making $\FF_2$ the right choice. Crucially, the test only checks for generalization error, so the tolerance term on the right side goes as $\tilde \OO\left( { \log (| \FF_1| ) \over n } \right)$, which is the correct rate for this problem. Thus, if the test turns out to be wrong, we will only lose additive factors of the correct rate.

\begin{figure}
	\begin{algorithm}[H]
		\caption{ Model Selection via Bellman Error (\alg{}) }\label{alg::theory}
		\begin{algorithmic}[1]
			
			\STATE \textbf{Input}: Offline dataset $D = (D_h)$ of $n$ samples for each $h \in [H]$, Base algorithm $\BB$, function classes $\FF_1\subset \ldots \subset \FF_M$, failure probability $\delta \leq 1/e$, and estimation error function $\omega$ for $\BB$. 
			
			\STATE Let $n_{\text{train}} =\ceil{ 0.8 \cdot n}$ and $n_{\text{valid}} = \floor{0.2 \cdot n}$ and split the dataset $D$ randomly into $D_{\text{train}} = ( D_{\text{train}, h})$ of $n_{\text{train}}$ samples and $D_{\text{valid}} = (D_{\text{valid}, h} ) $ of $n_{\text{valid}}$ samples for each $h \in [H]$.
			
			\STATE Set $\bzeta := {  96 H^2 \log (16M^2H /\delta ) \over \valid } $

			\STATE Initialize $k  \leftarrow 1$.

			\WHILE{$k < M$} \label{line::while}
			
				\STATE $ (f_h)_{h \in [H]} \leftarrow \BB(D_{\text{train}}, \FF_k, \delta/4M)$

				\FOR{$k' \leftarrow k + 1, \ldots, M$}

				\STATE Set $\balpha :=  \max \left\{ \omega_{\train, \delta/4M}(\FF_{k'}), { 200 H^2 \log(8 M^2 H |\FF_{k'} | /\delta ) \over \train  } \right\}$ 
				
				\STATE Set $\tol := 2 \balpha + 2\bzeta + \omega_{\train, \delta /4M} (\FF_k) $ \label{line::tolerance}

					\STATE Minimize squared loss on training set for  all $h \in [H]$ with regression targets from class $k$:
					\vspace{-1mm}
					\begin{align}
					 g_h \leftarrow  \argmin_{g \in \FF_{k'}} \quad \hat L_h(g, f_{h + 1}) := {1 \over \train} \sum_{(x, a, r, x') \in D_{\text{train}, h} }  \left( g(x, a)  - r -  f_{h + 1} (x') \right)^2
					 	\end{align}	
					\vspace{-2mm}
					\STATE Compute squared loss using the validation set for all $h \in [H]$ as a function of $f$:
					\vspace{-1mm}
					\begin{align}\label{eq::validation-loss}
					\tilde L_h (f, f_{h + 1}) = {1 \over \valid } \sum_{(x, a,r,x') \in D_{\text{valid}, h} } \left(  f(x_h, a_h) -  r_h -  f_{h + 1}(x') \right)^2
					\end{align}	
					\vspace{-2mm}
					
				\IF{$\tilde L_h(g_h, f_{h + 1}) < \tilde L_h(f_h, f_{h + 1}) -\tol $ for any $h \in [H]$} \label{line::gen-test}

						\STATE $k \leftarrow k + 1$
						\STATE goto Line~\ref{line::while}. 
					\ENDIF
				\ENDFOR
			
				\STATE goto Line~\ref{line::return}
			\ENDWHILE
			
			\RETURN $\hat \pi = \left(\pi_{f_h} \right)_{h \in [H]}$ \label{line::return}

		\end{algorithmic}
		
	\end{algorithm}

\vspace{-1.05cm}
\end{figure}

\subsection{Full Algorithm}

The full algorithm, \alg{} (Model Selection via Bellman Error), is presented in Algorithm~\ref{alg::theory}. While the underlying principle described just above is similar, \alg{} must handle a number extensions that complicate the algorithm such as dealing with general $M$, accounting for proper estimation errors, and being  robust  to the case when $k_*$ does not exist. Interestingly, the fundamental algorithmic idea remains the same -- only the tolerances change and it loops over the model classes.

\alg{} takes as input a base offline RL algorithm (such as FQI), the model classes $\FF_1 \subset \ldots \subset \FF_M$, and the offline dataset $D\in [H]$. The dataset is  split randomly into a training set $D_{\text{train}}$ and a validation set $D_{\text{valid}}$.  The algorithm begins optimistically, starting with the candidate model class $k = 1$ and running the base algorithm with $\FF_k$ on the training dataset to generate the candidate functions $f$. We retrain on the empirical square loss using a class $k'>k$ by regressing to target values $r+f_{h+1}(x')$. This amounts to solving a sequence of $H$ least squares regression problems using class $k'$, yielding the functions $g_h$.

Since $f_h$ and $g_h$ are attempting to solve the \emph{same} regression problem (with the same target values), we can compare their performance on this shared squared loss objective $\tilde L$ with validation data. We use a \textit{generalization error test} in Line~\ref{line::gen-test} to decide whether to keep using class $k$.
If the test fails and it is discovered that the larger model class $\FF_{k'}$ is able to achieve substantially smaller loss than $\FF_k$, then we move to a  larger model class $k \leftarrow k + 1$. The process is repeated until all classes are exhausted or no model class $k'$ offers a big enough improvement over $k$ to cause the test to fail.

\subsection{Rate-Optimal Oracle Inequalities}

We show that this simple procedure is able to achieve both of the oracle inequalities of the previous section simultaneously. We start with a generic version of the theorem that is stated in terms of an assumed performance bound $\omega$ on the base algorithm. We will presently instantiate the base algorithm with FQI, showing that this version precisely achieves the desired oracle inequalities with the correct rates.

\begin{definition}\label{def::base-alg}
	Let $\BB$ be a base offline RL algorithm for value function approximation that takes as input a model class $\FF$, an offline dataset $D$ of $n$ samples for each $h \in [H]$, and a failure probability $\delta$. For $\beta > 0$ and a function $\omega$, we say that $\BB$ is $(\beta, \omega)$-regular if (1)  $\omega$ is a known real-valued function of $n \in \N$, $\delta \in \R$, and $\FF_k$, and it satisfies $\omega_{n, \delta} (\FF_k) \leq \omega_{n, \delta}(\FF_{k'})$ for all $k' \geq k$; (2) $\BB(D, \FF_k, \delta)$ returns $(f_h)_{h \in [H]} \subset \FF_k$ such that $f_{h +1}$ is independent of $D_h$ and
	\begin{align}
	P \bigg(  \max_{h \in [H]}  \| f_h - T^*_h f_{h + 1} \|_{\mu_h}^2 \leq \beta \cdot \appr(\FF_k) + \omega_{n, \delta}(\FF_k) \bigg) \geq 1 - \delta.
	\end{align} 
\end{definition}
In this definition, $\beta$ represents a multiplicative factor of error on the approximation error and $\omega$ represents the estimation error, which we expect to decrease in $n$ and increase in the complexity of the class $\FF$.
Generally, we will have $\omega_{n, \delta}(\FF) = \tilde \OO\left( { \log (|\FF| /\delta) / n  }\right) $ (see Lemma~\ref{lem::fqi} for FQI). For model selection, we thus hope to achieve a bound that matches what the base algorithm would achieve had $k_*$ been known in advance, up to additive terms of $\sqrt{ \log | \FF_{k_*} | \over n }$.

Our primary theorem addresses Problem~\ref{prob::min-complete} using an arbitrary base algorithm.

\begin{restatable}{theorem}{thmmain}
\label{thm::main}
	Let $\BB$ be an $(\beta, \omega)$-regular algorithm and suppose that $k_*$ (defined in Problem~\ref{prob::min-complete}) exists. Then Algorithm~\ref{alg::theory} with inputs $D$, $\BB$, $\FF_1 \subset \ldots \subset \FF_M$, $\omega$, and $\delta \leq 1/e$ outputs $\hat \pi$ such that, with probability at least $1 - \delta$,
	\begin{align}
	\regret(\hat \pi) \leq  C \cdot  \sqrt{  \CC(\mu) H \left( \omega_{\train, \delta /4M} (\FF_{k_*})  + {H^2(\log |\FF_{k_*}|  + \iota  ) \over n} \right) }  
	\end{align}
	for some absolute constant $C > 0$ and $\iota  = \log(M^2 H /\delta)$.
	
\end{restatable}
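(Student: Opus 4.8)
The plan is to reduce everything to controlling the summed squared Bellman error $\sum_h \|f_h - T^*_h f_{h+1}\|^2_{\mu_h}$ of the functions $(f_h)$ that the algorithm ultimately returns, and then invoke the performance difference lemma (Lemma~\ref{lem::perf-diff}) to get $\regret(\hat\pi) \leq 2\sqrt{\CC(\mu)\sum_h \|f_h - T^*_h f_{h+1}\|^2_{\mu_h}}$. The structural fact I would exploit throughout is the cancellation of the double-sampling bias: for any target $f_{h+1}$ that is independent of $D_h$ (guaranteed by Definition~\ref{def::base-alg}), the population loss decomposes as $L_h(g, f_{h+1}) = \|g - T^*_h f_{h+1}\|^2_{\mu_h} + \E_{\mu_h}[\var_{x'}(f_{h+1}(x'))]$, where the variance term is independent of $g$. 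Hence for the two competitors $f_h \in \FF_k$ and $g_h \in \FF_{k'}$ sharing the same targets, $L_h(f_h, f_{h+1}) - L_h(g_h, f_{h+1}) = \|f_h - T^*_h f_{h+1}\|^2_{\mu_h} - \|g_h - T^*_h f_{h+1}\|^2_{\mu_h}$. This is exactly what makes the generalization test a faithful proxy for comparing Bellman errors despite the bias that defeats the naive estimators of Section~\ref{sec::limitations}.

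Next I would set up a single high-probability event, union-bounded over the $\OO(M^2 H)$ relevant triples $(h,k,k')$, on which three estimates hold. First, the base-algorithm guarantee gives $\max_h \|f_h - T^*_h f_{h+1}\|^2_{\mu_h} \leq \beta\,\appr(\FF_k) + \omega_{\train,\delta/4M}(\FF_k)$ for every class $k$ visited. Second, and crucially \emph{one-sided}: since $g_h$ is the empirical least-squares minimizer over $\FF_{k'}$ of a regression whose population risk is minimized over all functions by $T^*_h f_{h+1}$, a localized Bernstein excess-risk analysis yields $\|g_h - T^*_h f_{h+1}\|^2_{\mu_h} \leq \min_{g\in\FF_{k'}}\|g - T^*_h f_{h+1}\|^2_{\mu_h} + \balpha$ at the fast $\log|\FF_{k'}|/\train$ rate carried by $\balpha$. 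Third, because $f_h$ and $g_h$ are measurable with respect to $D_{\text{train}}$ and hence fixed given the split, the validation loss difference concentrates around its mean at a fast rate: a self-bounding Bernstein argument (the per-sample loss-difference variance is $\OO(H^2)$ times its mean) shows $|(\tilde L_h(f_h,f_{h+1}) - \tilde L_h(g_h,f_{h+1})) - (L_h(f_h,f_{h+1}) - L_h(g_h,f_{h+1}))| \lesssim \bzeta$, with $\bzeta \sim H^2\iota/\valid$ carrying no $\log|\FF|$ factor. I would stress that replacing this by a two-sided uniform deviation would inflate the tolerance to $\sqrt{\log|\FF|/n}$ and degrade the final rate to $n^{-1/4}$, precisely the slow rate the one-sided design avoids.

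With these events in hand, I would first prove termination control: the algorithm returns at some $\hat k \leq k_*$. At $k = k_*$, completeness gives $\appr(\FF_{k_*}) = 0$, so the base guarantee forces $\|f_h - T^*_h f_{h+1}\|^2_{\mu_h} \leq \omega_{\train,\delta/4M}(\FF_{k_*})$; via the cancellation identity and $\bzeta$-concentration, the observed improvement $\tilde L_h(f_h,f_{h+1}) - \tilde L_h(g_h,f_{h+1})$ of any larger class cannot exceed $\omega_{\train,\delta/4M}(\FF_{k_*}) + \OO(\bzeta)$, hence cannot exceed $\tol$, so the test in Line~\ref{line::gen-test} never fires at $k_*$ and the algorithm cannot advance past it. To bound the returned error I split on $\hat k$. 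If $\hat k = k_*$, the base guarantee above suffices directly. If $\hat k < k_*$, then $k' = k_*$ is among the classes tested in the inner loop, and failure of the test means $\tilde L_h(f_h,f_{h+1}) - \tilde L_h(g_h,f_{h+1}) \leq \tol$ for all $h$; converting to population losses, applying the cancellation identity, and using the one-sided bound together with $\min_{g\in\FF_{k_*}}\|g - T^*_h f_{h+1}\|^2_{\mu_h} = 0$ (here $f_{h+1}\in\FF_{\hat k}\subseteq\FF_{k_*}$ and $\FF_{k_*}$ is complete, so its minimizer $T^*_h f_{h+1}$ lies in $\FF_{k_*}$) gives $\|f_h - T^*_h f_{h+1}\|^2_{\mu_h} \leq \balpha + \tol + \OO(\bzeta) = \OO(\omega_{\train,\delta/4M}(\FF_{k_*}) + H^2\log|\FF_{k_*}|/\train + \bzeta)$, using monotonicity $\omega(\FF_{\hat k}) \leq \omega(\FF_{k_*})$. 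Summing over $h$ and substituting into Lemma~\ref{lem::perf-diff} with $\train,\valid = \Theta(n)$ and $\bzeta \sim H^2\iota/n$ produces the claimed bound after collecting the $H$ factors.

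The main obstacle, and the heart of the argument, is establishing the fast $1/n$ (rather than $1/\sqrt{n}$) rates in the second and third events simultaneously and one-sidedly. Both rest on self-bounding: the loss-difference variance must be controlled by the very mean being estimated so that a Bernstein bound can be solved for a purely additive $\log|\FF|/n$ (respectively $H^2/n$) term. Getting the constants in $\balpha$, $\bzeta$, and $\tol$ to align so that the test provably (i) never fires at $k_*$ yet (ii) certifies small Bellman error whenever it fails to fire at $\hat k < k_*$ is delicate, since both directions must hold on the same event. The asymmetry between the $\log|\FF_{k'}|$-dependent $\balpha$, needed to control the minimizer $g_h$ over the \emph{larger} class, and the $\log|\FF|$-free $\bzeta$, afforded by fixing $f_h,g_h$ on the validation split, is exactly what keeps the final rate optimal in $\comp(\FF_{k_*})$.
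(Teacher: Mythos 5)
Your proposal follows essentially the same route as the paper's proof: the same reduction to the summed squared Bellman error via Lemma~\ref{lem::perf-diff}, the same three families of high-probability events (base-algorithm guarantee, fast-rate excess-risk bound for the retrained minimizers $g_h$ over $\FF_{k'}$, and a training/validation loss comparison), and the same two-case analysis --- the test provably never fires at $k = k_*$, and if the algorithm stops at $\hat k < k_*$ then the un-fired test against $k' = k_*$, combined with $\min_{g \in \FF_{k_*}} \| g - T^*_h f_{h+1}\|^2_{\mu_h} = 0$, certifies small Bellman error at the $\FF_{k_*}$ rate.

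One step is stated imprecisely, though the fix is standard and your own parenthetical points toward it. Your third event --- a purely additive two-sided bound $\abs{(\tilde L_h(f_h,f_{h+1}) - \tilde L_h(g_h,f_{h+1})) - (L_h(f_h,f_{h+1}) - L_h(g_h,f_{h+1}))} \lesssim \bzeta$ with $\bzeta \sim H^2 \iota / \valid$ --- is not provable as written: the per-sample variance of this loss difference scales with $\| f_h - g_h \|^2_{\mu_h}$, while its mean is a \emph{difference} of excess risks that can be zero or negative, so ``variance is $\OO(H^2)$ times the mean'' fails for this pair, and Bernstein leaves a $\sqrt{ H^2 \| f_h - g_h \|^2_{\mu_h} \, \iota / \valid}$ term rather than a pure $1/\valid$ term. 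The paper's events $\EE_3$ and $\EE_4$ sidestep this by comparing each of $f_h$ and $g_h$ \emph{separately} to the common Bayes predictor $T^*_h f_{h+1}$, whose excess risk is nonnegative so that self-bounding genuinely applies; this yields one-sided bounds of the form (validation excess) $\leq 2 \times$ (population excess) $+ \, \bzeta$ and conversely. Your downstream case analysis tolerates these multiplicative factors of $2$ with only constant changes --- this is exactly what the algorithm's choice $\tol = 2\balpha + 2\bzeta + \omega(\FF_k)$ is engineered to absorb --- so this is a bookkeeping correction rather than a flaw in the approach.
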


The above theorem shows a regret bound scaling with the square root of the error term $\omega$ of the base algorithm $\BB$ plus  a $\tilde \OO(\log(|\FF_{k_*}|)/n)$ estimation error. Importantly, as stated in Problem~\ref{prob::min-complete}, the statistical complexity depends only on $\FF_{k_*}$ and not any of the larger classes.

For concreteness, we now instantiate Theorem~\ref{thm::main}  with a standard finite-horizon FQI \citep{duan2020minimax} base algorithm, which satisfies Definition~\ref{def::base-alg} with $\omega_n(\FF) = \hat \OO( \log|\FF|/n)$. This in turn translates to the desired rate-optimal oracle inequalities.

\begin{restatable}{lemma}{lemfqi}
\label{lem::fqi}
	Consider the FQI algorithm (stated in Appendix~\ref{app::fqi} for completeness). For a model class $\FF$, FQI is a $(3, \omega)$-regular base algorithm with 
	\begin{align*}
	    \omega_{n, \delta}(\FF) = \OO \left( { H^2 \log ( H|\FF| /\delta ) \over n } \right).
	\end{align*}
	
\end{restatable}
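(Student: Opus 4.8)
The plan is to reduce the multi-step guarantee to $H$ independent single-step least-squares analyses, exploiting the backward structure of FQI. Since the per-timestep datasets $D_1, \dots, D_H$ are independent and FQI computes $f_h$ by regression on $D_h$ using only $f_{h+1}$ (which in turn depends only on $D_{h+1}, \dots, D_H$ together with the terminal $f_{H+1} = 0$), the function $f_{h+1}$ is independent of $D_h$. This verifies the independence requirement in Definition~\ref{def::base-alg} and, more importantly, lets me condition on $f_{h+1}$ and treat the step-$h$ update as a fixed regression problem with targets $y = r + f_{h+1}(x')$. These targets lie in $[0, H+1]$, so the effective range is $B = \OO(H)$, and their conditional mean is exactly $\E[y \mid x, a] = T^*_h f_{h+1}(x,a)$. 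Hence the Bayes-optimal predictor for this regression is $T^*_h f_{h+1}$ and the in-class approximation error satisfies $\min_{f \in \FF} \|f - T^*_h f_{h+1}\|_{\mu_h}^2 \le \appr(\FF)$, the inequality holding because $f_{h+1} \in \FF$.

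The core of the argument is a fast-rate generalization bound for squared-loss empirical risk minimization over the finite class $\FF$ in the possibly misspecified regime. Writing $g_f := \ell_f - \ell_{f^*}$ for the excess loss relative to $f^* := T^*_h f_{h+1}$, two facts drive the fast rate: $\E[g_f] = \|f - f^*\|_{\mu_h}^2$, and the self-bounding variance property $\var(g_f) \le 4 B^2 \E[g_f]$, which follows by factoring $g_f = (f - f^*)(f + f^* - 2y)$ and bounding the second factor by $2B$ (all of $f, f^*, y$ lie in $[0,B]$). Applying Bernstein's inequality with a union bound over $\FF$ at confidence $1 - \delta/H$ per step controls $|\E[g_f] - \hat\E[g_f]|$ for every $f$ by $\sqrt{\OO(B^2 \E[g_f] \log(H|\FF|/\delta)/n)} + \OO(B^2 \log(H|\FF|/\delta)/n)$. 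Comparing the empirical minimizer $f_h$ against the best in-class predictor $\bar f$ through $\hat\E[g_{f_h}] \le \hat\E[g_{\bar f}]$ and substituting these deviation bounds on both sides yields a self-referential inequality of the form $u \le a + \sqrt{\gamma u} + \sqrt{\gamma a} + \gamma$, where $u = \|f_h - f^*\|_{\mu_h}^2$, $a = \appr(\FF)$, and $\gamma = \OO(B^2 \log(H|\FF|/\delta)/n)$.

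The main obstacle, and the source of the constant $\beta = 3$, is absorbing the cross terms in this self-referential inequality. Using $\sqrt{\gamma u} \le u/2 + \gamma/2$ and $\sqrt{\gamma a} \le a/2 + \gamma/2$ and rearranging gives $u \le 3a + \OO(\gamma)$, that is
\begin{align*}
\|f_h - T^*_h f_{h+1}\|_{\mu_h}^2 \le 3\,\appr(\FF) + \OO\!\left( \frac{H^2 \log(H|\FF|/\delta)}{n} \right).
\end{align*}
Finally I would take a union bound of this event over all $h \in [H]$; the budget is already accounted for by the per-step failure probability $\delta/H$ and the $\log H$ inside the logarithm, so taking $\max_h$ is free. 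This establishes $(3, \omega)$-regularity with $\omega_{n,\delta}(\FF) = \OO(H^2 \log(H|\FF|/\delta)/n)$. Monotonicity of $\omega$ in the class is immediate, since $\FF_k \subset \FF_{k'}$ implies $|\FF_k| \le |\FF_{k'}|$ and $\omega$ is increasing in $|\FF|$.
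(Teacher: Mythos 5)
Your proposal is correct: it verifies every requirement of Definition~\ref{def::base-alg} (independence of $f_{h+1}$ from $D_h$, the high-probability bound with $\beta = 3$, and monotonicity of $\omega$), and the rate and constants match the paper's Lemma~\ref{lem::fqi}. However, your ERM analysis is carried out with a different centering than the paper's. The paper's key lemma (Lemma~\ref{lem::minimization}) centers the excess loss at the \emph{best-in-class} function $f^* = \argmin_{f \in \FF} \| f - T^*_h g \|_{\mu_h}^2$: with that choice the empirical excess loss of the empirical minimizer is nonpositive by construction, so a single one-sided Bernstein bound, uniform over $\FF$, suffices; the price is that the variance must be controlled via the triangle-type splitting $\|f - f^*\|_{\mu_h}^2 \leq 2\|f - T^*_h g\|_{\mu_h}^2 + 2\|f^* - T^*_h g\|_{\mu_h}^2$, and the factor $3$ appears when both terms are absorbed after choosing the Bernstein parameter $\eta$. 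You instead center at the \emph{Bayes predictor} $T^*_h f_{h+1}$ (generally outside $\FF$), which buys you the exact identity $\E[g_f] = \|f - T^*_h f_{h+1}\|_{\mu_h}^2$ and the clean self-bounding variance $\var(g_f) \leq 4B^2 \E[g_f]$, but then requires deviation control on both the ERM and the comparator $\bar f$, producing the self-referential inequality $u \leq a + \sqrt{\gamma u} + \sqrt{\gamma a} + \OO(\gamma)$ that you resolve by absorption; this yields the same $\beta = 3$ and $\omega_{n,\delta}(\FF) = \OO\left(H^2 \log(H |\FF| / \delta)/n\right)$. Both routes share the same skeleton — per-step reduction via the independence structure of FQI, Bernstein plus a union bound over the finite class, then a union bound over $h$ — so the difference is in the key regression lemma rather than the overall strategy. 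The paper's class-centered version is slightly more economical (the comparison to $f^*$ is free) and is stated uniformly over all $f \in \FF$ with the empirical term retained, a generality the paper reuses in its concentration analysis (Proposition~\ref{prop::event-minimizer}); your Bayes-centered version makes the origin of the constant $3$ and the role of misspecification more transparent.
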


By plugging this classic result in Theorem~\ref{thm::main} as the base algorithm, we arrive at a solution to Problem~\ref{prob::min-complete}.

\begin{restatable}{corollary}{corfqi}
\label{cor::fqi}
	Let $\BB$ be instantiated with FQI (Algorithm~\ref{alg::fqi} in Appendix~\ref{app::fqi}). Define $\iota = \log(M^2 H /\delta)$ Then, under the same conditions as Theorem~\ref{thm::main}, there is an absolute constant $C > 0$ such that, with probability at least $1 -\delta$, Algorithm~\ref{alg::theory} outputs $\hat \pi$ satisfying 
\begin{align}
\regret(\hat \pi) \leq C \cdot   \sqrt{ \CC(\mu) H^3(\log |\FF_{k_*}|  + \iota  ) \over n } .
	\end{align}

\end{restatable}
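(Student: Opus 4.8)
The plan is to obtain Corollary~\ref{cor::fqi} as a direct instantiation of the generic Theorem~\ref{thm::main}, using Lemma~\ref{lem::fqi} to supply the concrete estimation-error function $\omega$ for FQI. Since both of those results are already established, essentially no new mathematical content is needed: the only work is to verify that FQI meets the hypotheses of Theorem~\ref{thm::main} and then to simplify the resulting bound by absorbing logarithmic and constant factors into the final $C$ and $\iota$.

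First I would confirm that FQI is a legitimate base algorithm in the sense of Definition~\ref{def::base-alg}. This is exactly the content of Lemma~\ref{lem::fqi}, which states that FQI is $(3, \omega)$-regular with $\omega_{n,\delta}(\FF) = \OO\!\left( H^2 \log(H|\FF|/\delta)/n \right)$. In particular, the monotonicity requirement $\omega_{n,\delta}(\FF_k) \le \omega_{n,\delta}(\FF_{k'})$ for $k' \ge k$ follows from the nestedness $\FF_k \subset \FF_{k'}$, which gives $\log|\FF_k| \le \log|\FF_{k'}|$, and the high-probability control of $\max_h \| f_h - T^*_h f_{h+1}\|^2_{\mu_h}$ together with the independence of $f_{h+1}$ from $D_h$ is precisely the conclusion of the lemma. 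Thus all preconditions of Theorem~\ref{thm::main} hold with $\beta = 3$, and I would apply that theorem verbatim to obtain, with probability at least $1 - \delta$,
\begin{equation*}
\regret(\hat\pi) \le C\sqrt{ \CC(\mu) H \left( \omega_{\train,\delta/4M}(\FF_{k_*}) + {H^2(\log|\FF_{k_*}| + \iota) \over n} \right) }.
\end{equation*}

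The final step is to substitute the FQI-specific $\omega$ and simplify. Writing $\omega_{\train,\delta/4M}(\FF_{k_*}) = \OO\!\left( H^2 \log(4MH|\FF_{k_*}|/\delta)/\train \right)$ and using $\train = \ceil{0.8\,n} \ge 0.8\,n$, this term is of order $H^2 \log(MH|\FF_{k_*}|/\delta)/n$. Splitting the logarithm as $\log|\FF_{k_*}| + \log(MH/\delta)$ and noting $\log(MH/\delta) \le \log(M^2 H/\delta) = \iota$, the term $\omega_{\train,\delta/4M}(\FF_{k_*})$ is dominated, up to an absolute constant, by $H^2(\log|\FF_{k_*}| + \iota)/n$, which is exactly the second term inside the parentheses. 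Hence the two terms combine into a single $\OO\!\left( H^2(\log|\FF_{k_*}| + \iota)/n \right)$ contribution; multiplying by $\CC(\mu) H$, taking the square root, and folding all numerical constants into a fresh $C$ yields the claimed bound $\regret(\hat\pi) \le C \sqrt{ \CC(\mu) H^3(\log|\FF_{k_*}| + \iota)/n }$.

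I expect no genuine obstacle here, since the entire difficulty resides in Theorem~\ref{thm::main} and Lemma~\ref{lem::fqi}, and the corollary is a plug-in. The one point deserving a moment of care is the bookkeeping of failure probabilities and logarithmic factors: the $\delta/4M$ confidence level passed to the base algorithm injects a $\log(MH/\delta)$ factor into $\omega$, and I must ensure this is cleanly absorbed into $\iota = \log(M^2 H/\delta)$ rather than surfacing as an extra lower-order term that would spoil the stated rate. Because $\log(MH/\delta) \le \iota$, this absorption is immediate, so the clean $\tilde\OO(\sqrt{\CC(\mu) H^3 \log|\FF_{k_*}|/n})$ rate is preserved.
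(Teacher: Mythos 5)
Your proposal is correct and follows essentially the same route as the paper: the paper's proof of Corollary~\ref{cor::fqi} likewise applies Theorem~\ref{thm::main} with the $(3,\omega)$-regularity of FQI from Lemma~\ref{lem::fqi}, substitutes $\omega_{\train,\delta/4M}(\FF_{k_*}) = \OO\left(H^2\log(HM|\FF_{k_*}|/\delta)/\train\right)$, and absorbs the logarithmic factors into $\log|\FF_{k_*}| + \iota$ using that $\train$ is a constant fraction of $n$. Your extra check of the monotonicity requirement in Definition~\ref{def::base-alg} and the explicit log-splitting bookkeeping are exactly the points the paper handles implicitly, so there is no substantive difference.
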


The proof of Theorem~\ref{thm::main} (and by extension Corollary~\ref{cor::fqi}) follows a nearly identical intuition as outlined at the beginning of this section. In particular, the proof shows two parts: (1) \alg{} will never return a value of $k$ that exceeds $k_*$ and (2) if \alg{} returns $k < k_*$, then the approximation error must be small because it was undetectable by the test when comparing to $k_*$. 
however, a key novelty is recognizing that the generalization test in Line~\ref{line::gen-test}, which compares the errors of the two model classes on the same regression problem, can be used to prove both (1) and (2).

\subsubsection{Robustness}

We show that the same Algorithm~\ref{alg::theory} simultaneously achieves the desired robustness result of Problem~\ref{prob::robust} when $k_*$ does not exist without any modification.

\begin{restatable}{theorem}{thmrobust}
\label{thm::robust}
Under the same conditions as Theorem~\ref{thm::main}, if $k_*$ does not exist, there exists an absolute constant $C > 0$ such that, with probability at least $1 - \delta$,  Algorithm~\ref{alg::theory} outputs $\hat \pi$ satisfying
\begin{align}
	\regret(\hat \pi) \leq C \cdot \min_{k \in [M] } \left\{    \sqrt{ \CC(\mu)  H \left(  \beta \cdot \xi_k +   \omega_{\train, \delta/4M} (\FF_k)  + { H^2(\log |\FF_k|  + \iota  ) \over n }\right)   } \right\}.
	\end{align}
\end{restatable}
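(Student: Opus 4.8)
The plan is to route everything through Lemma~\ref{lem::perf-diff}, so it suffices to bound $\max_{h}\|f_h - T_h^* f_{h+1}\|_{\mu_h}^2$ for the functions $(f_h)$ returned at the terminal index $\hat k$; summing the per-step bound over the $H$ steps produces the factor $H$ inside the square root. Since the target is a minimum over $k$, I would fix an \emph{arbitrary} $k \in [M]$ and prove $\max_h \|f_h - T_h^* f_{h+1}\|_{\mu_h}^2 \lesssim B_k := \beta\xi_k + \omega_{\train,\delta/4M}(\FF_k) + H^2(\log|\FF_k|+\iota)/n$, then minimize over $k$ at the very end. I would condition on a single high-probability event (the same scaffolding used for Theorem~\ref{thm::main}), assembled by union bounds with the budgeted failure probabilities $\delta/4M$ over all $(k,k',h)$: (i) the base guarantee of Definition~\ref{def::base-alg}, giving $\|f_h^{(j)} - T_h^* f_{h+1}^{(j)}\|^2 \le \beta\,\appr(\FF_j) + \omega_{\train}(\FF_j)$ at every base call $j$, with $f_{h+1}^{(j)}$ independent of $D_{\train,h}$; (ii) a finite-class least-squares excess-risk bound for each regressor $g_h$, giving $\|g_h - T_h^* f_{h+1}\|^2 \lesssim \min_{f\in\FF_{k'}}\|f - T_h^* f_{h+1}\|^2 + H^2\log(|\FF_{k'}|/\delta)/\train$; and (iii) the crucial fast-rate concentration of the validation \emph{loss difference} computed on the independent split $D_{\valid}$, which controls $\tilde L_h(f)-\tilde L_h(g)$ against $L_h(f)-L_h(g)$ up to a constant fraction of the two excess risks plus the additive $\bzeta = O(H^2\iota/n)$. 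The variance-adaptive (Bernstein) argument is what makes (iii) a $1/n$ rather than $1/\sqrt n$ tolerance, and the $\log(M^2 H |\FF|/\delta)$ factors baked into $\balpha$ and $\bzeta$ are exactly calibrated to close these unions.

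The single observation used in both directions is that $f_h^{(j)}$ and the regressor $g_h$ fit the \emph{same} targets $r + f_{h+1}^{(j)}(x')$, so the irreducible-variance term cancels in the difference and $L_h(f)-L_h(g) = \|f-T_h^*f_{h+1}^{(j)}\|^2 - \|g-T_h^*f_{h+1}^{(j)}\|^2$. I would then split on the position of $\hat k$ relative to the fixed $k$. If $\hat k \le k$: when $\hat k = k$, the base guarantee together with $\appr(\FF_k)\le\xi_k$ already gives $\|f_h-T_h^*f_{h+1}\|^2 \le \beta\xi_k + \omega_{\train}(\FF_k)\le B_k$; when $\hat k < k$, the algorithm stopped at $\hat k$, so the test in Line~\ref{line::gen-test} did \emph{not} fire for $k'=k$ (valid since $k>\hat k$), and non-firing $\tilde L_h(g_h)\ge\tilde L_h(f_h)-\tol$ fed through (iii) yields $\|f_h-T_h^*f_{h+1}\|^2 \lesssim \|g_h-T_h^*f_{h+1}\|^2 + \tol + \bzeta$. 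Here (ii) with $\min_{f\in\FF_k}\|f-T_h^*f_{h+1}\|^2\le\xi_k$ (legitimate because $f_{h+1}\in\FF_{\hat k}\subseteq\FF_M$), the monotonicity $\omega_{\train}(\FF_{\hat k})\le\omega_{\train}(\FF_k)$ inside $\tol$ (Line~\ref{line::tolerance}), and $\bzeta=O(H^2\iota/n)$ collapse the whole bound to $\lesssim B_k$.

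The genuinely new case is $\hat k > k$, the ``skip,'' which cannot be handled by the base guarantee at $\hat k$ since $\omega_{\train}(\FF_{\hat k})$ may dwarf $\omega_{\train}(\FF_k)$. Instead I would extract information from the \emph{upward moves}: whenever the algorithm advances from base index $j$ to $j+1$, some test fired at a class $k'\ge j+1$, and the firing inequality $\|f_h^{(j)}-T_h^*f_{h+1}^{(j)}\|^2 \gtrsim \tol-\bzeta \gtrsim \balpha \ge \omega_{\train}(\FF_{k'})$ (via (iii), using $\|g_h-T_h^*\cdot\|^2\ge0$) combined with the base upper bound $\le \beta\,\appr(\FF_j)+\omega_{\train}(\FF_j)$ and $\omega_{\train}(\FF_j)\le\omega_{\train}(\FF_{k'})$ yields the certification $\beta\,\appr(\FF_j) \gtrsim \omega_{\train}(\FF_{j+1})$. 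Applying this to the \emph{last} move $\hat k-1\to\hat k$ (firing at some $k'\ge\hat k$) bounds the estimation error of the returned class by the approximation error of a \emph{smaller} class: $\omega_{\train}(\FF_{\hat k}) \le \omega_{\train}(\FF_{k'}) \lesssim \beta\,\appr(\FF_{\hat k-1}) \le \beta\xi_{\hat k-1} \le \beta\xi_k$, where the last steps use $\hat k-1\ge k$ and monotonicity of $\xi$. Feeding this into the base guarantee at $\hat k$ with $\appr(\FF_{\hat k})\le\xi_{\hat k}\le\xi_k$ gives $\|f_h-T_h^*f_{h+1}\|^2 \le \beta\xi_{\hat k} + \omega_{\train}(\FF_{\hat k}) \lesssim \beta\xi_k \le B_k$.

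Finally I would sum the per-step bound over $h\in[H]$, invoke Lemma~\ref{lem::perf-diff} to obtain $\regret(\hat\pi)\lesssim\sqrt{\CC(\mu)\,H\,B_k}$, and take the minimum over the arbitrary $k$. The main obstacle is precisely the skip case: establishing that each upward move certifies $\beta\,\appr(\FF_j)\gtrsim\omega_{\train}(\FF_{j+1})$ — which hinges on the fast $O(1/n)$ tolerance $\tol$ being dominated by the approximation error that triggered the move — and thereby that the algorithm never overshoots into a class whose estimation cost is not already paid for by the target's $\beta\xi_k$. Obtaining the variance-adaptive concentration (iii) with the correct absorption of the fractional excess-risk slack into constants, so that the advertised $H^2(\log|\FF_k|+\iota)/n$ rate (rather than its square root) survives, is the other delicate ingredient, shared with Theorem~\ref{thm::main}.
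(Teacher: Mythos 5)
Your proposal is correct and takes essentially the same route as the paper's proof: the same event scaffolding (base-algorithm guarantee, finite-class least-squares excess risk, and two-sided Bernstein comparison of validation and population loss differences), the same three-way case split on the returned index versus a comparator class, with non-firing of the test handling the case of stopping below the comparator, and firing at the last upward move certifying $\balpha_{k'} \lesssim \beta\,\appr(\FF_{\hat k - 1})$ so that monotonicity of $(\balpha_k)$ and $(\xi_k)$ controls the skip case. The only cosmetic difference is that you fix an arbitrary comparator $k$ and minimize at the end, whereas the paper argues directly with $k^\dagger = \argmin_{k}\{\xi_k + \omega_{\train,\delta/4M}(\FF_k)\}$.
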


We can use Lemma~\ref{lem::fqi} to see a solution to Problem~\ref{prob::robust} with an instantiation of FQI.

\begin{restatable}{corollary}{corfqirobust}
\label{cor::fqi-robust}
	Under the same conditions as Corollary~\ref{cor::fqi}, there is an absolute constant $C > 0$ such that, with probability at least $1 -\delta$, Algorithm~\ref{alg::theory} outputs $\hat \pi$ satisfying 
 \begin{align}
 {
	\regret(\hat \pi) \leq  C \cdot \min_{k \in [M] } \left\{ \sqrt{\CC(\mu) H  \xi_k}   + \sqrt{ \CC(\mu) H^3 (\log |\FF_k|  + \iota  ) \over n }   \right\} }
\end{align}
	
\vspace{-1mm}

\end{restatable}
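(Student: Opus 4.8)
The plan is to derive Corollary~\ref{cor::fqi-robust} as a direct instantiation of the generic robustness bound of Theorem~\ref{thm::robust}, taking the base algorithm $\BB$ to be FQI. By Lemma~\ref{lem::fqi}, FQI is $(3, \omega)$-regular with $\omega_{n, \delta}(\FF) = \OO\left( H^2 \log(H|\FF|/\delta)/n \right)$, so it satisfies the hypotheses of Theorem~\ref{thm::robust} with $\beta = 3$. Substituting this base algorithm into Theorem~\ref{thm::robust} immediately gives, with probability at least $1 - \delta$,
\begin{align*}
\regret(\hat\pi) \leq C \cdot \min_{k \in [M]} \sqrt{ \CC(\mu) H \left( 3 \xi_k + \omega_{\train, \delta/4M}(\FF_k) + \frac{H^2(\log|\FF_k| + \iota)}{n} \right) }.
\end{align*}
All that remains is algebraic simplification: controlling the $\omega$ term and splitting the square root.

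For the $\omega$ term, I would first note that $\train = \ceil{0.8 n} \geq 0.8 n$, so $1/\train \leq 1.25/n$ and the training-set size is interchangeable with $n$ up to an absolute constant. Using the FQI form from Lemma~\ref{lem::fqi} at failure probability $\delta/4M$,
\begin{align*}
\omega_{\train, \delta/4M}(\FF_k) = \OO\left( \frac{H^2 \log(4MH|\FF_k|/\delta)}{n} \right),
\end{align*}
and since $\log(4MH|\FF_k|/\delta) \leq \log|\FF_k| + \log 4 + \log(MH/\delta) = \OO(\log|\FF_k| + \iota)$ (recalling $\iota = \log(M^2 H/\delta)$ and $M \leq M^2$), this term is of the same order as the explicit $H^2(\log|\FF_k| + \iota)/n$ estimation term already present. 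Folding the factor $3$ on $\xi_k$ together with these constants into the leading constant reduces the bound to $\regret(\hat\pi) \leq C' \min_{k \in [M]} \sqrt{ \CC(\mu) H \left( \xi_k + H^2(\log|\FF_k| + \iota)/n \right) }$.

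Finally, I would apply subadditivity of the square root, $\sqrt{a + b} \leq \sqrt{a} + \sqrt{b}$ for $a, b \geq 0$, to each term inside the minimum, yielding
\begin{align*}
\sqrt{ \CC(\mu) H \xi_k} + \sqrt{ \frac{\CC(\mu) H^3(\log|\FF_k| + \iota)}{n} },
\end{align*}
which is exactly the claimed expression after relabeling the absolute constant. I do not expect any substantive obstacle: the entire mathematical content is supplied by Theorem~\ref{thm::robust} and Lemma~\ref{lem::fqi}, and this corollary is a mechanical plug-in. The only step demanding mild care is the constant bookkeeping — verifying that the per-comparison failure allocation $\delta/4M$ and the various logarithmic factors collapse cleanly into $\iota = \log(M^2H/\delta)$, and that replacing $\train$ by $n$ and $\beta = 3$ by $1$ costs only absolute constants absorbable into $C$.
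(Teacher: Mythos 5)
Your proposal is correct and follows essentially the same route as the paper's own proof: instantiate Theorem~\ref{thm::robust} with FQI via Lemma~\ref{lem::fqi} (so $\beta = 3$), absorb $\omega_{\train,\delta/4M}(\FF_k)$ into the $H^2(\log|\FF_k|+\iota)/n$ term using that $\train$ is a constant fraction of $n$ and that the extra logarithmic factors are dominated by $\log|\FF_k| + \iota$, then split the square root via $\sqrt{a+b} \leq \sqrt{a}+\sqrt{b}$. The constant bookkeeping you flag is exactly what the paper does, so there is no gap.
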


Crucially, the guarantees that solve Problems~\ref{prob::min-complete} and~\ref{prob::robust} are achieved simultaneously, meaning that we do not require knowledge of whether $k_*$ exists and we can automatically get the best of both guarantees.

 The proof of Theorem~\ref{thm::robust} (Corollary~\ref{cor::fqi-robust}) is more involved than that of Theorem~\ref{thm::main}. Rather than showing that the $k$ returned by \alg{} never exceeds the index attaining the minimum, we allow $k$ to exceed it sometimes. To ensure that the error can still be bounded,  we use the fact that class $(k - 1)$ must have failed the generalization test against some larger class in $[k, M]$. Using this fact, we can argue that the estimation error of the larger class can be bounded in terms of the unknown $\xi_{k -1}$, which we know is small since $k$ exceeds the index of the minimal class. Like before, these arguments are made possible by the generalization test in Line~\ref{line::gen-test}.

\paragraph{Computational Complexity} \alg{} is computationally efficient given a squared loss regression oracle. Within inner and outer loops over the model classes, a squared loss minimizer is computed on the training dataset and then functions are evaluated on the validation set. \alg{} requires only $\OO(H k_*M)$ calls to the computational oracle when $k_*$ exists (a consequence of Theorem~\ref{thm::main}) or $\OO(HM^2)$ in the worst case. Note that algorithms for optimizing squared loss regression problems are ubiquitous in machine learning  \citep{simchi2021bypassing}.

	\section{Empirical Results}
	\label{sec:experiments}
	\vspace{-.2cm}
	The previous sections outlined the strong theoretical properties of \alg{}. In this section,  we ask: what practical insights can be gleaned from \alg{} and its theoretical guarantees? We would like to understand if the core selection method of \alg{} can be applied out-of-the-box on existing offline RL algorithms with minimal effort.
    We evaluated \alg{} in three simulated environments with discrete actions: (1) synthetic contextual bandits (CB), (2) Gym CartPole, (3) Gym MountainCar.  See Appendix~\ref{app::exp} for specific details about the setups. All training and validation sets were split 80/20.
	
	\begin{figure}
    \centering
    \includegraphics[width=1.8in]{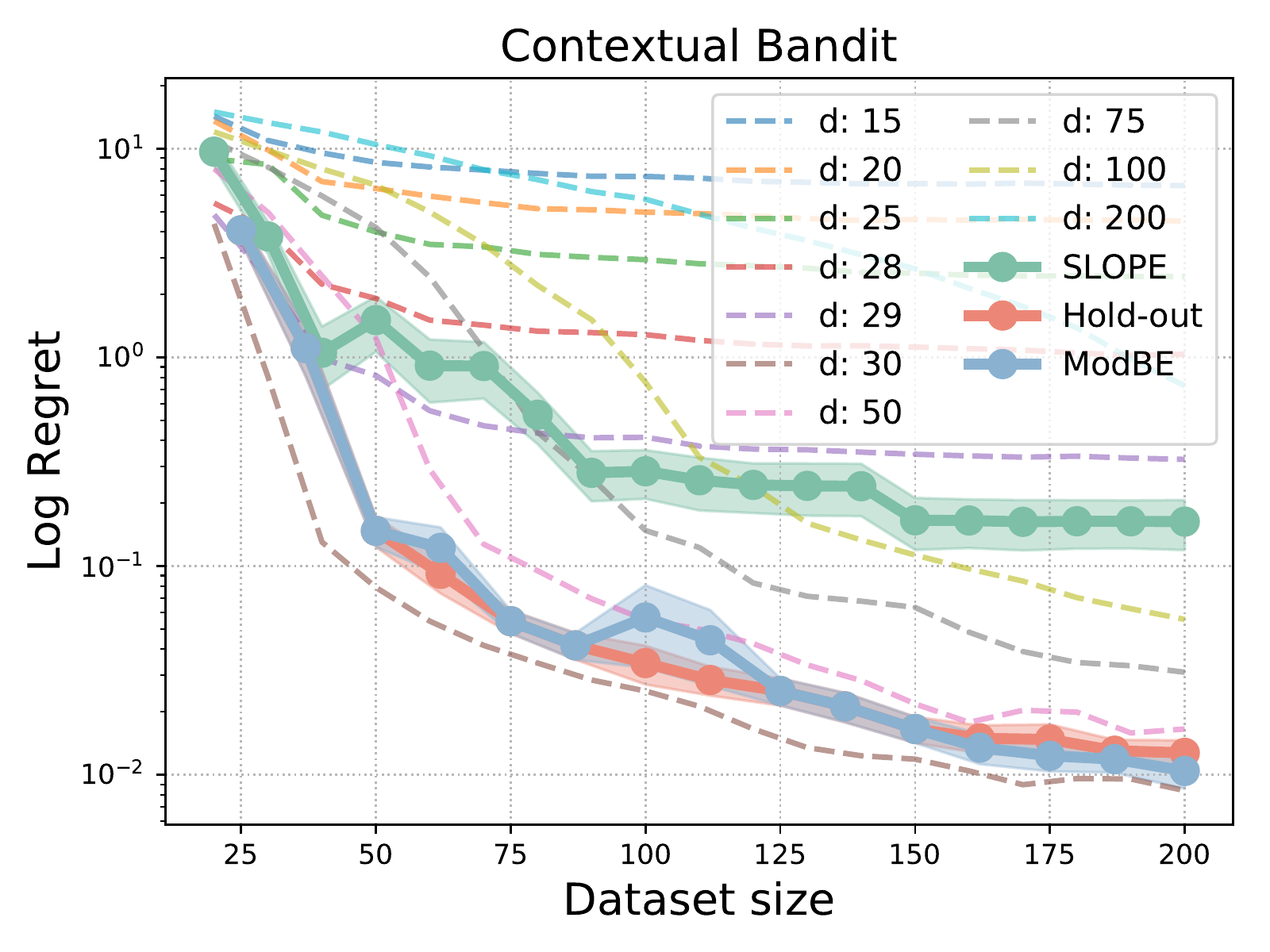}
        \includegraphics[width=1.8in]{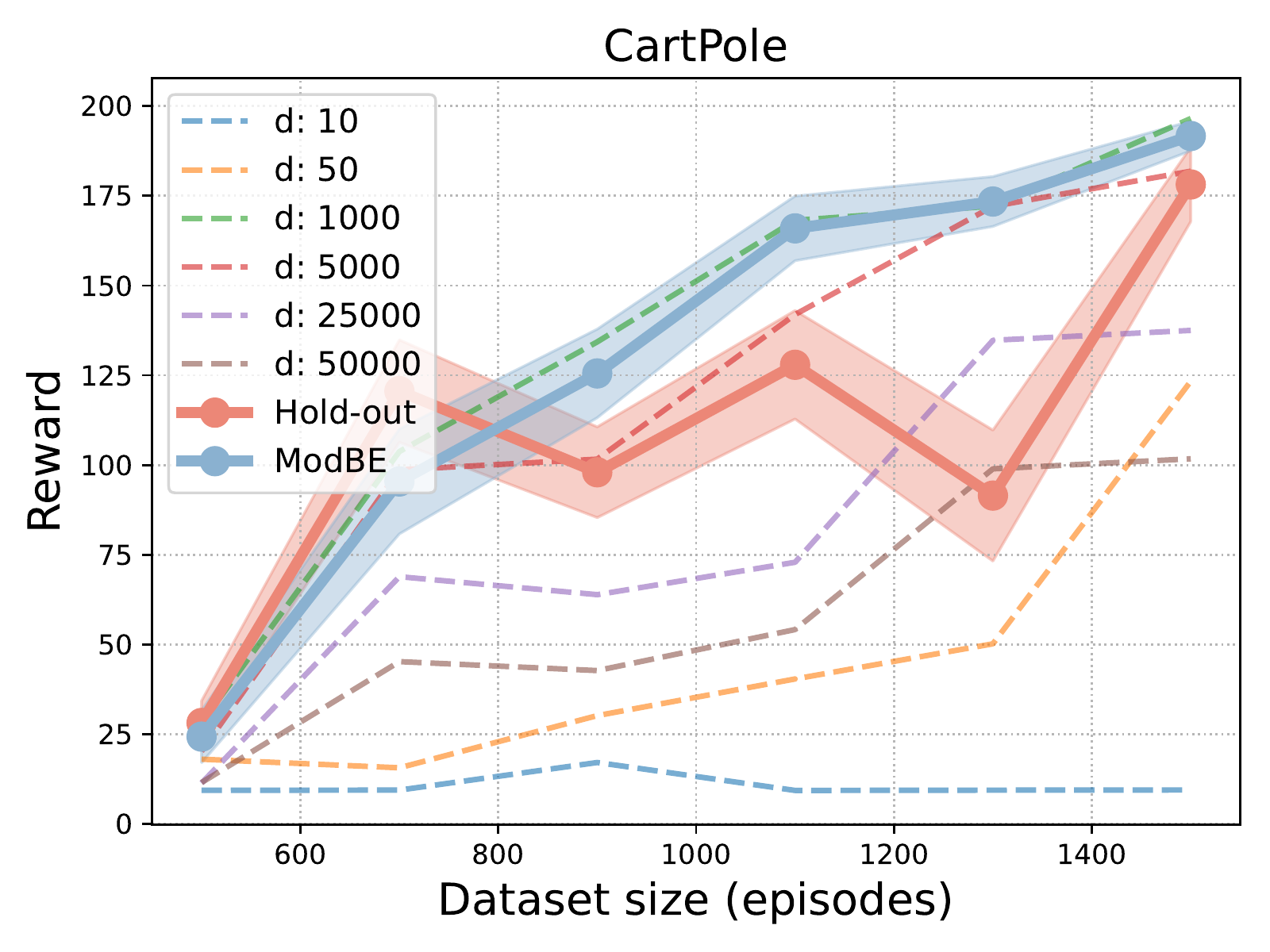}
        \includegraphics[width=1.8in]{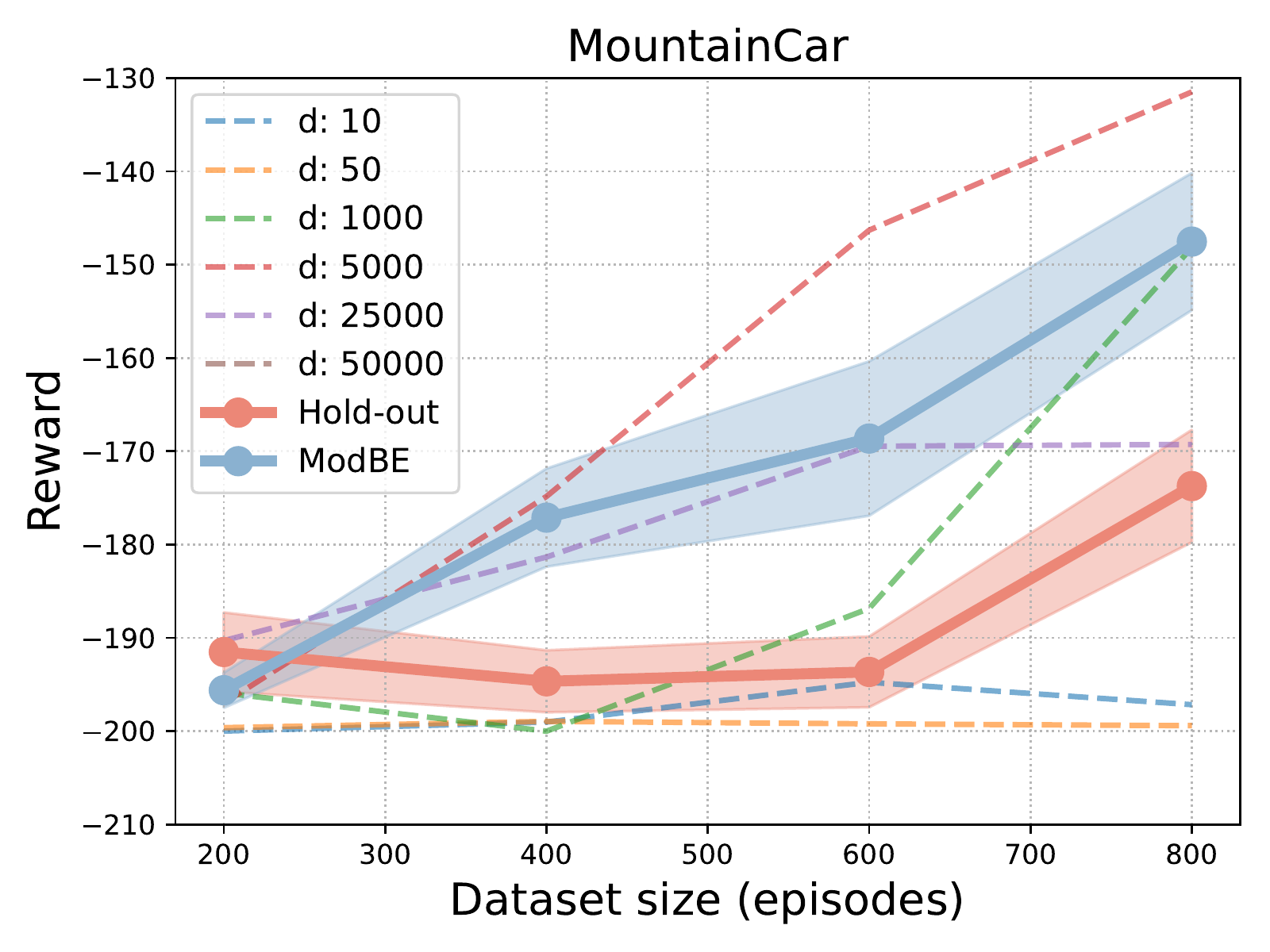}
    \caption{\alg{} is evaluated on several simulated domains: a contextual bandit (left), CartPole (middle), and MountainCar (right).  In CB, \alg{} and Hold-out outperform SLOPE and match performance of the best model class in regret. In CartPole, both match the performance of the best model class. In MountainCar, both struggle to match the best model class, but \alg{} maintains  superior performance. In CB, error bands are standard error over 10 random trials. In RL, error bands are standard error over 20 random trials.}
    \label{fig::test}
	\vspace{-.5cm}
\end{figure}

	 \paragraph{Contextual Bandit} As a basic validation experiment, we started with the CB setting of \cite{lee2021model} which considers a nested sequence of linear model classes with increasing dimension $d$. %
	Without any tuning, we simply set the tolerance of \alg{} to $\tol(\FF_k, \FF_{k'}) = {d_{k'}\over n}$. %
	Figure~\ref{fig::test} shows the results in terms of the $\log$-regret as a function of the dataset size. We observe that both \alg{} and Hold-Out (choosing the model class with the smallest error) are able to easily match the performance of the best model class while SLOPE~\citep{lee2021model} ends up being fooled by nearby classes.

	\paragraph{RL Discrete Control} Our experimental setup for the RL problems in Gym \citep{brockman2016openai} builds on top of the \textsf{d3rlpy} framework \citep{seno2021d3rlpy}, which contains open-source implementations of offline RL algorithms. We used DQN \citep{mnih2015human} (which is closest to FQI). In both CartPole and MountainCar, we considered model classes that were two-layer neural networks with ReLU activations and $d$ nodes in the hidden layer and varied the parameter $d$. Again, we simply set the tolerance of \alg{} to $d_k/n$ motivated by pseudodimension bounds~\citep{bartlett2019nearly}. For simplicity, we modified \alg{} to work in the discounted infinite horizon setting, which can trivially be done (see Appendix~\ref{app::exp} for details on this modification). The neural network classes considered had $d \in \{10, 50, 1000, 5000, 25000, 50000\}$. In both settings, we compared \alg{} to Hold-Out, which is a seemingly sensible baseline that chooses the model class with lowest estimated Bellman error on a validation set. For deterministic settings only, this is theoretically justified. Figure~\ref{fig::test} shows the reward as a function of the dataset size (in episodes). On CartPole, \alg{} and Hold-Out are both able to compete with the best classes and are roughly at parity. However, on MountainCar, we find that Hold-Out does surprisingly poorly while \alg{} is successfully able to reject the poor model classes. We conjecture that the empirical failure of Hold-out (which is not predicted in theory since the environment is deterministic) is possibly due to sensitivity to optimization error that makes the inherent Bellman error misleading. In contrast, the generalization test of \alg{} seems to be more robust  to this.
	\vspace{-.3cm}

	\section{Discussion}
	\vspace{-.3cm}
	
	In this paper, we introduced a new algorithm, \alg{}, for model selection in offline RL: to our knowledge it is the first  
	to achieve rate-optimal oracle inequalities in $n$ and $\comp(\FF_{k_*})$.
	A number of interesting open questions remain.  (1) Are there rate-optimal procedures that can be used to select hyperparameters beyond model complexity such as learning rates, batch sizes, \textit{et cetera}? (2) Can the ideas of \alg{} be extended to more general algorithms that do not rely on Bellman error minimization? (3) For the robustness guarantee, the global completeness $\xi$ is potentially much worse than $\appr(\FF)$. Is it possible to achieve a robust oracle inequality of the form $\OO(\min_{k} \sqrt{\appr(\FF_k) + { \log |\FF_k| / n}})$ when $k_*$ does not exist? We believe these questions are of great practical and theoretical importance for understanding how to effectively evaluate and select models in offline RL. 
	
\begin{ack}
We thank Annie Xie and Yannis Flet-Berliac for help and advice with experiments and anonymous reviewers for their valuable feedback. JNL is supported by the NSF GRFP. This work was also supported in part by NSF Grant \#2112926.
\end{ack}

\bibliographystyle{plainnat}
\bibliography{main}	
	\raggedbottom

\section*{Checklist}

\begin{enumerate}

\item For all authors...
\begin{enumerate}
  \item Do the main claims made in the abstract and introduction accurately reflect the paper's contributions and scope?
    \answerYes{}
  \item Did you describe the limitations of your work?
    \answerYes{} See Introduction, Discussion, and discussions of assumptions and after every theorem statement.
  \item Did you discuss any potential negative societal impacts of your work?
    \answerNA{} The work is theoretical in nature.
  \item Have you read the ethics review guidelines and ensured that your paper conforms to them?
    \answerYes{}
\end{enumerate}

\item If you are including theoretical results...
\begin{enumerate}
  \item Did you state the full set of assumptions of all theoretical results?
    \answerYes{} See preliminaries and before theorem statements.
        \item Did you include complete proofs of all theoretical results?
    \answerYes{} See appendix.
\end{enumerate}

\item If you ran experiments...
\begin{enumerate}
  \item Did you include the code, data, and instructions needed to reproduce the main experimental results (either in the supplemental material or as a URL)?
    \answerYes{}
  \item Did you specify all the training details (e.g., data splits, hyperparameters, how they were chosen)?
    \answerYes{} See Empirical Results section and Appendix~\ref{app::exp}.
        \item Did you report error bars (e.g., with respect to the random seed after running experiments multiple times)?
    \answerYes{}
        \item Did you include the total amount of compute and the type of resources used (e.g., type of GPUs, internal cluster, or cloud provider)?
    \answerYes{} See Appendix~\ref{app::exp}.
\end{enumerate}

\item If you are using existing assets (e.g., code, data, models) or curating/releasing new assets...
\begin{enumerate}
  \item If your work uses existing assets, did you cite the creators?
    \answerYes{}
  \item Did you mention the license of the assets?
    \answerYes{}
  \item Did you include any new assets either in the supplemental material or as a URL?
    \answerNA{}
  \item Did you discuss whether and how consent was obtained from people whose data you're using/curating?
    \answerNA{}
  \item Did you discuss whether the data you are using/curating contains personally identifiable information or offensive content?
    \answerNA{}
\end{enumerate}

\item If you used crowdsourcing or conducted research with human subjects...
\begin{enumerate}
  \item Did you include the full text of instructions given to participants and screenshots, if applicable?
    \answerNA{}
  \item Did you describe any potential participant risks, with links to Institutional Review Board (IRB) approvals, if applicable?
    \answerNA{}
  \item Did you include the estimated hourly wage paid to participants and the total amount spent on participant compensation?
    \answerNA{}
\end{enumerate}

\end{enumerate}
	
	\pagebreak

	\appendix

	\section{On the Nestedness of Model Classes}\label{app::hardness}
	
	Throughout this work, we assume that the given model classes are nested in the sense that $\FF_1 \subset \ldots \subset \FF_M$. While this is a very common problem setting in both supervised learning and reinforcement learning, it can cause theoretical issues with completeness since adding functions to a model class can actually increase in the completeness error unlike realizability-based assumptions. Despite this, the following proposition shows that no type of model selection bound is possible without this nestedness in general.
	
	\begin{proposition}
	   There exists a family of MDPs, a collection of model classes $\FF_1$ and $\FF_2$, and a data distribution $\mu = \mu_1, \ldots, \mu_H$  with $\CC(\mu) = \Theta(1)$ such that (1) either $\FF_1$ or $\FF_2$ complete for any MDP in the family and $|\FF_i| = \Theta(1)$ for $i \in \{1, 2\}$, and (2)
	   any algorithm that outputs $\hat \pi$ with $v(\pi^*) - \E\left[ v(\hat \pi) \right] \leq C_1 $ must use at least $C_2 S^{1/3}$ samples where $0 < C_2$ and $0< C_1 < 1$ are constants.
	\end{proposition}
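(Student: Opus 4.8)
The plan is to prove this as a minimax (information-theoretic) lower bound via Le Cam's two-point method. I would construct two MDPs $M_1$ and $M_2$ in the family that induce nearly identical data distributions under $\mu$, so that no estimator can reliably tell them apart from fewer than $C_2 S^{1/3}$ samples (writing $S := |\XX|$), yet whose optimal policies are so different that any single fixed policy incurs regret at least $C_1$ on at least one of them. By design, $\FF_1$ will be complete on $M_1$ and $\FF_2$ complete on $M_2$, with neither $\FF_1 \subset \FF_2$ nor $\FF_2 \subset \FF_1$ (so the nestedness hypothesis is exactly what fails). An oracle told the index of the complete class would then solve either instance with regret $\tilde{\OO}(\sqrt{\CC(\mu)\log|\FF_i|/n}) = \tilde{\OO}(\sqrt{1/n})$, \emph{independent} of $S$, by the single-class bound \eqref{eq::single}; the gap between this $S$-free oracle rate and the $\Omega(S^{1/3})$ sample requirement for any data-driven algorithm is precisely what rules out every model-selection oracle inequality in the non-nested case.

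\textbf{Construction and part (1).} I would take a short horizon (e.g. $H=2$) with an initial context drawn near-uniformly over $S$ states, two actions, and terminal rewards/transitions carrying a planted signal of magnitude $\Delta$ whose aggregate sign is governed by a hidden bit $b \in \{1,2\}$. The classes $\FF_1, \FF_2$ each hold a constant number of candidate $Q$-functions (so $\log|\FF_i| = \Theta(1)$), one tailored to the $b=1$ geometry and the other to $b=2$. The substance of part (1) is to verify the completeness condition $\min_{f \in \FF_b}\|f - T^*_h f'\|_{\mu} = 0$ for every $f' \in \FF_b$ directly on the instance with hidden bit $b$: because the reward/transition structure is piecewise affine in the planted parameters, $T^*_h$ maps the finite class into itself on that instance. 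Keeping $\mu$ near-uniform over states and actions ensures $\sup_{h,x,a,\pi} P^\pi_h(x,a)/\mu_h(x,a) = \Theta(1)$, i.e. $\CC(\mu) = \Theta(1)$. Finally I would check that committing to the policy suited to $b' \neq b$ forces regret $\geq C_1$ on the instance with bit $b$.

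\textbf{Indistinguishability and the $S^{1/3}$ balance (part (2)).} Given the two-point reduction, the lower bound follows from Le Cam: any algorithm using $n$ i.i.d. samples errs on the hidden bit with probability at least $\tfrac12\big(1 - \mathrm{TV}(P^n_{M_1}, P^n_{M_2})\big)$, and by Pinsker $\mathrm{TV} \le \sqrt{\tfrac12\,\kl(P^n_{M_1}\|P^n_{M_2})}$ with $\kl(P^n_{M_1}\|P^n_{M_2}) = n\cdot \kl(P_{M_1}\|P_{M_2})$ by the product structure. The planted signal is \emph{diluted} across the $S$ states (and/or placed on a low-probability transition) so that a single sample carries only $O(\Delta^2 S^{-\gamma})$ information about $b$, while a wrong guess still costs $\Theta(\Delta)$ regret. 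Indistinguishability for $n$ samples then requires $n\,\Delta^2 S^{-\gamma} = O(1)$, i.e. $n = O(S^{\gamma}/\Delta^2)$; imposing simultaneously that the regret gap be a constant, that $\CC(\mu)=\Theta(1)$, and that completeness remains valid fixes the dilution, and it is the joint effect of these constraints that produces the cube-root exponent $\gamma = 1/3$, giving the threshold $n = \Omega(S^{1/3})$ below which the error—and hence the regret—cannot be driven below $C_1$.

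\textbf{Main obstacle.} The delicate part is engineering the planted signal to simultaneously (i) flip which constant-size class is Bellman-complete, (ii) leave the \emph{other} class genuinely incomplete (so there is something that must be detected), and (iii) be information-theoretically hard to detect at the $S^{1/3}$ scale, all while holding $\CC(\mu)=\Theta(1)$. Completeness is a fragile, global property that must hold for \emph{every} $f'$ in the class under the $\max$-nonlinearity of $T^*_h$, and the double-sampling phenomenon of Section~\ref{sec::limitations} means the learner cannot separate genuine Bellman-error (bias) from conditional variance of the regression targets $r + f_{h+1}(x')$. I expect the crux to be designing a gadget whose precise bias-versus-variance ambiguity is what forces the $S^{1/3}$ detection cost, and then carrying out the completeness bookkeeping for both instances.
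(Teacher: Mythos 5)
Your proposal takes a fundamentally different route from the paper, and the route has a gap that is not just unfinished bookkeeping. The paper does not construct a hard instance at all: its proof is a short black-box reduction to the lower bound of \cite{foster2021offline}. One takes their family of MDPs with $H=3$ and two \emph{singleton} time-varying classes $\FF_1, \FF_2$ whose union realizes $Q^\pi$ for every $\pi$; a singleton class containing $Q^*$ is automatically complete (since $Q^* = T^* Q^*$), which gives condition (1), and their $\Omega(S^{1/3})$ sample-complexity bound gives condition (2) immediately, since an $S$-free oracle inequality would contradict it. Everything you flag as the "main obstacle" --- engineering the gadget, verifying completeness under the $\max$-nonlinearity, and deriving the exponent --- is precisely the content of that cited result; in your write-up the cube-root exponent is asserted ("the joint effect of these constraints produces $\gamma = 1/3$") rather than derived, so the quantitative heart of part (2) is missing even on its own terms.

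More importantly, the specific tool you commit to --- Le Cam's two-point method --- provably cannot yield any lower bound growing with $S$ in this setting. Suppose $M_1, M_2$ share the data distribution $\mu$ with $\CC(\mu) \le C$, and let $D := \sum_{h} \E_{\mu_h}\left[ |r_1 - r_2| + H \cdot \mathrm{TV}\left(P_1(\cdot|x,a), P_2(\cdot|x,a)\right) \right]$. A simulation-lemma argument combined with concentrability (the occupancy measure of \emph{any} policy is dominated by $C\mu$) gives $|v_1(\pi) - v_2(\pi)| \le C D$ for every $\pi$, and hence $v_1(\pi_1^*) - v_1(\pi_2^*) \le 2CD$. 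So forcing every policy to incur regret at least $C_1$ on one of the two instances requires $D \ge C_1/(2C) = \Omega(1)$, no matter how the signal is placed. But then the instances are distinguishable from $O(1)$ samples: for the transition part, Pinsker and Jensen give per-sample $\kl \ge 2\, \E_\mu[\mathrm{TV}^2] \ge 2 \left(\E_\mu[\mathrm{TV}]\right)^2 = \Omega(D^2/H^4)$, and for the (deterministic) reward part a single observed discrepancy is conclusive and occurs with per-sample probability $\gtrsim \E_\mu |r_1 - r_2|$. Dilution cannot evade this: spreading or localizing the signal lowers $D$ and the per-sample information together, and concentrability caps exactly the "funneling into rare states" that would let a small-$D$ perturbation produce a constant value gap. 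In other words, with $\CC(\mu) = \Theta(1)$ and $H = O(1)$, constant policy-incompatibility forces constant per-sample information, so two hypotheses can never require $\Omega(S^{1/3})$ samples. The known hardness necessarily uses an ensemble of many alternatives with a Fano/Assouad-type argument --- which is what the construction of \cite{foster2021offline} supplies, and why the paper cites it rather than rebuilding it.
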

	
	\begin{proof}
	The proof is a simple consequence of a recent impossibility result from \cite{foster2021offline} for problems with concentrability but without completeness. Theirs is summarized as follows:
	
	Let $H = 3$.
	Let $\FF_1 = (\FF_1^h)_{h \in [H]}$ and $\FF_2 = (\FF_2^h)_{h \in [H]}$ be time-varying model classes such that each contains a single function, $f_1^h$ and $f_2^h$ respectively for each $h$. The result of \cite{foster2021offline} shows that, defining $\FF^h = \FF_1^h  \cup \FF_2^h$, there are a family of MDP models $\MM$ and functions $f_1^h$ and $f_2^h$ such that (1) the value function $Q^\pi_h$ of any policy $\pi$ is realized in $\FF^h$ for all $h$ and any algorithm that outputs $\hat \pi$ with $v(\pi^*) - \E\left[ v(\hat \pi) \right] \leq C_1$ must use at least $C_2 S^{1/3}$ samples for constants $C_1, C_2$. 
	
	Thus, for any MDP model in $\MM$, either $\FF_1$ or $\FF_2$ satisfies completeness by the realizability condition above. A model selection oracle inequality should then ideally yield $v(\pi^*) - \E\left[ v(\hat \pi) \right] = \tilde \OO(\sqrt{1/n})$ since $|\FF_1^h| = 1$ and $|\FF_2^h| = 1$. However, this would contradiction the lower bound result of \cite{foster2021offline}, which requires at least $\Omega(S^{1/3})$ samples to achieve constant error.
	\end{proof}

	Though the argument is simple, we remark on its significance. Our model selection objectives outlined in Section~\ref{sec::problem} suggest that we should aim to achieve $v(\pi^*) - \E\left[ v(\hat \pi) \right] = \tilde \OO \left(\sqrt{ \CC(\mu) \log \FF_{k_*} \over n }\right)$ for any of the MDPs in the family, where $n$ is the number of samples in the dataset and $k_*$ is the index of the class that is complete for the given MDP. This is because it is guaranteed in the first condition that at least one of the classes is complete and realizes $Q^*$.
	
	However, the proposition shows that we will need at least $\poly(S)$ samples to achieve any non-trivial regret bound, precluding the model selection objective since $S$ can be much larger than $\log \FF_{k_*}$.
	The proposition also ensures that $|\FF_i| = \Theta(1)$ (in fact the size of both in the proof is simply $|\FF_i| = 1$), which is to say that the hardness is not due to the inherent complexity of the model classes. 
	
	Note that BVFT \citep{xie2021batch} does not contradict this hardness result for the same reason that it does not contradict the result of \cite{foster2021offline}: BVFT leverages a stronger coverage assumption.

	\section{Proof of Theorem~\ref{thm::main}}\label{app::main-proof}
	
	\subsection{Proof Sketch}\label{app::proof-sketch}

	  Having outlined the intuition behind the algorithm and generalization error test in Section~\ref{sec::algorithm}, we will now sketch the proof in a simplified setting so that the primary mechanism can be seen in a slightly more formal way. We restrict the sketch to the setting where $M = 2$ and $k_* \in \{1, 2\}$ exists. For such a setting the definition of $\tol_{n}(\FF_1, \FF_2)$ will be excessively large, but sufficient nonetheless to prove our desired oracle inequality. We define the following quantities:
	  \begin{align*}
	      L_h(f,g) & := \E_{\mu_h} \left( f(x, a) - r- g(x') \right)^2 \\
	      L_h^*(g) & := \inf_{f}  L_h(f, g)
	  \end{align*}
	  where the $\inf$ in the second line is over all measurable functions. Note that this makes $L_h^*(g)$ the \textit{irreducible} error of the regression problem which is actually achieved by $f = T^*g$.
	For the purposes of exposition, we will take $\train = n$ and assume that the validation error $\tilde L_h(f, g)$ exactly equals its expectation $
	L_h(f, g) := \E_{\mu_h}  \left( f(x, a) - r  -  g(x') \right)^2$. For the sketch only, we will ignore dependence on $H$ and we will also assume that all necessary concentration inequalities hold with high probability\footnote{Eventually, in the main proof, care will have to be taken to ensure these events to occur with high probability at the expense of logarithmic factors.}. That is, the base algorithm returns functions $f^k = (f^k_h)$ such that
	\begin{align}\label{eq::sketch-base-alg-bound}
	    \| f^k_h - T^*_h f^k_{h + 1} \|^2_{\mu_h} \leq  L \times \appr(\FF_k) +  \omega_{n, \delta}(\FF_k)
	\end{align}
	and the empirical minimizers $g^2 = (g^2_h)$ in Algorithm~\ref{alg::theory} satisfy
	\begin{align*}
	    L_h(g_h^{2}, f^1_{ h +1}) - L_h^* ( f^1_{ h +1}) & \leq \tilde \OO\left( \appr(\FF_2)  + { \log |\FF_2 | \over n } \right) 
	\end{align*}

	 We will now break the analysis down into cases:
	\begin{itemize}
	    \item If it happens that $k_* = 1$ then we will show that the test will not fail and the correct class $k_*= 1$ will always be returned by Algorithm~\ref{alg::theory}. Note that in this case $\appr(\FF_1) = 0$ by definition of $k_*$. Thus, \eqref{eq::sketch-base-alg-bound} implies that for all $h \in [H]$:
	\begin{align*}
    L_h(f^1_h, f^1_{h + 1})  - L^*_h(f^1_{h + 1}) = \| f^1_h - T^*_h f^1_{h +1}\|_{\mu_h}^2 
	& \leq \omega_{n, \delta}(\FF_1)
	\end{align*}
	
	Then, an algorithm that reliably picks $k_* = 1$ should be able to \textit{tolerate} generalization error
	 on the order of at least $\omega_{n, \delta}(\FF_1)$.  This motivates our definition of $\tol_n(\FF_1, \FF_2)$ in Line~\ref{line::tolerance} of Algorithm~\ref{alg::theory}, which ensures that $\tol_n(\FF_1, \FF_2) \geq \omega_{n, \delta}(\FF_1)$\footnote{Factors on $\delta$ (due to union bounds to handle the high probability events) are omitted in the sketch for clarity.}. Then, when the algorithm reaches the generalization test in Line~\ref{line::gen-test}, it will compare  the error in $L_h(f^1_h, f^1_{h + 1})  - L^*_h(f^1_{h + 1})$ to the error in $L_h(g^2_h, f^1_{h + 1}) - L^*_h(f^1_{h + 1})$. Since  $\tol_n(\FF_1, \FF_2) \geq \omega_{n, \delta}(\FF_1)$ and $L_h(f^1_h, f^1_{h + 1})  - L^*_h(f^1_{h + 1}) \leq \omega_{n, \delta}(\FF_1)$ (because $k_* = 1)$, we will always have that
	\begin{align*}
	    L_h(g^2_h, f^1_{h + 1}) - L^*_h(f^1_{h + 1}) & \geq L_h(f^1_h, f^1_{h + 1}) - L^*_h(f^1_{h + 1}) - \tol_n(\FF_1, \FF_2)
	\end{align*}
	And, by adding $L^*_h(f^1_{h + 1}) $ to both sides of the above display, we see that the generalization test in Line~\ref{line::gen-test} will never fail:
	\begin{align*}
	    L_h(g^2_h, f^1_{h + 1}) & \geq L_h(f^1_h, f^1_{h + 1})  - \tol_n(\FF_1, \FF_2)
	\end{align*}
	meaning that Algorithm~\ref{alg::theory} will never make the switch from $k = 1$ to $k = 2$ when $k_* = 1$, so the correct model class is returned and the error is then trivially bounded as
	\begin{align}
	    \label{eq::kstar1}
	    \| f^1_h - T^*_h f^1_{h + 1} \|_{\mu_h}^2 \leq \omega_{n, \delta} (\FF_1)
	\end{align}
	
	\item If $k_*  = 2$ and the switch to $k = 2$ is made, then the correct model class is returned and we immediately have the error bound $\| f^2_h - T^*_h f^2_{h + 1} \|_{\mu_h}^2 \leq \omega_{n, \delta} (\FF_2)$. 
	\item However, if $k_* = 2$ and the switch is not made (meaning Algorithm~\ref{alg::theory} returns $k = 1$), we can show that the error cannot be much worse than the error of $k_*$. To do this, we will use the fact that the generalization test in Line~\ref{line::gen-test} has (wrongly) succeeded in order to bound the error of $L_h(f^1_h, f^1_{h + 1}) - L^*_h(f^1_{h + 1})$ in terms of the error of $L_h(g^2_h, f^1_{h + 1}) - L^*_h(f^1_{h + 1})$ plus additional terms due to the tolerance:
	That is, for any $h \in [H]$
	\begin{align}\begin{split}\label{eq::sketcheq1}
	\| f_h^1 - T^*_h f^1_{h + 1} \|_{\mu_h}^2  & = L_h(f^1_h, f^1_{h + 1}) - L^*_h(f^1_{h + 1}) \\ & \leq L_h(g^2_h, f^1_{h + 1}) - L^*_h(f^1_{h + 1})+ \tol_n(\FF_1, \FF_2) 
	\end{split}
	\end{align}
	Recall that since $k_* = 2$, there is no approximation error for this class so $L_h(g^2_h, f^1_{h + 1}) - L^*_h(f^1_{h + 1}) \leq \tilde \OO\left( { \log |\FF_2 | \over n} \right)$. Finally, we can apply the definition of $\tol_n(\FF_1, \FF_2)$ as well as monotonicity of $\omega_n(\FF_1) \leq \omega_n(\FF_2)$ so that \eqref{eq::sketcheq1} can further be bounded as 
	\begin{align}\label{eq::kstar2}
	\| f_h^1 - T^*_h f^1_{h + 1} \|_{\mu_h}^2 & \leq \tilde \OO\left(  { \log |\FF_2 | \over n } +  \omega_{n, \delta} (\FF_2) \right)
	\end{align}
	Therefore, since $\FF_1$ did not fail the generalization test, we can actually use this to our advantage to  say that its error is not much worse than $\FF_2$ even though $k_* = 2$.
	\end{itemize}
	
	By combining the results of the case when $k_* = 1$ (where we showed \eqref{eq::kstar1} holds) and the case when $k_* = 2$ (where we showed that \eqref{eq::kstar2} holds), we have managed to show that the index $k$ returned by Algorithm~\ref{alg::theory} will satisfy
	\begin{align*}
	 \| f_h^k - T^* f_{h + 1}^k \|_{\mu_h}^2 =  \tilde \OO\left(  { \log |\FF_{k_*} | \over n}  + \omega_{n, \delta}(\FF_{k_*} )  \right)
	\end{align*}
	Appealing to the performance difference lemma (Lemma~\ref{lem::perf-diff}), we are able to guarantee that 
	\begin{align*}
	 \regret(\hat \pi) = \tilde \OO\left( \sqrt{ \CC(\mu)   \left({ \log |\FF_{k_*} | \over n}  + \omega_{n, \delta}(\FF_{k_*} )  \right) } \right)
	\end{align*}

	\subsection{Concentration Inequalities}
	We now turn to the formal proof of Theorem~\ref{thm::main}. In order to make the analysis easier, we state another version of the algorithm, which is more notation-heavy but also more precise so that we can easily refer objects in the analysis at different indices. To be clear, the algorithms are identical -- the notation has just been augmented to include indices and other modifiers for clarity.

\begin{figure}
	\begin{algorithm}[H]
		\caption{ Model Selection via Bellman Error (\alg{}) with indexing notation }\label{alg::theory-notation}
		\begin{algorithmic}[1]
			
			\STATE \textbf{Input}: Offline dataset $D = (D_h)$ of $n$ samples for each $h \in [H]$, Base algorithm $\BB$, function classes $\FF_1\subset \ldots \subset \FF_M$, failure probability $\delta \leq 1/e$. 
			
			\STATE Let $n_{\text{train}} =\ceil{ 0.8 \cdot n}$ and $n_{\text{valid}} = \floor{0.2 \cdot n}$ and split the dataset $D$ randomly into $D_{\text{train}} = ( D_{\text{train}, h})$ of $n_{\text{train}}$ samples and $D_{\text{valid}} = (D_{\text{valid}, h} ) $ of $n_{\text{valid}}$ samples for each $h \in [H]$.	
            \STATE Set $\zeta := {  96 H^2 \log (16M^2H /\delta ) \over \valid } $.

			\STATE Initialize $k  \leftarrow 1$.

			\WHILE{$k < M$} \label{line::while2}
			
				\STATE $f^k := (f^k_h)_{h \in [H]} \leftarrow \BB(D_{\text{train}}, \FF_k, \delta/4M)$

				\FOR{$k' \leftarrow k + 1, \ldots, M$}
				    
				    \STATE Set $\balpha_k :=  \max \left\{ \omega_{\train, \delta/4M}(\FF_k), { 200 H^2 \log(8 M^2 H |\FF_{k} | /\delta ) \over \train  } \right\}$ for all $k \in [M]$
				    
				    \STATE Set $ \tol_{\train}(\FF_k, \FF_{k'}) := 2 \delta_{k'} + 2 \zeta + \omega_{\train, \delta /4M} (\FF_k) $ for all $k < k'$. \label{line::tolerance2}
				
					\STATE Minimize squared loss on training set for all $h \in [H]$ with regression targets from class $k$:
					\vspace{-1mm}
					\begin{align}
					 g^{k'}_h \leftarrow  \argmin_{g \in \FF_{k'}} \quad \hat L_h(g, f_{h + 1}^{k}) := {1 \over \train} \sum_{(x, a, r, x') \in D_{\text{train}, h} }  \left( g(x, a)  - r -  f^{k}_{h + 1} (x') \right)^2
					 	\end{align}	
					\vspace{-2mm}
					\STATE Compute squared loss estimator using the validation set for all $h \in [H]$ as a function of $f$:
					\vspace{-1mm}
					\begin{align}\label{eq::validation-loss2}
					\tilde L_h (f, f^k_{h + 1}) = {1 \over \valid } \sum_{(x, a,r,x') \in D_{\text{valid}, h} } \left(  f(x_h, a_h) -  r_h -  f^k_{h + 1}(x') \right)^2
					\end{align}	
					\vspace{-2mm}
					
				\IF{$\tilde L(g^{k'}_h, f^k_{h + 1}) < \tilde L(f^{k}_h, f^k_{h + 1}) -\tol_{\train}(\FF_{k}, \FF_{k'} )$ for any $h \in [H]$} \label{line::gen-test2}

						\STATE $k \leftarrow k + 1$
						\STATE goto Line~\ref{line::while2}. 
					\ENDIF
				\ENDFOR
			
				\STATE goto Line~\ref{line::return2}
			\ENDWHILE
			
			\RETURN $\hat \pi = \left(\pi_{f^{k}_h} \right)_{h \in [H]}$ \label{line::return2}

		\end{algorithmic}
		
	\end{algorithm}

\end{figure}

	We require several basic components in order for the final model selection bound to hold. The first few are concentration results concerning the datasets. These will allow us to prove generalization error bounds for each of the classes as well as to obtain good estimates of the regression error via the validation set.

	Recall some useful shorthand notation to represent the true and empirical loss functions. For any measurable functions $f, g \in (\XX \times \AA \to \R)$ and a training dataset $D = (D_h)_{h \in [H]}$ of $n$ samples for each $h$ and a validation dataset $D' = (D'_h)_{h \in [H]}$ of $m$ samples for each $h$, we define
	\begin{align}
	L_h(f, g) & = \E_{\mu_h}  \left( f(x, a) - r - g(x') \right)^2 \\
	L_h^*(g) & = \inf_f L_h(f,g) \\
	\hat L_h(f,g) &  = { 1\over n } \sum_{ (x,a,r, x') \in D_h }   \left( f(x_{i, h}, a_{i, h}) - r(x_{i, h}, a_{i, h}) - g(x'_{i, h})  \right)^2 \\
	\tilde L_h(f,g) & = { 1\over m } \sum_{ (x, a,r, x',) \in D'_{h}}   \left( f(x, a) - r - g(x')  \right)^2
	\end{align}
	
	where, as in the proof sketch, the $\inf$ in the second line is also over all measurable functions. 
	Finally, recall that, for any functions $f, g \in (\XX \times \AA \to \R)$, we have defined
	\begin{align*}
	    \| f -T^* g\|_{\mu_h}^2 := \E_{\mu_h} \left( f(x,a) - T^*g(x, a) \right)^2
	\end{align*}
	It is easy to see that this is equal to $L_h(f,g)$ without the irreducible erorr: $\| f - T^* g\|_{\mu_h}^2 = L_h(f, g) - L_h^*(g)$.

	To proceed with the concentration analysis, we will show that all the necessary events will hold simultaneously with high probability. This requires defining some additional notation.
	
	We let $f^k = (f^k_h)_{h \in [H]} \leftarrow \BB(D, \FF_k)$ for each $k \in [M]$. Then, for each $h \in [H]$, we define the empirical minimizers for a larger class $k'$ with the same regression target as follows:
	\begin{align*}
	    g^{k' \to k}_{h} = \argmin_{g \in \FF_{k'}} \hat L_h(g, f^k_{h  +1} )
	\end{align*}
	for all $k' \geq k$. Not all of these need be computed in the execution of Algorithm~\ref{alg::theory}, but we will analyze them all for the sake of simplicity in the concentration analysis. We define the following event that guarantees all of these value function approximators achieve their desired errors simultaneously up to log factors.
	
	\begin{align*}\EE_1 = \bigcap_{k \in [M], k' > k, h \in [H]} \left\{  \| g^{k' \to k}_h - T^*_h f^k_{h +1 }  \|_{\mu_h}^2 \leq 3 \appr(\FF_{k'}) +  {C_1 H^2 \log( 8 M^2 H \abs{\FF_{k'}} /\delta) \over n  }   \right\} \end{align*}

	where $C_1 > 0$ is a constant to be determined.
	The next event ensures that the base algorithm actually achieves its guarantees from Definition~\ref{def::base-alg}.
	\begin{align*}
	    \EE_2 = \bigcap_{k \in [M], h \in [H]} \left\{ || f^k_h - T^*_h f^k_{h + 1} \|_{\mu}^2 \leq \beta \cdot \appr(\FF_{k}) + w_{n, \delta/ 4M}(\FF_k)   \right\}
	\end{align*}
    The above $\EE_2$ occurs with probability at least $1 - {\delta \over 4}$ essentially by definition of the base algorithm. The last event that we are interested in relates the true loss $L_h$ to the validation loss $\tilde L_h$ on the independent dataset $D'$. We define $\tilde L_h^*(f_{h + 1}^k) :=\tilde L_h(T^*_hf_{h + 1}^k, f_{h + 1}^k)$. The events are given by
    \begin{align*}
        \EE_3 &  =\bigcap_{k \in [M], k' > k, h \in [H]} \left\{ \tilde L_h(g^{k' \to k}_h, f^k_{h + 1} )  - \tilde L^*_h(f^k_{h +1})  \leq 2 \left(  L_h(g^{k' \to k}_h, f^k_{h + 1} )  -  L^*_h(f^k_{h +1})  \right) + { C_3 H^2 \log(16HM^2 /\delta) \over m   }  \right\}  \\
        & \quad \bigcap \bigcap_{k \in [M], h \in [H]} \left\{ \tilde L_h(f^k_h, f^k_{h + 1} )  - \tilde L^*_h(f^k_{h +1})  \leq 2 \left(  L_h(f^k_h, f^k_{h + 1} )  -  L^*_h(f^k_{h +1})  \right) + { C_3 H^2 \log(16HM^2 /\delta) \over m   }  \right\} \\
        \EE_4 & =  \bigcap_{k \in [M], k' > k, h \in [H]} \left\{  L_h(g^{k' \to k}_h, f^k_{h + 1} )  -  L^*_h(f^k_{h +1})  \leq 2 \left(  \tilde L_h(g^{k' \to k}_h, f^k_{h + 1} )  -  \tilde L^*_h(f^k_{h +1})  \right) + {C_4 H^2 \log(16HM^2 /\delta) \over m   }  \right\}  \\
        & \quad \bigcap \bigcap_{k \in [M], h \in [H]} \left\{  L_h(f^k_h, f^k_{h + 1} )  -  L^*_h(f^k_{h +1})  \leq 2 \left(  \tilde L_h(f^k_h, f^k_{h + 1} )  -  \tilde L^*_h(f^k_{h +1})  \right) + {C_4 H^2 \log(16HM^2 /\delta) \over m   }  \right\}
    \end{align*}
    where $C_3, C_4 > 0$ are constants to be determined. We will prove the following guarantee.
    
    \begin{theorem}\label{thm::concentration}
    Let $\EE = \bigcap_{i = 1}^4  \EE_i$. Then, $P(\EE) \geq 1 - \delta$.
    \end{theorem}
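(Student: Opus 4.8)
The plan is to prove $P(\EE)\geq 1-\delta$ by establishing $P(\EE_i^c)\leq \delta/4$ for each $i\in\{1,2,3,4\}$ and taking a union bound, so that $P(\EE^c)\leq\sum_i P(\EE_i^c)\leq\delta$. Two structurally different concentration mechanisms are at play, and separating them is the crux. The event $\EE_1$ (generalization of the least-squares minimizers $g^{k'\to k}_h$ computed on the \emph{training} set) requires a bound that is \emph{uniform} over the class $\FF_{k'}$, which is exactly what produces the $\log|\FF_{k'}|$ term. By contrast, $\EE_3$ and $\EE_4$ compare the \emph{validation} loss $\tilde L_h$ of \emph{already-selected} functions to their population loss $L_h$, and since the validation set is independent of the training set these functions are fixed after conditioning, so only a union over the $\OO(M^2 H)$ index tuples $(k,k',h)$ is needed; this is precisely why $\EE_3,\EE_4$ carry $\log(M^2H/\delta)$ rather than $\log|\FF|$. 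The event $\EE_2$ holds with probability $\geq 1-\delta/4$ directly from Definition~\ref{def::base-alg} together with a union bound over $k\in[M]$, since each call $\BB(\cdot,\FF_k,\delta/4M)$ fails with probability at most $\delta/4M$.

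For $\EE_1$, I would fix $(k,k',h)$ and condition on $f^k_{h+1}$. By the base-algorithm property that $f_{h+1}$ is independent of $D_h$ (Definition~\ref{def::base-alg}), the samples in $D_{\text{train},h}$ are i.i.d.\ from $\mu_h$ and independent of the regression target $r+f^k_{h+1}(x')$, so $g^{k'\to k}_h$ is a genuine empirical risk minimizer over $\FF_{k'}$ for a fixed bounded target. I would then invoke the standard fast-rate bound for square-loss regression over a finite class: since all functions and targets are bounded in $[0,\OO(H)]$, the excess-loss variable has variance controlled by its mean, and a Bernstein/localization argument yields $\|g^{k'\to k}_h - T^*_h f^k_{h+1}\|^2_{\mu_h} = L_h(g^{k'\to k}_h,f^k_{h+1})-L^*_h(f^k_{h+1}) \lesssim \appr(\FF_{k'}) + H^2\log(|\FF_{k'}|/\delta')/\train$, where the approximation term is at most $\appr(\FF_{k'})$ because $f^k_{h+1}\in\FF_k\subset\FF_{k'}$. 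Setting $\delta'=\delta/(8M^2H)$ and choosing $C_1$ to absorb the multiplicative constant (which is where the factor $3$ on $\appr(\FF_{k'})$ originates) gives $P(\EE_1^c)\leq\delta/4$ after a union over the $\OO(M^2H)$ tuples.

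For $\EE_3$ and $\EE_4$, I would condition on the entire training dataset $D_{\text{train}}$, so that $f^k_h$, $f^k_{h+1}$, $g^{k'\to k}_h$, and hence $T^*_h f^k_{h+1}$ are all fixed. The key algebraic observation is that for fixed $f$ and $g=f^k_{h+1}$ the per-sample excess-loss variable $Z = (f(x,a)-r-g(x'))^2 - (T^*_h g(x,a)-r-g(x'))^2$ factors as $(f-T^*_h g)\,(f+T^*_h g-2r-2g(x'))$, has mean $\E_{\mu_h}[Z]=\|f-T^*_h g\|^2_{\mu_h}=L_h(f,g)-L^*_h(g)$ (using that $T^*_h g$ is the conditional-mean minimizer of the square loss), is bounded in magnitude by $\OO(H^2)$, and has second moment $\E[Z^2]\leq \OO(H^2)\,\E[Z]$. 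Applying Bernstein's inequality to the validation average $\tilde L_h(f,g)-\tilde L^*_h(g)=\tfrac{1}{\valid}\sum_i Z_i$ and using AM-GM to split the variance term then yields both the $\EE_3$-type inequality $\tilde L_h-\tilde L^*_h\leq 2(L_h-L^*_h)+\OO(H^2\log(1/\delta')/\valid)$ and the reverse $\EE_4$-type inequality. A union bound over the $\OO(M^2H)$ tuples with $\delta'=\delta/(16M^2H)$ and suitable $C_3,C_4$ gives $P(\EE_3^c),P(\EE_4^c)\leq\delta/4$ each.

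The main obstacle is the careful independence/conditioning bookkeeping rather than any single hard inequality: one must verify that $f^k_{h+1}$ is genuinely independent of the step-$h$ training data (so that $\EE_1$'s least-squares bound applies with a fixed target), that the validation set is independent of every training-computed function (so that $\EE_3,\EE_4$ need no union over the function class), and that the variance-bounded-by-mean property holds for the square-loss excess $Z$. This last fact is what upgrades the naive $1/\sqrt{\valid}$ additive deviation into the multiplicative one-sided forms with the factor $2$ that the later regret proof exploits. The remaining work is purely the selection of the constants $C_1,C_3,C_4$ and the per-event failure budgets so that the four union bounds sum to $\delta$.
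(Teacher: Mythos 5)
Your proposal is correct and follows essentially the same route as the paper: $\EE_2$ directly from Definition~\ref{def::base-alg} with a union bound over $k$; $\EE_1$ by conditioning on $f^k_{h+1}$ (independent of $D_{\text{train},h}$) and applying a Bernstein-based fast-rate bound uniform over the finite class $\FF_{k'}$ (the paper's Lemma~\ref{lem::minimization}); and $\EE_3,\EE_4$ by conditioning on the training data and applying Bernstein with the variance-bounded-by-mean property to the fixed functions on the validation set, exactly as in Propositions~\ref{prop::event-minimizer}--\ref{prop::event-validation}. The only (immaterial) difference is bookkeeping: you budget $\delta/4$ to $\EE_3$ and $\EE_4$ separately, whereas the paper extracts both one-sided inequalities from a single two-sided Bernstein application, giving $P(\EE_3\cap\EE_4)\geq 1-\delta/2$ without a union bound between them.
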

	
	To prove this result, we will show that these events occur with high probability. We require several intermediate results, starting simply with Bernstein's inequality.

	\begin{lemma}[Bernstein's Inequality]\label{lem::bernstein}
	    Let $Z_1, \ldots, Z_n$ be a sequence of independent random variables with $\E [Z_i] = 0$, $\sigma^2 = \var(Z_i)$ and $|Z_i | \leq B$. Then, with probability at least $1 - \delta$, for any $\eta > 0$
	    \begin{align*}
	        | \sum_{i \in [n]} Z_i | & \leq \sqrt{ 2 n  \sigma^2 \log (2/\delta)}  + {  B \log (2/\delta) } \\
	        & \leq {  n \sigma^2 \over 2 \eta  } + { B \log (2/\delta)   } +  {  \eta \log (2/\delta ) } 
	    \end{align*}
	\end{lemma}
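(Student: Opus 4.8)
The plan is to prove this by the standard exponential-moment (Chernoff) method. The only substantive ingredient is a moment generating function bound for bounded, mean-zero random variables; everything after that is a routine tail optimization plus one application of AM--GM to pass from the first displayed inequality to the second.

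First I would establish the single-variable MGF bound. Fix $t \in (0, 3/B)$. Since $\E[Z_i] = 0$ and $|Z_i| \leq B$, for every integer $k \geq 2$ we have $\E[Z_i^k] \leq \E[|Z_i|^k] \leq \sigma^2 B^{k-2}$, using $|Z_i|^{k-2} \leq B^{k-2}$ pointwise together with $\E[Z_i^2] = \sigma^2$. Expanding the exponential and discarding the first-order term (which vanishes since $\E[Z_i]=0$) gives
\begin{align*}
\E\left[e^{t Z_i}\right] = 1 + \sum_{k \geq 2} \frac{t^k}{k!}\E[Z_i^k] \leq 1 + \frac{\sigma^2}{B^2}\sum_{k\geq 2}\frac{(tB)^k}{k!} = 1 + \frac{\sigma^2}{B^2}\left(e^{tB} - 1 - tB\right).
\end{align*}
Using $1 + x \leq e^x$ and the elementary inequality $e^{u} - 1 - u \leq \frac{u^2/2}{1 - u/3}$, valid for $0 \leq u < 3$ (with $u = tB$), yields the refined bound $\E[e^{tZ_i}] \leq \exp\left(\frac{t^2 \sigma^2 / 2}{1 - tB/3}\right)$.

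Next I would assemble the tail bound. By independence, $\E[e^{t\sum_i Z_i}] = \prod_i \E[e^{tZ_i}] \leq \exp\left(\frac{n t^2 \sigma^2/2}{1 - tB/3}\right)$, so Markov's inequality applied to $e^{t\sum_i Z_i}$ gives, for every $s > 0$, $P(\sum_i Z_i \geq s) \leq \exp\left(-ts + \frac{n t^2 \sigma^2/2}{1 - tB/3}\right)$. Optimizing the exponent over $t \in (0, 3/B)$ produces the classical Bernstein tail $P(\sum_i Z_i \geq s) \leq \exp\left(-\frac{s^2}{2(n\sigma^2 + Bs/3)}\right)$. Applying the same argument to $-Z_i$ and taking a union bound over the two one-sided events replaces $\delta$ by $\delta/2$ (hence $\log(2/\delta)$ appears) and controls $|\sum_i Z_i|$. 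Setting the right-hand side equal to $\delta/2$, solving the resulting quadratic $s^2 - \tfrac{2B\log(2/\delta)}{3}s - 2n\sigma^2\log(2/\delta) = 0$ for $s$, and bounding $\sqrt{a+b} \leq \sqrt{a} + \sqrt{b}$ together with $\tfrac{2}{3} \leq 1$ gives the first displayed inequality $|\sum_i Z_i| \leq \sqrt{2n\sigma^2 \log(2/\delta)} + B\log(2/\delta)$.

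Finally, the second inequality follows from the first by a single application of AM--GM to the square-root term: writing $\sqrt{2n\sigma^2\log(2/\delta)} = \sqrt{\frac{n\sigma^2}{\eta}\cdot 2\eta\log(2/\delta)}$ and using $\sqrt{xy} \leq \frac{x}{2} + \frac{y}{2}$ with $x = n\sigma^2/\eta$ and $y = 2\eta\log(2/\delta)$ gives $\sqrt{2n\sigma^2\log(2/\delta)} \leq \frac{n\sigma^2}{2\eta} + \eta\log(2/\delta)$ for any $\eta > 0$, while the $B\log(2/\delta)$ term is carried over unchanged. The main (indeed the only nontrivial) obstacle is the single-variable MGF bound: the series truncation and the elementary inequality $e^u - 1 - u \leq \frac{u^2/2}{1-u/3}$ are precisely where both structural hypotheses, boundedness and variance control, enter; the Chernoff optimization, the two-sided union bound, and the AM--GM step are all routine bookkeeping.
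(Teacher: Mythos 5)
Your proposal is correct, and it matches the paper's route: the paper simply cites the standard Bernstein inequality (from Vershynin) for the first displayed bound and obtains the second by the same AM--GM step you use, namely $\sqrt{2n\sigma^2\log(2/\delta)} \leq \frac{n\sigma^2}{2\eta} + \eta\log(2/\delta)$. The only difference is that you supply the textbook Chernoff/MGF derivation of the classical tail in full (MGF bound via $e^u - 1 - u \leq \frac{u^2/2}{1-u/3}$, optimization of $t$, two-sided union bound, and solving the quadratic in $s$), all of which is accurate, so your argument is a self-contained version of what the paper delegates to a citation.
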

	
	\begin{proof}
	    The first inequality is a standard Bernstein inequality found in, for example, \citet{vershynin2018high}. The second inequality follows by applying the AM-GM inequality to the first.
	\end{proof}
	
	    The next lemma is a generalization error bound showing that the Bellman error of a function $f \in \FF$ can be bounded in terms of the excess training loss, the approximation error of $\FF$, and the estimation error which is $\tilde \OO(\log|\FF | / n )$.
		The next lemma shows that the minimizer of the empirical squared loss achieves good generalization error with respect to the optimal function in its class.

	\begin{lemma}\label{lem::minimization}
	    Fix $\FF \subset (\XX \times \AA \to [0, H])$ and $h$. Let $g \in \GG$ be fixed where $\GG \subset \FF$. For $i \in [n]$ and  $(x_i, a_i) \sim \mu_h$ and $x'_i \sim \Pr(\cdot  | x, a)$, define $y_i = r(x_i, a_i) + g(x_i')$. Define $Z_i^f =  \left(f(x_i, a_i) - y_i\right)^2 - \left( f^*(x_i, a_i)  - y_i\right)^2$ where $f^* = \argmin_{f \in \FF} \| f - T^*_h g\|_{\mu_h}^2$. Then, with probability at least $1 - \delta$, for all $f\in \FF$ simultaneously,
	    \begin{align*}
	        \| f - T^*_h g \|_{\mu_h}^2 \leq {2 \over n} \sum_i {Z_i^f} + 3  \| f^* - T^*_h g \|_{\mu_h}^2  + {40 (H + 1)^2 \log (2 \abs{\FF}/\delta) \over n}
	    \end{align*}
	\end{lemma}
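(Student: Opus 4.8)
The plan is to run a one-sided, fast-rate uniform deviation argument based on Bernstein's inequality (Lemma~\ref{lem::bernstein}), exploiting that the regression target has conditional mean exactly $T^*_h g$. First I would record the identity $\E[y_i \mid x_i, a_i] = r(x_i,a_i) + \E_{x' \sim \Pr(\cdot \mid x_i,a_i)}[g(x')] = T^*_h g(x_i,a_i)$. Combined with the bias--variance decomposition $L_h(f,g) = \|f - T^*_h g\|_{\mu_h}^2 + L_h^*(g)$ and the fact that the irreducible term $L_h^*(g)$ is the same for $f$ and $f^*$, this gives $\E[Z_i^f] = L_h(f,g) - L_h(f^*,g) = \|f - T^*_h g\|_{\mu_h}^2 - \|f^* - T^*_h g\|_{\mu_h}^2$. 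Abbreviating $a_f := \|f - T^*_h g\|_{\mu_h}^2$ and $b := \|f^* - T^*_h g\|_{\mu_h}^2$, the target reduces to showing $a_f \le \frac{2}{n}\sum_i Z_i^f + 3b + \OO((H+1)^2 \log(|\FF|/\delta)/n)$ uniformly over $f \in \FF$.

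The crux, and the step I expect to be the main obstacle, is a self-bounding variance estimate that produces the fast $1/n$ rate rather than a slow $1/\sqrt{n}$ one. Factoring $Z_i^f = (f - f^*)(f + f^* - 2y_i)$ pointwise and using $f, f^* \in [0,H]$, $y_i \in [0, H+1]$ bounds the second factor by $2(H+1)$ in magnitude, yielding the range bound $|Z_i^f| \le (H+1)^2$ and the variance bound $\var(Z_i^f) \le \E[(Z_i^f)^2] \le 4(H+1)^2 \|f - f^*\|_{\mu_h}^2$. The triangle inequality $\|f - f^*\|_{\mu_h}^2 \le 2a_f + 2b$ then controls the variance by $8(H+1)^2(a_f + b)$, linear in the very quantity $a_f$ we wish to bound; this localization is what lets the variance term be reabsorbed rather than merely bounded, and it is the reason the estimation error enters at the correct $\log|\FF|/n$ scale.

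Finally I would apply the AM--GM form of Bernstein to the i.i.d.\ sum $\sum_i Z_i^f$ with variance proxy $8(H+1)^2(a_f+b)$ and range $(H+1)^2$, and union bound over the finite class by replacing $\delta$ with $\delta/|\FF|$, so that $\log(2/\delta)$ becomes $\iota := \log(2|\FF|/\delta)$. Taking the free parameter $\eta = 8(H+1)^2$ makes the variance-induced deviation equal to $\frac{n(a_f+b)}{2} + 9(H+1)^2 \iota$; subtracting this from $n(a_f-b) = n\E[Z_i^f]$ and multiplying by $2/n$ gives $\frac{2}{n}\sum_i Z_i^f \ge a_f - 3b - \frac{18(H+1)^2\iota}{n}$, which is exactly the claimed inequality with leading constant $18 \le 40$, so I would simply absorb the slack. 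Since the bound holds for every $f \in \FF$ simultaneously, it applies in particular to the empirical minimizer $\hat f = \argmin_{f\in\FF}\hat L_h(f,g)$, for which $\frac1n\sum_i Z_i^{\hat f} = \hat L_h(\hat f,g) - \hat L_h(f^*,g) \le 0$, recovering the generalization guarantee for the regression minimizer.
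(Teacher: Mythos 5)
Your proposal is correct and follows essentially the same argument as the paper's proof: the same identification $\E[Z_i^f] = \|f - T^*_h g\|_{\mu_h}^2 - \|f^* - T^*_h g\|_{\mu_h}^2$, the same self-bounding variance estimate $\var(Z_i^f) \leq 8(H+1)^2(a_f + b)$ via the factorization of $Z_i^f$, the same AM--GM form of Bernstein with the variance term reabsorbed, and the same union bound over $\FF$. The only difference is the choice of the free parameter ($\eta = 8(H+1)^2$ versus the paper's $16(H+1)^2$), yielding a slightly sharper constant ($18$ versus $40$), which is absorbed into the stated bound exactly as you note.
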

	\begin{proof}
	    
	    We will drop some of the sub- and super-script notation with the understanding that $\E$ means $\E_{\mu_h}$ and $Z_i$ means $Z_i^f$. It is easy to see that $\E [ Z_i] = L(f, g) - L(f^*, g) = \| f - T^* g \|^2_{\mu_h} - \| f^* - T^* g\|^2_{\mu_h}$. Furthermore, we can bound the variance as 
	    \begin{align*}
	        \var(Z_i)  & = \E [Z_i^2] - \E[Z_i]^2  \\
	        & \leq \E Z_i^2 \\
	        &  =\E  \left(  \left(f(x_i, a_i) - y_i\right)^2 - \left( f^*(x_i, a_i)  - y_i\right)^2 \right)^2 \\
	        & = \E \left(  \left(f(x_i, a_i) - f^*(x_i, a_i)\right)^2 + 2 (f(x_i, a_i) - f^*(x_i, a_i) )(f^*(x_i, a_i) - y_i ) \right)^2 \\
	        & = \E \left( f(x_i, a_i) - f^*(x_i, a_i)  \right)^2 \left( f(x_i, a_i) +  f^*(x_i, a_i) - 2y_i  \right)^2 \\
	        & \leq 4 (H  + 1)^2 \| f - f^* \|_{\mu_h}^2 \\
	         & \leq 8  ( H + 1)^2  \left( \| f - T^* g \|_{\mu_h}^2 + \| f^* - T^* g \|_{\mu_h}^2\right) 
	    \end{align*}
	    where we have used the fact that $f(\cdot, \cdot), f(\cdot, \cdot) \in [0, H]$, $y_i \in [0, H + 1]$, and $(a + b)^2 \leq 2a^2 + 2b^2$ for $a, b \in \R$.
	    Using Lemma~\ref{lem::bernstein}, we have that with probability at least $1 - \delta$,
	    \begin{align*}
	        \| f-   T^* g \|_{\mu_h}^2 - \| f^* - T^* g \|_{\mu_h}^2 & = \E[Z_i]   \\
	        & \leq  {1 \over n } \sum_{i} Z_i + { \var(Z_1) \over \eta } + {4(H + 1)^2 \log (2/ \delta) \over n} + { \eta \log (2/\delta) \over n } \\
	        & \leq {1 \over n } \sum_{i} Z_i + { 8  ( H + 1)^2  \left( \| f - T^* g \|_{\mu_h}^2 + \| f^* - T^* g \|_{\mu_h}^2\right)  \over \eta }  \\
	        & \quad + {4(H + 1)^2 \log (2/ \delta) \over n} + { \eta \log (2/\delta) \over n } \\
	        & \leq {1 \over n } \sum_{i} Z_i + { \| f - T^* g \|_{\mu_h}^2  + \| f^* - T^* g \|_{\mu_h}^2  \over 2}  + {20(H + 1)^2 \log (2/ \delta) \over n}
	    \end{align*}
	    where in the last equality we have chosen $\eta = 16 ( H + 1)^2$. Rearranging and then taking the union bound over all $f \in \FF$ gives the result.
	    \end{proof}

	    	Note that if we take $f = \hat f_h$ to be the empirical minimizer of $\hat L_h(\cdot, g)$, then the bound in Lemma~\ref{lem::minimization} becomes
    \begin{align*}
        \| \hat f_h - T^*_h g \|_{\mu_h}^2  & \leq 3  \| f^* - T^*_h g \|_{\mu_h}^2  + {40 (H + 1)^2 \log (2 \abs{\FF}/\delta) \over n} \\
        & \leq 3 \appr(\FF)  + {40 (H + 1)^2 \log (2 \abs{\FF}/\delta) \over n}
    \end{align*}
	    where the last inequality follows because $\GG \subset \FF$. Equipped with these bounds, we are now ready to prove that event $\EE_1$ holds with good probability.
	    
	    \begin{proposition}\label{prop::event-minimizer}
	           $P(\EE_1) \geq 1- {\delta \over 4}$ with the constant $C = 200$.
	    \end{proposition}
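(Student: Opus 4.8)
The plan is to establish the bound defining $\EE_1$ for each triple $(k,k',h)$ separately and then combine them by a union bound. The central observation is that $g^{k' \to k}_h = \argmin_{g \in \FF_{k'}} \hat L_h(g, f^k_{h+1})$ is exactly the empirical squared-loss minimizer over $\FF_{k'}$ against the regression target $y = r + f^k_{h+1}(x')$, which is precisely the situation analyzed by Lemma~\ref{lem::minimization}. The only point requiring care is that Lemma~\ref{lem::minimization} treats the target-defining function as \emph{fixed} (independent of the regression data), whereas here $f^k_{h+1}$ is itself produced by the base algorithm on the training data. This is resolved by condition (2) of Definition~\ref{def::base-alg}, which guarantees that $f^k_{h+1}$ is independent of $D_{\text{train},h}$; conditioning on $f^k_{h+1}$, the samples in $D_{\text{train},h}$ remain i.i.d.\ draws from $\mu_h$, so Lemma~\ref{lem::minimization} applies verbatim with $\FF = \FF_{k'}$ and $g = f^k_{h+1}$.

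First I would invoke the ``Note'' following Lemma~\ref{lem::minimization}: since $g^{k'\to k}_h$ minimizes the empirical loss over $\FF_{k'}$ and $f^* = \argmin_{f \in \FF_{k'}} \| f - T^*_h f^k_{h+1}\|^2_{\mu_h}$ also lies in $\FF_{k'}$, the empirical excess term $\tfrac{2}{n}\sum_i Z_i^{g^{k'\to k}_h}$ equals $2(\hat L_h(g^{k'\to k}_h, f^k_{h+1}) - \hat L_h(f^*, f^k_{h+1})) \le 0$ and drops out, leaving, with probability at least $1 - \delta'$,
$$\| g^{k'\to k}_h - T^*_h f^k_{h+1} \|^2_{\mu_h} \le 3 \| f^* - T^*_h f^k_{h+1}\|^2_{\mu_h} + \frac{40(H+1)^2 \log(2|\FF_{k'}|/\delta')}{n}.$$
Next I would bound $\| f^* - T^*_h f^k_{h+1}\|^2_{\mu_h} \le \appr(\FF_{k'})$: by nestedness and $k < k'$ we have $f^k_{h+1} \in \FF_k \subset \FF_{k'}$, so $f^k_{h+1}$ is an admissible choice of $f'$ in the maximum defining $\appr(\FF_{k'})$, and $\| f^* - T^*_h f^k_{h+1}\|^2_{\mu_h} = \min_{f \in \FF_{k'}}\| f - T^*_h f^k_{h+1}\|^2_{\mu_h}$ is dominated by that maximum. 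This yields the per-triple guarantee $3\appr(\FF_{k'}) + 40(H+1)^2\log(2|\FF_{k'}|/\delta')/n$.

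Finally I would take the union bound. There are at most $M^2 H$ triples $(k,k',h)$ with $k < k'$, so choosing the per-triple failure probability $\delta' = \delta/(4M^2 H)$ makes the total failure probability at most $\delta/4$ and turns $\log(2|\FF_{k'}|/\delta')$ into $\log(8 M^2 H |\FF_{k'}|/\delta)$, exactly matching the definition of $\EE_1$. The constant is then settled by the elementary estimate $40(H+1)^2 \le 160 H^2 \le 200 H^2$ (valid for $H \ge 1$, since $(3H+1)(H-1)\ge 0$), so the claimed value $C_1 = C = 200$ suffices. The main obstacle is not any single inequality but the bookkeeping around independence: one must be scrupulous that conditioning on $f^k_{h+1}$ preserves the i.i.d.\ structure of $D_{\text{train},h}$ that Lemma~\ref{lem::minimization} requires, which is exactly the reason the definition of a regular base algorithm builds in the per-step independence of $f_{h+1}$ from $D_h$.
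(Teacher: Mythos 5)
Your proposal is correct and follows essentially the same route as the paper's proof: apply Lemma~\ref{lem::minimization} conditionally on $f^k_{h+1}$ (using the independence of $f^k_{h+1}$ from $D_h$ built into Definition~\ref{def::base-alg}), drop the nonpositive empirical excess term for the minimizer, bound $\| f^* - T^*_h f^k_{h+1}\|^2_{\mu_h} \leq \appr(\FF_{k'})$ via nestedness, and union bound over the at most $M^2H$ triples with per-triple failure probability $\delta/(4M^2H)$. Your treatment is in fact slightly more explicit than the paper's (which compresses the conditioning/integration and the constant bookkeeping into a few lines), but the argument is the same.
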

	    
	    \begin{proof}
	        The proof follows by repeatedly applying Lemma~\ref{lem::minimization}. Note that $\hat f^k_{h + 1}$ is independent of the data $D_h$. Therefore, for any $h$ we may condition on $f^k_{h +1}$ and see that 
	        \begin{align} \label{eq::e1-intermediate}
	            \|g^{k' \to k}_h - T^* f^k_{h + 1} \|_{\mu_h}^2 \leq 3 \appr(\FF_{k'}) + {40 (H + 1)^2 \log (2 \abs{\FF}/\delta) \over n}
	        \end{align}
	        with probability at least $1 - \delta$. By this independence, integrating ensures that the above holds regardless of $f^k_{h + 1}$. Taking the union bound over all $h \in [H]$, all $k \in [M]$ and all $k' > k$, we get that \eqref{eq::e1-intermediate} holds for all with probability at least $1 - M^2 H \delta$. Changing variables to $\delta' = 4 M^2 H \delta$ completes the proof.
	    \end{proof}

	    \begin{proposition}\label{prop::event-base-alg}
	       $P(\EE_2) \geq 1 - { \delta \over 4}$.
	    \end{proposition}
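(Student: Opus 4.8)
The plan is to establish $P(\EE_2) \geq 1 - \delta/4$ by a direct appeal to the defining property of a $(\beta, \omega)$-regular base algorithm (Definition~\ref{def::base-alg}), followed by a union bound over the $M$ model classes. The key observation is that $\EE_2$ is nothing more than the conjunction, over all $k \in [M]$ and $h \in [H]$, of the per-step Bellman-error guarantee that the base algorithm is \emph{assumed} to satisfy, so no new probabilistic argument is needed beyond the assumption itself.

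First I would fix a single class index $k \in [M]$. In Algorithm~\ref{alg::theory-notation} the functions $f^k = (f^k_h)_{h \in [H]}$ are produced by the call $\BB(D_{\text{train}}, \FF_k, \delta/4M)$. By Definition~\ref{def::base-alg}, with probability at least $1 - \delta/4M$ we have $\max_{h \in [H]} \| f^k_h - T^*_h f^k_{h+1} \|_{\mu_h}^2 \leq \beta \cdot \appr(\FF_k) + \omega_{n, \delta/4M}(\FF_k)$. Since a bound on the maximum over $h$ immediately implies the same bound for each individual $h \in [H]$, this single high-probability event coincides exactly with the conjunction $\bigcap_{h \in [H]} \{ \| f^k_h - T^*_h f^k_{h+1} \|_{\mu_h}^2 \leq \beta \cdot \appr(\FF_k) + \omega_{n, \delta/4M}(\FF_k) \}$ that appears inside $\EE_2$ for that fixed $k$.

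Next I would take a union bound over $k \in [M]$. Each of the $M$ events fails with probability at most $\delta/4M$, so the probability that at least one of them fails is at most $M \cdot \delta/4M = \delta/4$. Intersecting the complements then yields $P(\EE_2) \geq 1 - \delta/4$, which is the claimed bound.

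There is essentially no genuine obstacle here: the statement is an immediate corollary of the base-algorithm assumption, and the only point requiring care is bookkeeping. Specifically, one must verify that the failure parameter handed to $\BB$ at each call is $\delta/4M$, so that the union bound over the $M$ classes respects the overall $\delta/4$ budget. I would also note in passing that the independence requirement built into Definition~\ref{def::base-alg} (that $f^k_{h+1}$ is independent of $D_h$) is inherited directly from the base algorithm and, while crucial for the event $\EE_1$ established in Proposition~\ref{prop::event-minimizer}, plays no role in the proof of this particular event.
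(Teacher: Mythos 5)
Your proof is correct and follows essentially the same route as the paper's: an application of Definition~\ref{def::base-alg} with failure parameter $\delta/4M$ per call, followed by a union bound over the $M$ classes (the paper phrases this as the change of variables $\delta' = 4M\delta$, which is the same bookkeeping). Nothing is missing; the remark that the per-$h$ independence condition is irrelevant here is accurate but inessential.
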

	    \begin{proof}
	        This follows immediately from Definition~\ref{def::base-alg} and a union bound and changing variables $\delta' = 4M \delta$.
	    \end{proof}
	    
	    \begin{proposition}\label{prop::event-validation}
	       $P(\EE_3 \cap \EE_4) \geq 1 - { \delta \over 2} $ with $C_3 = C_4 = 96$.
	    \end{proposition}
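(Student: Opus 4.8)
The plan is to prove both events by a direct application of Bernstein's inequality (Lemma~\ref{lem::bernstein}) to the validation samples, exploiting the independence of $D_{\text{valid}}$ from $D_{\text{train}}$. First I would condition on $D_{\text{train}}$. Since the base algorithm is run on $D_{\text{train}}$ and each minimizer $g^{k'\to k}_h$ is computed on $D_{\text{train}}$, every function appearing in $\EE_3$ and $\EE_4$ --- namely $f^k_h$, $g^{k'\to k}_h$, the regression target $f^k_{h+1}$, and hence $T^*_h f^k_{h+1}$ --- is a \emph{fixed} (non-random) function once we condition. Because the dataset is split randomly, $D_{\text{valid}}$ is independent of $D_{\text{train}}$, so the validation losses $\tilde L_h$ are averages of $m = \valid$ i.i.d.\ terms from $\mu_h$ evaluated at these fixed functions. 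This sample-splitting is exactly what lets us avoid a union bound over $\FF_{k'}$, and hence avoid any $\log|\FF_{k'}|$ factor: we only need the bound to hold for the single realized function, not uniformly over the class.

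Second, for a fixed $f$ and $g := f^k_{h+1}$, I would define $Z_i = \left(f(x_i,a_i) - r_i - g(x_i')\right)^2 - \left(T^*_h g(x_i,a_i) - r_i - g(x_i')\right)^2$ over validation samples. Exactly as in Lemma~\ref{lem::minimization} (now with $f^* = T^*_h g$, so the irreducible term vanishes), one has $\E[Z_i] = \|f - T^*_h g\|_{\mu_h}^2 = L_h(f,g) - L^*_h(g)$ and $\frac1m\sum_i Z_i = \tilde L_h(f,g) - \tilde L^*_h(g)$, together with $\var(Z_i) \le 4(H+1)^2 \|f - T^*_h g\|_{\mu_h}^2 = 4(H+1)^2\E[Z_i]$ and $|Z_i| \le 2(H+1)^2$. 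Applying the second form of Lemma~\ref{lem::bernstein} to the centered variables $Z_i - \E[Z_i]$ with $\eta = 4(H+1)^2$, so that the variance term $\frac{m\sigma^2}{2\eta}$ is absorbed into half of $\E[S]=m\E[Z_i]$, yields with probability at least $1-\delta'$ the two one-sided bounds
\begin{align*}
\tilde L_h(f,g) - \tilde L^*_h(g) &\le 2\big(L_h(f,g) - L^*_h(g)\big) + \frac{6(H+1)^2 \log(2/\delta')}{m}, \\
L_h(f,g) - L^*_h(g) &\le 2\big(\tilde L_h(f,g) - \tilde L^*_h(g)\big) + \frac{12(H+1)^2 \log(2/\delta')}{m}.
\end{align*}
Using $(H+1)^2 \le 4H^2$ (valid for $H \ge 1$), the slack terms are at most $\frac{24H^2\log(2/\delta')}{m}$ and $\frac{48H^2\log(2/\delta')}{m}$, comfortably inside the stated $\frac{C_3 H^2 \log(\cdot)}{m}$ and $\frac{C_4 H^2 \log(\cdot)}{m}$ with $C_3=C_4=96$.

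Third, I would union bound. The first display, instantiated at $f = g^{k'\to k}_h$ and at $f = f^k_h$, gives the two conjuncts of $\EE_3$; the second display gives the two conjuncts of $\EE_4$. The total number of $(k,k',h)$ and $(k,h)$ combinations across both events and both directions is at most $4M^2H$, so setting $\delta' = \delta/(8M^2H)$ makes $\log(2/\delta') = \log(16M^2H/\delta)$ (matching the logarithmic factor in the definitions of $\EE_3,\EE_4$) and bounds the total failure probability by $4M^2H\cdot\delta' = \delta/2$. Integrating the conditional guarantee over $D_{\text{train}}$ is valid because the per-function bound holds for \emph{every} fixing of that function, yielding $P(\EE_3 \cap \EE_4) \ge 1 - \delta/2$.

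The main obstacle --- the only genuinely delicate point --- is the bookkeeping around independence: one must verify that every function entering $\tilde L_h$ (including $T^*_h f^k_{h+1}$, which defines $\tilde L^*_h$) is measurable with respect to $D_{\text{train}}$ alone, so that conditioning turns the validation losses into clean i.i.d.\ averages and the single-function Bernstein bound suffices without any complexity-dependent union bound. Everything else is the same relative-deviation Bernstein calculation already carried out in Lemma~\ref{lem::minimization}, now applied symmetrically in both directions; the constant $96$ is simply a convenient upper bound on the $48$-type constants the calculation actually produces.
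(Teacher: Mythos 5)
Your proposal is correct and follows essentially the same route as the paper's own proof: the same loss-difference variables $Z_i$ relative to the Bayes-optimal regression function $T^*_h f^k_{h+1}$, the same variance bound $\var(Z_i) \lesssim (H+1)^2\,\E[Z_i]$, the same Bernstein-plus-AM-GM absorption of the variance term into half the expected excess loss, and the same union bound over tuples $(k,k',h)$ and $(k,h)$ with the $\delta$-reallocation producing the $\log(16M^2H/\delta)$ factor. Your explicit conditioning on $D_{\text{train}}$ (which the paper leaves implicit by fixing a tuple) and your remark that sample splitting is what removes any $\log|\FF_{k'}|$ dependence are accurate clarifications rather than deviations.
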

	    
	    \begin{proof}
	        Fix a single tuple $(k, k', h)$. For shorthand, let us define $f := f^k_{h + 1}$, $g := g^{k' \to k}_h$ and $g^* := T^*_h f^k_{h + 1}$. Then, similar to the proof of Lemma~\ref{lem::minimization},  we define $y_i = r_i + f(x_i')$ and $Z_i = (g(x_i, a_i) - y_i)^2 - (g^*(x_i, a_i) - y_i)^2$.
	        
	        Note that $\E_{\mu_h} [ Z_i   ] = L(g, f) - L^*(f) = \| g - T^*_h f \|_{\mu_h}^2$. Similarly, $\var(Z_i)  \leq \E [ Z_i^2 ]$ where
	        \begin{align*}
	             \E [ Z_i^2 ] & = \E \left( g(x_i, a_i) - g^*(x_i, a_i)  \right)^2 \left( g(x_i, a_i) + g^*(x_i, a_i) - 2y_i \right)^2 \\
	             & = 4 (H + 1)^2 \E \left( g(x_i, a_i) - g^*(x_i, a_i)  \right)^2 \\
	             & \leq 4 (H + 1)^2 \| g - g^* \|_{\mu_h}^2 \\
	             & = 4 ( H + 1)^2 \left( L_h(g, f) - L_h^*(f)\right)
	        \end{align*}
	        By Lemma~\ref{lem::bernstein}, we can guarantee that
	        \begin{align*}
	             | \left(\tilde L_h(g, f) - \tilde L_h(g^*, f) \right)- \left(L_h(g, f) - L^*_h(f) \right)  |  & \leq {4 ( H + 1)^2 \left( L_h(g, f) - L_h^*(f)\right) \over \eta }  \\
	             & \quad + {4(H + 1)^2 \log(2/\delta) \over m} + { \eta \log(2/\delta) \over m} \\
	             & = { \left( L_h(g, f) - L_h^*(f)\right) \over 2 } + {12(H + 1)^2 \log(2/\delta) \over m} 
	        \end{align*}
	        with probability at least $1 - \delta$.
	        Rearranging terms, we are able to conclude that 
	        \begin{align*}
	            \tilde L_h(g, f) - \tilde L_h(g^*, f) \leq { 3 \left( L_h(g, f) - L_h^*(f)\right) \over 2 } + {12(H + 1)^2 \log(2/\delta) \over m}  
	        \end{align*}
	        and, simultaneously,
	        \begin{align*}
	           L_h(g, f) - L^*_h(f)  & \leq  2 \left(\tilde L_h(g, f) - \tilde L_h(g^*, f) \right) + {24(H + 1)^2 \log(2/\delta) \over m}
	        \end{align*}
	        We may repeat the same calculation when setting $g = f^k_h$ for all $ k \in [M]$ and $h \in [H]$.
	        Taking the union bound over all $(k, k', h)$ and changing variables to $\delta' = 4(M^2 H  + MH) \delta$ gives the result.
	    \end{proof}

	    \begin{proof}[Proof of Theorem~\ref{thm::concentration}]
	    The result follows immediately by a union bound combining the events $\EE_1$, $\EE_2$, and $\EE_3$ and $\EE_4$, where it was shown that $P(\EE_1) \geq 1 - {\delta \over 4}$, $P(\EE_2) \geq 1 - {\delta \over 4}$ and $P(\EE_3 \cap \EE_4) \geq 1 - {\delta \over 2}$.
	    \end{proof}

\subsection{Proof of Theorem~\ref{thm::main}}

Armed with the concentration results of the previous section, we are ready to prove Theorem~\ref{thm::main}, which is restated here for clarity.

\thmmain*

\begin{proof}
Let us assume the event $\EE$ holds using the training dataset $D_{\text{train}}$ of $\train$ samples and validation dataset $D_{\text{valid}}$ of $\valid$ samples. Theorem~\ref{thm::concentration} shows that $P(\EE) \geq 1-  \delta$. Recall that the training set size is $\train$ and the validation set size is $\valid$.  As shorthand, Algorithm~\ref{alg::theory-notation} also defines the following quantities:
\begin{align*}
    \balpha_k & =  \max \left\{ \omega_{\train, \delta/4M}(\FF_k), { C_1 H^2 \log(8 M^2 H |\FF_{k} | /\delta ) \over \train  } \right\} \\
    \bzeta & = {  C_3 H^2 \log (16M^2H /\delta ) \over \valid } 
\end{align*}
where $C_1$ and $C_3$ are the constants from Propositions~\ref{prop::event-minimizer} and~\ref{prop::event-validation}.

Note that $\balpha_k$ is still monotonically non-decreasing in $k$ as both sequences that comprise it are monotonically non-decreasing. Recall the definition of $\tol_{\train}(\FF_k, \FF_{k'})$:
\begin{align*}
    \tol_{\train}(\FF_k, \FF_{k'}):= 2 \omega_{\train, \delta/4M}(\FF_k) + 2 \bzeta + \balpha_{k'}
\end{align*}
We will drop the subscript notation on $\omega$ and $\tol$ with the implicit understanding that $\omega(\cdot) = \omega_{\train, \delta/ 4M}(\cdot)$ and $\tol (\cdot, \cdot) = \tol_{\train}(\cdot, \cdot)$.

We will prove the oracle inequality of Theorem~\ref{thm::main} when $k_*$ exists (second claim of Theorem~\ref{thm::main}). Consider the following cases.
\begin{enumerate}
    \item Suppose that algorithm has currently reached $k = k_*$. We can guarantee that the generalization test in Line~\ref{line::gen-test2} will never fail in this situation, and, therefore, the algorithm will return $k = k_*$ which achieves the desired oracle inequality by definition. Note that by $\EE$, for all $h \in [H]$ and $k' > k$,
    \begin{align*}
          \tilde L_h(f^k_h, f^k_{h + 1})  & - \tilde L^*_h(f^k_{h + 1}) - \tol(\FF_k, \FF_{k'} )  \\
         & \leq
        2  L_h(f^k_h, f^k_{h + 1}) - 2 L^*_h(f^k_{h + 1})  + \bzeta  -  \tol(\FF_k, \FF_{k'}) \\
        & = 2  \| f_h^k - f_{h + 1}^k \|_{\mu_h}^2 + \bzeta  -  \tol(\FF_k, \FF_{k'}) \\
        & \leq 2 \omega (\FF_k) + \bzeta  -  \tol(\FF_k, \FF_{k'}) \\
        & =  -\balpha_{k'} - \bzeta \\
        & \leq - \bzeta
    \end{align*}
    where the second inequality has used $\EE_2$ along with the fact that $\appr(\FF_k) = 0$ in this case. Similarly, we have that
    \begin{align*}
    0  & \leq \frac{1}{2} \left(L_h(g^{k' \to k}_h, f^k_{h + 1}) - L_h^*( f^k_{h + 1}) \right) \\
    & \leq \tilde L_h(g^{k' \to k}_h, f^k_{h + 1}) - \tilde L_h^*( f^k_{h + 1}) + \bzeta
    \end{align*}
    The above inequalities imply that we will always find that
    \begin{align*}
    \tilde L_h(f^k_h, f^k_{h + 1})   - \tilde L^*_h(f^k_{h + 1}) - \tol(\FF_k, \FF_{k'} )   & \leq - \bzeta \\
    & \leq \tilde L_h(g^{k' \to k}_h, f^k_{h + 1}) - \tilde L_h^*( f^k_{h + 1})
    \end{align*}
    and therefore
    \begin{align*}
        \tilde L_h(f^k_h, f^k_{h + 1})- \tol(\FF_k, \FF_{k'} )   & \leq \tilde L_h(g^{k' \to k}_h, f^k_{h + 1})
    \end{align*}
    Therefore, the test will never fail when $k = k_*$ while $\EE$ holds.
    
    \item Now let us consider the case where Algorithm~\ref{alg::theory}
 returns $k < k_*$. In this case, the test succeeded for all $k' > k$ even though class $\FF_k$ has $\appr(\FF_K) \neq 0$. It remains to show that little is lost in this case even though there is approximation error in the returned class. Note that this implies that the test succeeded for $k' = k_*$. Therefore, we have 
 \begin{align*}
    \tilde L_h(f^k_h, f^k_{h + 1})   - \tol(\FF_k ,\FF_{k'}) \leq \tilde L_h(g^{k_* \to k}_h, f^k_{h + 1})
 \end{align*}
 for all $h \in [H]$. Then, event $\EE$ implies that
 \begin{align*}
    \| f_h^k - T^*_h f^k_{h + 1} \|_{\mu_h}^2 & = L_h(f_h^k, f^k_{h + 1} ) - L_h^*( f^k_{h + 1} ) \\
    & \leq 2 \left( \tilde L_h(f_h^k, f^k_{h + 1} ) - \tilde  L_h^*( f^k_{h + 1} ) \right) + \bzeta \\
    & \leq 2 \left( \tilde L_h(g^{k_* \to k}_h, f^k_{h + 1}) - \tilde  L_h^*( f^k_{h + 1} ) \right) + \bzeta + 2 \tol(\FF_k, \FF_{k_*})  \\
    & \leq 4 \left( L_h(g^{k_* \to k}_h, f^k_{h + 1}) -  L_h^*( f^k_{h + 1} ) \right) + 2 \bzeta + \bzeta + 2\tol(\FF_k, \FF_{k_*}) \\
    & \leq 8 \balpha_{k_*}  + 7 \bzeta + 2 \omega (\FF_k) \\
    & \leq 8 \balpha_{k_*}  + 7 \bzeta + 2 \omega (\FF_{k_*})
 \end{align*} 
 where the second to last line follows from applying $\EE_1$ along with the fact that $f_{h + 1}^k \in \FF_{k} \subset \FF_{k_*}$ and the last line uses the monotonicity property $\omega(\FF_k) \leq \omega(\FF_{k_*})$ since $k < k_*$ by assumption.
 
 \end{enumerate}

Since all the cases have been handled, we see that we are able to guarantee that, for all $h \in [H]$
\begin{align*}
    \| f_h^k - T^*_h f^k_{h + 1} \|_{\mu_h}^2 & \leq 8\balpha_{k_*}  + 7 \bzeta + 2 \omega (\FF_{k_*})
\end{align*}
Appealing to the performance difference lemma, the regret can be bounded as
\begin{align*}
    \regret(\hat \pi) & \leq 2 \sqrt{ \CC(\mu)  H \left( 8 \balpha_{k_*}  + 7 \bzeta + 2 \omega (\FF_{k_*}) \right)   } 
\end{align*}

This completes the proof of the second claim of Theorem~\ref{thm::main} when $k_*$ exists. 
\end{proof} 
\subsection{Proof of Theorem~\ref{thm::robust}}

\thmrobust*

\begin{proof}
Now consider the case where $k_*$ does not necessarily exist. This setting is slightly more challenging as we must tolerate the case where Algorithm~\ref{alg::theory} outputs $k$ that is too large; whereas, in the previous case, we showed that such an event could never occur. Let us denote $k^\dagger = \argmin_{k \in [M]}\left\{ \xi_k + \omega_{\train, \delta/4M}(\FF_k)   \right\}$. 

\begin{enumerate}
    \item If the algorithm returns $k = k^\dagger$, then we are done.
    
    \item Consider the case where $\FF_k$ is returned with  $ k < k^\dagger $. Then, since the test has succeeded with $k^\dagger$, we have that for all $h \in [H]$
    \begin{align*}
        \tilde L_h(f_h^k, f^k_{h + 1} ) - \tilde  L_h^*( f^k_{h + 1} ) & \leq \tilde L_h(g_{h}^{k^\dagger \to k}, f^k_{h + 1} ) - \tilde  L_h^*( f^k_{h + 1} ) + \tol(\FF_k, \FF_{k^\dagger}) \\
        & \leq 2 \left( L_h(g_{h}^{k^\dagger \to k}, f^k_{h + 1} ) -  L_h^*( f^k_{h + 1} )\right) + \bzeta  + \tol(\FF_k, \FF_{k^\dagger}) \\
        & \leq 2 \left(3 \appr(\FF_{k^\dagger}) + \delta_{k^\dagger} \right) + \bzeta + \tol(\FF_k, \FF_{k^\dagger})
    \end{align*}
    Furthermore, 
    \begin{align*}
          L_h(f_h^k, f^k_{h + 1} ) -   L_h^*( f^k_{h + 1} ) & \leq 2 \left( \tilde L_h(f_h^k, f^k_{h + 1} ) -   \tilde L_h^*( f^k_{h + 1} ) \right)  + \bzeta  \\
          & \leq 12 \appr(\FF_{k^\dagger})  + 4 \balpha_{k^\dagger}  + 3 \bzeta + 2\tol(\FF_k, \FF_{k^\dagger}) \\
          & \leq 12 \appr(\FF_{k^\dagger})  + 8 \balpha_{k^\dagger}  + 7 \bzeta + 2 \omega(\FF_k) \\
          & \leq 12 \appr(\FF_{k^\dagger})  + 8 \balpha_{k^\dagger}  + 7 \bzeta + 2 \omega(\FF_{k^\dagger})
    \end{align*}
    \item Finally, we consider the last case where $\FF_{k}$ is returned for $k > k^\dagger$. This implies that for $i = k - 1$ there is some $j \in [k, M]$ and $h \in [H]$ such that the test failed. That is,
    \begin{align*}
        \tilde L_h(g_h^{j\to i}, f^i_{h + 1} ) -  \tilde  L_h^*( f^i_{h + 1} ) & \leq \tilde L_h(f_h^{ i}, f^i_{h + 1} ) -    \tilde L_h^*( f^i_{h + 1} ) - \tol(\FF_{i}, \FF_{j}) \\
        & \leq 2 \left( L_h(f_h^{ i}, f^i_{h + 1} ) -     L_h^*( f^i_{h + 1} )\right)  + \bzeta - \tol(\FF_{i}, \FF_{j}) \\
         & \leq 2 \left( \beta \cdot \appr(\FF_i) + \omega(\FF_i) \right)  + \bzeta - \tol(\FF_{i}, \FF_{j})  
    \end{align*}
    where the last line uses event $\EE_2$ from the base algorithm guarantee.
    Further lower bounding the left side, we get that
    \begin{align}\label{eq::theorem1proof-eq1}
    \begin{split}
        0  & \leq \frac{1}{2} \left(  L_h(g_h^{j\to i}, f^i_{h + 1} ) -    L_h^*( f^i_{h + 1} ) \right) \\
        & \leq \tilde L_h(g_h^{j\to i}, f^i_{h + 1} ) -   \tilde  L_h^*( f^i_{h + 1} ) + \bzeta \\
        & \leq 2 \left( \beta \cdot \appr(\FF_i) + \omega(\FF_i) \right)  + 2\bzeta - \tol(\FF_{i}, \FF_{j})
        \end{split}
        \end{align}
        Plugging in our value for $\tol_{\train}(\FF_{i}, \FF_{j})$ and rearranging, we are able to conclude from \eqref{eq::theorem1proof-eq1} that
        \begin{align*}
            \delta_j \leq 2 \beta  \appr(\FF_i) \leq 2 \beta \xi_i
        \end{align*}
        Therefore, $\balpha_k \leq \balpha_j \leq 2 L \xi_i \leq 2 L \xi_{k^\dagger}$ by the monotone property of both sequences $(\balpha_k)$ and $(\xi_k)$. Finally using $\EE_2$ again, this implies that for all $h \in [H]$
        \begin{align*}
            \| f^k_h - T^*_h f^k_{h + 1} \|_{\mu_h}^2  & \leq L \appr(\FF_{k}) + \omega(\FF_k) \\
            & \leq \beta \cdot  \appr(\FF_{k}) + \delta_k  \\
            & \leq \beta \cdot  \appr(\FF_{k}) + 2 \beta \cdot  \xi_{k^\dagger} \\
            & \leq 3 \beta \cdot  \xi_{k^\dagger} 
        \end{align*}
\end{enumerate}

Observing the bounds from both cases, we are then able to conclude that for whatever $k$ is returned by Algorithm~\ref{alg::theory}, we have the bound
\begin{align*}
    \max_{h \in [H]}  \| f^k_h - T^*_h f^k_{h + 1} \|_{\mu_h}^2 \leq 12  \left( \beta \xi_{k^\dagger} +  \balpha_{k^\dagger} + \bzeta + \omega(\FF_{k^\dagger}) \right) 
\end{align*}
Again, the performance difference lemma ensures that
\begin{align*}
    \regret(\hat \pi) & \leq 2 \sqrt{   12\CC(\mu) H    \left( \beta \xi_{k^\dagger} +  \balpha_{k^\dagger} + \bzeta + \omega(\FF_{k^\dagger}) \right) }.
\end{align*}

We finally conclude by using the fact that $\balpha_k \lesssim \omega(\FF_{k}) + {H^2 \log (M^2 H |\FF_k| /\delta) \over \train}$ for all $k \in [M]$ and that $\train$ and $\valid$ are constant fractions of $n$.

\end{proof}

\section{FQI Algorithm and Guarantees}\label{app::fqi}

\begin{figure}
	\begin{algorithm}[H]
		\caption{ Fitted Q-Iteration}\label{alg::fqi}
		\begin{algorithmic}[1]
			
			\STATE \textbf{Input}: Offline dataset $D = (D_h)$ of $n$ samples for each $h \in [H]$ and model class $\FF$
			\STATE Initialize $f_{H +1} = 0  \in \FF$
			\FOR{$h = H, \ldots, 1$}
			    \STATE 
			    \begin{align*}
			        f_{h} \leftarrow \argmin_{f \in \FF} { 1\over n} \sum_{(x,a, r, x') \in D_h }  \left( f(x, a)  - r - f_{h + 1} (x') \right)^2  
			    \end{align*}
			\ENDFOR
			\RETURN $(f_h)_{h \in [H]}$
			\end{algorithmic}
		
	\end{algorithm}

\end{figure}

Here we state and then prove a more detailed version of the FQI guarantee that was originally stated in Lemma~\ref{lem::fqi}.

\begin{restatable}{lemma}{lemfqidetails}
\label{lem::fqidetails}
	Consider the FQI algorithm (stated in Appendix~\ref{app::fqi} for completeness). For a model class $\FF$, FQI is a $(3, \omega)$-regular base algorithm with 
	$
	\omega_{n, \delta}(\FF) = {200 H^2 \log (16 H|\FF| /\delta ) \over n }
	.$
	
\end{restatable}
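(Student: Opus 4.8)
The plan is to verify the two requirements of Definition~\ref{def::base-alg} with $\beta = 3$ and $\omega_{n,\delta}(\FF) = 200 H^2 \log(16 H |\FF|/\delta)/n$. The monotonicity requirement is immediate: $\omega_{n,\delta}(\FF_k) \le \omega_{n,\delta}(\FF_{k'})$ whenever $\FF_k \subset \FF_{k'}$ since $|\FF_k| \le |\FF_{k'}|$ and $\log$ is increasing. The real content is the high-probability Bellman-error bound, which I would obtain by applying the single-step generalization guarantee of Lemma~\ref{lem::minimization} (precisely, the empirical-minimizer corollary stated immediately after its proof) at each horizon step and then taking a union bound.

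First I would exploit the recursive structure of FQI (Algorithm~\ref{alg::fqi}), which proceeds backward from $h = H$ to $h = 1$ with $f_{H+1} = 0$. By construction $f_{h+1}$ is a deterministic function of the later-stage datasets $D_{h+1}, \ldots, D_H$ only, so under the assumption that the data are independent across timesteps, $f_{h+1}$ is independent of $D_h$; this immediately discharges the independence clause of Definition~\ref{def::base-alg}. For the error bound at a fixed $h$, I would condition on the realization $f_{h+1} = g$ with $g \in \FF$, and apply the corollary to Lemma~\ref{lem::minimization} with regression target $g$, class $\GG = \{g\} \subset \FF$, and $f = f_h$ the empirical squared-loss minimizer. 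Because $f_h$ minimizes $\hat L_h(\cdot, g)$, the term $\tfrac{2}{n}\sum_i Z_i^{f_h}$ is nonpositive and drops out, yielding, with probability at least $1 - \delta'$ over $D_h$,
\begin{align*}
\| f_h - T^*_h f_{h+1} \|_{\mu_h}^2 \le 3 \min_{f \in \FF} \| f - T^*_h f_{h+1} \|_{\mu_h}^2 + \frac{40 (H+1)^2 \log(2|\FF|/\delta')}{n} \le 3 \appr(\FF) + \frac{40 (H+1)^2 \log(2|\FF|/\delta')}{n},
\end{align*}
where the last step uses $f_{h+1} \in \FF$ together with the definition of $\appr(\FF)$. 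Since this conditional bound holds for every fixed realization $g$, integrating over the (independent) distribution of $f_{h+1}$ removes the conditioning while preserving the probability, exactly as in the proof of Proposition~\ref{prop::event-minimizer}.

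To finish, I would set $\delta' = \delta/H$ and union bound over $h \in [H]$, so that the bound above holds simultaneously for all $h$ with probability at least $1 - \delta$, giving $\max_{h \in [H]} \| f_h - T^*_h f_{h+1} \|_{\mu_h}^2 \le 3 \appr(\FF) + 40 (H+1)^2 \log(2 H |\FF|/\delta)/n$. The stated form then follows from the crude constant simplifications $(H+1)^2 \le 4 H^2$ for $H \ge 1$ and $2 \le 16$, which give $40(H+1)^2 \log(2H|\FF|/\delta) \le 160 H^2 \log(16 H |\FF|/\delta) \le 200 H^2 \log(16 H |\FF|/\delta)$, i.e. the estimation term is at most $\omega_{n,\delta}(\FF)$. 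This establishes $(3,\omega)$-regularity. The only genuinely delicate step is the conditioning/integration argument that lets Lemma~\ref{lem::minimization}, stated for a \emph{fixed} target $g$, be applied to the \emph{random} iterate $f_{h+1}$; everything else is constant bookkeeping and a routine union bound.
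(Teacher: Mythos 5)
Your proposal is correct and follows essentially the same route as the paper's own proof: apply the empirical-minimizer corollary of Lemma~\ref{lem::minimization} conditionally on $f_{h+1}$ (using the backward-recursive structure of FQI and independence across timesteps), integrate out the conditioning, and union bound over $h \in [H]$ with a change of variables in $\delta$. Your treatment is in fact slightly more complete than the paper's, since you also verify the monotonicity clause of Definition~\ref{def::base-alg} and carry out the constant bookkeeping $(H+1)^2 \le 4H^2$ explicitly.
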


\begin{proof}
    This result can be obtained almost immediately from Lemma~\ref{lem::minimization} in the case where the model classes are the same. Observe that $D_h$ is independent of $f_{h + 1}$. Therefore, conditioned on $f_{h + 1}$, we have that
    \begin{align*}
        \| f_h - T^*_h f_{h + 1} \|_{\mu_h}^2  \leq 3 \appr(\FF) + { 40 ( H + 1)^2 \log (2 | \FF  |/\delta) \over n  }
    \end{align*}
    with probability at least $1-\delta$ since $f_h$ is the empirical minimizer. Integrating out the conditioning, taking the union bound over $h \in [H]$, and changing variables to $\delta' = H \delta$  yields the result.
\end{proof}

We may now apply this result to immediately Corollaries~\ref{cor::fqi} and~\ref{cor::fqi-robust}.

\corfqi*

\begin{proof}[Proof of Corollaries~\ref{cor::fqi} and~\ref{cor::fqi-robust}]
We start with Corollary~\ref{cor::fqi-robust}
    Recall that Theorem~\ref{thm::robust} ensures that for an $(\beta, \omega)$-regular algorithm in the case where $k_*$ does not exist, we have
    \begin{align}
	\regret(\hat \pi) \leq C_0 \cdot \min_{k \in [M] } \left\{    \sqrt{ \CC(\mu)  H \left(  \beta \xi_k +   \omega_{\train, \delta/4M} (\FF_k)  + { H^2(\log |\FF_k|  + \iota  ) \over n }\right)   } \right\} 
	\end{align}
	with probability at least $1 - \delta$ for some absolute constant $C_0 > 0$ and $\iota  = \log(M^2 H /\delta)$.
	
	Using Lemma~\ref{lem::fqi}, we may substitute in the values of $\omega$ and $L = 3$ to achieve
	\begin{align*}
	    \regret(\hat \pi)  & \leq C_0 \cdot \min_{k \in [M] } \left\{    \sqrt{ \CC(\mu)  H \left( 3 \xi_k +   {  200H^2 \log(64M |\FF_k| / \delta) \over \train  }   + { H^2(\log |\FF_k|  + \iota  ) \over n }\right)   } \right\} \\
	    & \leq C_0' \cdot \min_{k \in [M] } \left\{    \sqrt{ \CC(\mu)  H \left( \xi_k +   {  H^2 \log(M |\FF_k| / \delta) \over n  }   + { H^2(\log |\FF_k|  + \iota  ) \over n }\right)   } \right\} \\
	    & \leq C_0'' \cdot \min_{k \in [M] } \left\{  \sqrt{ \CC(\mu)  H  \xi_k  } + \sqrt{ \CC(\mu) H^3 \left(\log| \FF_{k} | + \iota \right) \over n  }   \right\}
	\end{align*}
	where $C_0', C_0'' > 0$ are absolute constants. In the second line, we have used the fact $\train$ and $\valid$ are constant fractions of $n$. In the third line, we have used $\sqrt{ a + b} \leq \sqrt{ a} + \sqrt{b}$ for $a, b\geq 0$.
	
	For Corollary~\ref{cor::fqi},
	 when $k_*$ exists, the proof is essentially identical, except that we use Theorem~\ref{thm::main}:
	\begin{align*}
	\regret(\hat \pi) & \leq  C_1 \cdot  \sqrt{  \CC(\mu) H \left( \omega_{\train, \delta /4M} (\FF_{k_*})  + {H^2(\log |\FF_{k_*}|  + \iota  ) \over n} \right) }   \\
	 & \leq C_1' \cdot \sqrt{  \CC(\mu) H \left( { 200 H^2 \log(64 HM | \FF_{k_*} | /\delta ) \over \train }  + {H^2(\log |\FF_{k_*}|  + \iota  ) \over n} \right) }  \\
	 & \leq C_1'' \cdot \sqrt{  \CC(\mu) H^3 \left(\log | \FF_{k_*} | + \iota \right)   \over n  }
	\end{align*}
	where $C_1, C_1', C_1'' > 0$ are all absolute constants.
\end{proof}

	\section{Experiment Details} \label{app::exp}

	\subsection{Practical Implementation of \alg{} for the RL Setting}
	
	\alg{}, as stated in Algorithm~\ref{alg::theory}, is originally designed for the finite horizon case in which there are $H$ functions comprising the value function approximators. For the contextual bandit setting (where $H = 1$), we make no modifications.
	In an effort to further increase the computational and statistical efficiency of \alg{} in the RL setting (as well as to demonstrate that its primary principles are fairly robust), we opted for a discounted infinite horizon implementation with discount factor $\gamma = 0.99$ (default for \textsf{d3rlpy}).
	
	We use a single fixed dataset $D$ (not split into timesteps) and fed this to the Deep Q-Network (DQN) implementation of \cite{seno2021d3rlpy} using all the default hyperparameters except for the network architecture, which was specific to each model class as described in Section~\ref{sec:experiments}. For consistency, we set the number of epochs to 20 across all model classes and experiments for DQN. This generates value function approximators $f^1, \ldots, f^M$. To implement a close approximation of Algorithm~\ref{alg::theory} in discounted case, considered the following procedure. While the algorithm is on model class $k$,  we compute empirical risk minimizers for $k' \geq k$ so that
	\begin{align*}
	    g^{k'} \leftarrow \argmin_{g \in \FF_{k'}} \  {1 \over \train} \sum_{(x, a, r, x') \in D_{\text{train}} } ( g(x, a) - r - \gamma f^k(x'))^2
	\end{align*}
	We then decide whether to switch to $k + 1$ by using the generalization test:
	\begin{align*}
	    \tilde L(g^{k'}, f^k) \geq \tilde L(g^{k}, f^k) - \tol_{\train} ( \FF_k, \FF_{k'}) 
	\end{align*}
    where the functional $\tilde L(\cdot, f^k)$ is the estimated loss on the validation data, as before:
    \begin{align*}
        \tilde L(g, f^k)  = {1 \over \valid} \sum_{(x, a, r, x') \in D_{\text{valid}} } ( g(x, a) - r - \gamma f^k(x'))^2
    \end{align*}
    As noted in Section~\ref{sec:experiments}, we did not find it necessary to tune the any parameters related to $\tol_{\train}(\FF_{k}, \FF_{k'})$ and simply set it to ${d_{k'} \over n}$ where $d_k$ is the dimension of the linar model (for the contextual bandit setting) or the number of hidden nodes in the neural network (for the RL settings), which roughly (up to constants and logarithmic factors) matches known bounds on the pseudo-dimension \citep{bartlett2019nearly}. The lack of necessity to actually make $\tol$ theoretically valid is actually a positive of the algorithm: it shows it is fairly robust in practice and simply matching the order appears to be good enough to generate the current results.  
        To fit the empirical risk minimizers in the CB setting, we simply used ridge regression as in \cite{lee2021model}. To do the same in the RL setting, we trained neural networks of with the same architectures as the DQNs in \textsf{d3rlpy} (state inputs and one output per action to predict the value). We used an Adam optimizer with on 10 epochs with a learning rate of 4e-3 and a batch size of 64. This was implemented through PyTorch~\citep{paszke2019pytorch}.

    One might ask whether it is possible to extend this beyond neural networks with one hidden layer. In practice one can easily use any model, but, in theory, some care may need to be taken in order to set the value of $\tol_{\train}$. For example, to handle more hidden layers, we can appeal to generalized pseudo-dimension bounds~\citep{bartlett2019nearly}. As observed in the current experiments, the setting of $\tol_{\train}$ to rough estimates does not seem to make a huge impact on the results.

    \paragraph{Hold-out baseline} The hold-out method as a model selection baseline was implemented by choosing $k$ that minimizes $\tilde L(f^k, f^k)$ in the RL setting. In the contextual bandit setting it is equivalent to selecting $k$ to minimize $\tilde L(f^k, 0)$, since there is only one step.

	\subsection{Experimental Setups}

	We now describe the specific experimental setup so that it may be reproduced. In order to generate the plots which vary based on the sample size of $D$, we simply curtailed the dataset to the given amount of samples shown on the $x$-axis. Generation of the datasets varied in each domain. It would be interesting in the future to evaluate performance on more stochastic RL environments (the CB evnironment is stochastic) as these are ones we expect toe Hold-out method to do very poorly on. Despite this, our current experiments show it is already sub-optimal even in deterministic settings.

	\paragraph{Contextual Bandit} We replicated almost exactly the study of \cite{lee2021model}. To recap their study, there is a linear contextual bandit with $|\AA| = 10$ and an infinite state space where the linear feature vectors of ambient dimension $d = 200$ for each action are generated by sampling from normal distributions with different covariance matrices. The reward function is generated by taking the inner product of $\theta_*$ with feature vector for action $a \in \AA$. To make this an interesting model selection problem, only the first $d_* = 30$ coordinates are non-zero (although this is not known to the learner) and thus a model class using only the first $d_*$ coordinates is sufficient to solve the problem without any approximation error. The individual model classes were generated by simply truncating the coordinates of the feature vectors to the following sizes $\{ 15, 20, 25, 28, 29, 30, 50, 75, 100, 200\}$. The base algorithm was Algorithm~1 of \cite{lee2021model}.
	
	One difference is that we included several additional model classes to the $d_* = 30$ model that are close enough to \textit{fool} the SLOPE algorithm used in \cite{lee2021model}. This also involved increasing the ambient dimension from $d = 100$ to $d = 200$, but we kept $d_* = 30$. We suspect that this poor performance of SLOPE is due to the fact that SLOPE is heavily dependent on the known deviation bounds whereas \alg{} seems to be comparatively robust. The results of SLOPE seem to be poor whenever the deviation bounds are invalid or too conservative.

	\paragraph{CartPole}
	We used the default dataset from \textsf{d3rlpy} \citep{seno2021d3rlpy} which contains approximately 1500 episodes of a good (but not optimal) behavior policy on the CartPole domain. Everything else remains the same as the standard CartPole environment in Gym \citep{brockman2016openai}. 
	
	\paragraph{MountainCar}
	Since no default dataset for MountainCar is provided in \textsf{d3rlpy}, we generated our own through the following procedure. First, we trained a policy online via SARSA on the discretized environment to achieve good performance on the task. We then collected the offline policy by executing 1000 episodes under the good policy which also took a random action at any time step with probability $0.3$ to induce some coverage on the dataset. To simplify the problem for the base DQN algorithm, we also replaced the sparse reward in the offline dataset with a more dense and informative reward function, giving bonuses for high speeds, proximity to the goal, and achieving the goal. We note that this change is done only to simplify the problem and help the base algorithm solve the task with limited computational resources and tuning so as to increase reproducibility. Everything else remains the same as the standard MountainCar environment in Gym \citep{brockman2016openai}.

	\subsection{Hardware}
	
	Contextual bandit experiments were run on a standard personal laptop with 16 GB of memory and an Intel Core i7 processor.
	RL experiments were run on an internal cluster with 16 GB of memory and an NVIDIA GTX 1080 Ti GPU for PyTorch \citep{paszke2019pytorch}.

\end{document}